\newtheorem{theorem}{Theorem}[section]
\newtheorem{lemma}[theorem]{Lemma}
\newtheorem{definition}{Definition}[section]
\newcommand{\floline}{\vskip\smallskipamount %
\leaders\vrule width \textwidth\vskip1pt %
\vskip\smallskipamount %
\nointerlineskip}
\def\eqref#1{equation~\ref{#1}}
\def\1{\bm{1}}
\DeclareMathAlphabet{\mathsfit}{\encodingdefault}{\sfdefault}{m}{sl}
\SetMathAlphabet{\mathsfit}{bold}{\encodingdefault}{\sfdefault}{bx}{n}
\title{Latent Variable Sequential Set Transformers for Joint Multi-Agent Motion Prediction}
\newcommand{\ourmodelNS}{AutoBots}
\newcommand{\ourmodel}{AutoBots }
\newcommand{\ourmodelmaNS}{AutoBot}
\newcommand{\ourmodelma}{AutoBot }
\newcommand{\ourmodelego}{AutoBot-Ego }
\newcommand{\ourmodelegoNS}{AutoBot-Ego}
\newcommand\omniwid{.6}  
\newcommand\omnivspace{1pt} 
\author{  \textbf{
  Roger Girgis$^{1, 2}$
  Florian Golemo$^{2, 3}$ 
  Felipe Codevilla$^{2, 4}$ 
  Martin Weiss$^{1, 2}$ 
  } \\
  
  \textbf{
  Jim Aldon D'Souza$^{5}$ 
  Samira E. Kahou$^{2, 6, 7, 8}$ 
  Felix Heide$^{5, 9}$
  Christopher Pal$^{1, 2, 3, 8}$
  }\\
  \And   \vspace{-20pt}\\
  
  \footnotesize{$^{1}$Polytechnique Montréal,
                $^{2}$Mila, Quebec AI Institute
                $^{3}$ElementAI / Service Now,} \\
   
  \footnotesize{$^{4}$Independent Robotics, 
                $^{5}$Algolux,
                $^{6}$McGill University,
                $^{7}$\'Ecole de technologie sup\'erieure,} \\

  \footnotesize{$^{8}$Canada CIFAR AI Chair,
                $^{9}$Princeton University. Correspondence: 
                \href{mailto:roger.girgis@gmail.com}{roger.girgis@gmail.com}} \\
  \footnotesize{\textbf{\url{https://fgolemo.github.io/autobots/}}
  }

}
\begin{document}

\maketitle

\begin{abstract}
Robust multi-agent trajectory prediction is essential for the safe control of robotic systems.
A major challenge is to efficiently learn a representation that approximates the true joint distribution of contextual, social, and temporal information to enable planning.
We propose Latent Variable Sequential Set Transformers which are encoder-decoder architectures that generate scene-consistent multi-agent trajectories. 
We refer to these architectures as ``AutoBots''. 
The encoder is a stack of interleaved temporal and social multi-head self-attention (MHSA) modules which alternately perform equivariant processing across the temporal and social dimensions. 
The decoder employs learnable seed parameters in combination with temporal and social MHSA modules allowing it to perform inference over the entire future scene in a single forward pass efficiently.
AutoBots can produce either the trajectory of one ego-agent or a distribution over the future trajectories for all agents in the scene. 
For the single-agent prediction case, our model achieves top results on the global nuScenes vehicle motion prediction leaderboard, and produces strong results on the Argoverse vehicle prediction challenge.
In the multi-agent setting, we evaluate on the synthetic partition of TrajNet$++$ dataset to showcase the model's socially-consistent predictions.
We also demonstrate our model on general sequences of sets and provide illustrative experiments modelling the sequential structure of the multiple strokes that make up symbols in the Omniglot data. 
A distinguishing feature of AutoBots is that all models are trainable on a single desktop GPU (1080 Ti) in under 48h.
\end{abstract}

\vspace{-2mm}
\section{Introduction}
\label{sec:intro}
\vspace{-1mm}


Many problems require processing complicated hierarchical compositions of sequences and sets. For example, multiple choice questions are commonly presented as a query (e.g. natural language) and an unordered set of options \citep{areyousmarterthan2017,richardson-etal-2013-mctest}. Machine learning models that perform well in this setting should be insensitive to the order in which the options are presented. Similarly, for motion prediction tasks \citep{DBLP:journals/corr/abs-2010-12007,sadeghiankosaraju2018trajnet} where the inputs are agent trajectories evolving over time and outputs are future agent trajectories, models should be insensitive to agent ordering.

In this work, we focus on the generative modelling of sequences and sets and demonstrate this method on a variety of different motion forecasting tasks. Suppose we have a set of $M\in\mathbb{N}$ sequences $\mathbb{X} = \{ (x_1, \ldots,  x_K)_{1}, \ldots, (x_1, \ldots, x_K)_{M}\}$ each with $K\in\mathbb{N}$ elements. Allowing also that $\mathbb{X}$ evolves across some time horizon $T$, we denote this sequence of sets as $\mathbf{X} = (\mathbb{X}_1, \ldots, \mathbb{X}_T)$.

This problem setting often requires the model to capture high-order interactions and diverse futures. Generally, solutions are built with auto-regressive sequence models which include $t$ steps of context. In our approach, to better model the multi-modal distribution over possible futures, we also allow to include a discrete latent variable $Z$, to create sequential models of sets of the form
\begin{equation*}
P(\mathbb{X}_{t+1} | \mathbb{X}_{t},\ldots, \mathbb{X}_{1})=\sum_{Z} P(\mathbb{X}_{t+1}, Z | \mathbb{X}_{t},\ldots, \mathbb{X}_{1}).
\end{equation*}
In this work, we propose this type of parameterized conditional mixture model over sequences of sets, and call this model a Latent Variable Sequential Set Transformer (affectionately referred to as ``AutoBot''). We evaluate AutoBot on \textit{nuScenes} \citep{Caesar2020nuScenesAM} and \textit{Argoverse}~\citep{Chang2019Argoverse3T}, two autonomous driving trajectory prediction benchmarks, on the synthetic partition of the  \textit{TrajNet$++$}~\citep{kothari2021human} dataset, a pedestrian trajectory forecasting benchmark, and \textit{Omniglot}~\citep{lake2015human}, a dataset of hand-drawn characters that we use for predicting strokes.
All of these provide empirical validation of our model's ability to perform robust multi-modal prediction.
Specifically, we make the following contributions: 

\begin{itemize}
    \vspace{-.2cm}
    


    \item We propose a novel approach for modelling sequences of set-structured continuous variables, and extend this to a latent-variable formulation to capture multi-modal distributions. 
    
    \item We provide a theoretical analysis highlighting our model's permutation sensitivity with respect to different components of the input data structures.

    \item We validate our method with strong empirical results on diverse tasks, including the large-scale \textit{nuScenes} and \textit{Argoverse} datasets for autonomous driving~\citep{Caesar2020nuScenesAM,Chang2019Argoverse3T}, the multi-lingual \textit{Omniglot} dataset of handwritten characters ~\citep{lake2015human}, and the synthetic partition of the \textit{TrajNet$++$} pedestrian trajectory prediction task \citep{sadeghiankosaraju2018trajnet}.


\end{itemize}

\sidecaptionvpos{figure}{c}
\begin{SCfigure}[50]
    \includegraphics[width=0.5\textwidth]{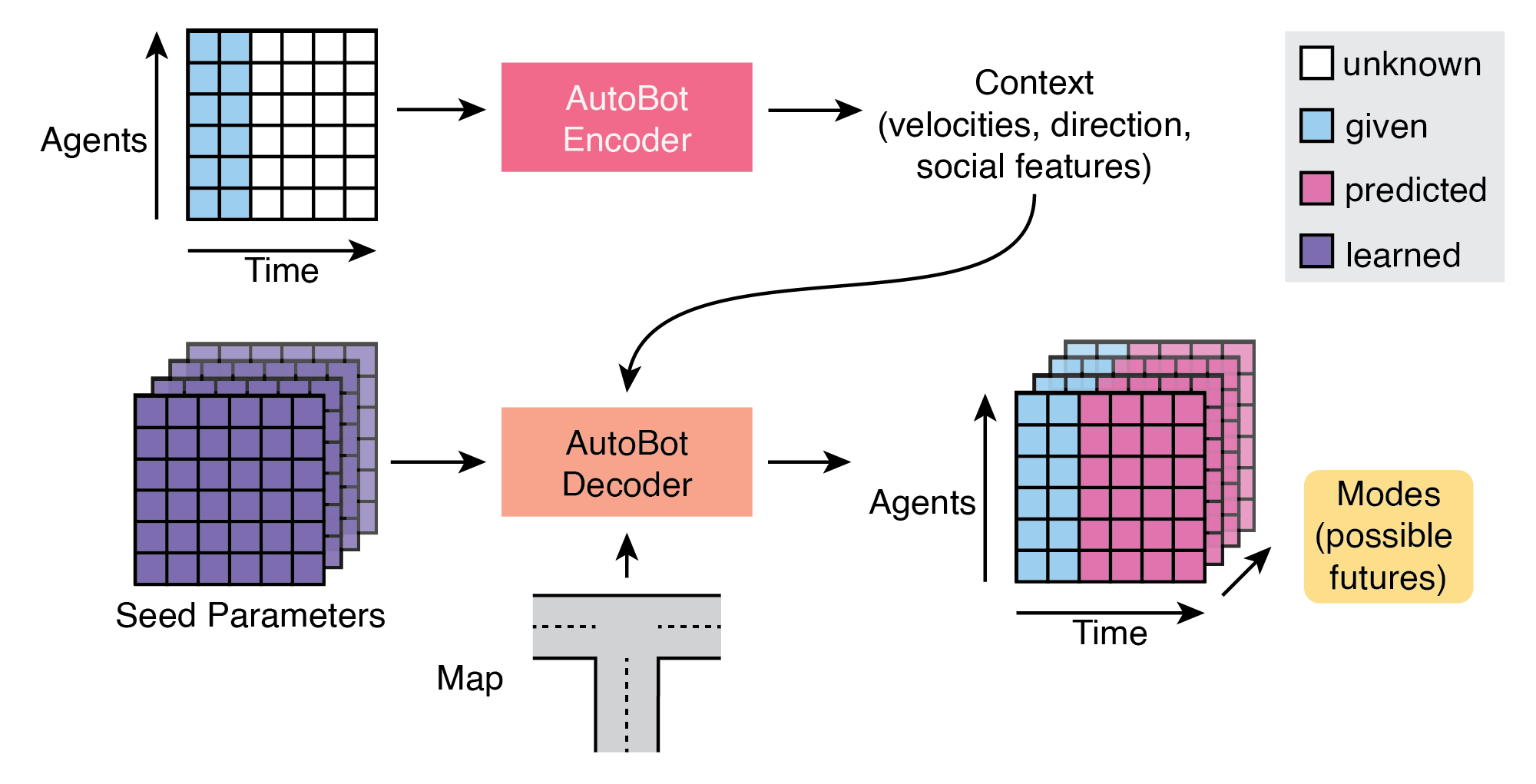}
    \label{fig:overview}
  \caption{\small \textbf{Overview:} Our model generates multiple possible futures for agents based on several input timesteps. The input trajectories are encoded into a context tensor that captures their respective behavior and interaction with surrounding agents. While Transformer-based architectures are often autoregressive, we decode all future steps at once. This is accomplished via learnable seed parameters. Our decoder transforms these, together with the map and the context into the possible future trajectories that we call ``modes''.}
\end{SCfigure}

\vspace{-2mm}
\section{Background}
\label{sec:background}
\vspace{-1mm}

We now review several components of the Transformer \citep{Vaswani2017AttentionIA} and Set Transformer \citep{lee2019set} architectures, mostly following the notation found in their manuscripts. For additional background, see their works and Appendix \ref{appdx:background}.  


\textbf{Multi-Head Self Attention (MHSA)} can be thought of as an information retrieval system, where a query is executed against a key-value database, returning values where the key matches the query best. While \citep{Vaswani2017AttentionIA} defines MHSA on three tensors, for convenience, we input a single set-valued argument. 
Internally, MHSA then performs intra-set attention by casting the input set $\mathbb{X}$ to query, key and value matrices and adding a residual connection, MHSA$(\mathbb{X})=\mathbf{X}+$MHSA$(\mathbf{X}, \mathbf{X}, \mathbf{X})$.

\textbf{Multi-Head Attention Blocks (MAB)} resemble the encoder proposed in \citet{Vaswani2017AttentionIA}, but lack the position encoding and dropout \citep{lee2019set}. Specifically, they consist of the MHSA operation described in Eq. \ref{eq:MHSA} (appendix) followed by a row-wise feed-forward neural network (rFFN), with residual connections and layer normalization (LN) \citep{ba2016layer} after each block.
Given an input set $\mathbb{X}$ containing $n_x$ elements of dimension $d$, and some conditioning variables $\mathbb{C}$ containing $n_c$ elements of dimension $d$, the MAB can be described by the forward computation,
\begin{equation}
    \label{eq:MAB}
    \begin{split}
        \text{MAB}(\mathbf{X}) = \text{LN}(\mathbf{H} + \text{rFFN}(\mathbf{H})),\quad
        \text{where}\quad \mathbf{H} = \text{LN}(\mathbf{X} + \text{MHSA}(\mathbf{X}, \mathbf{X}, \mathbf{X})).
    \end{split}
\end{equation}

\textbf{Multi-head Attention Block Decoders (MABD)} were also introduced in \citet{Vaswani2017AttentionIA}, and were used to produce decoded sequences.
Given input matrices $\mathbf{X}$ and $\mathbf{Y}$ representing sequences, the decoder block performs the following computations:
\begin{equation}
    \label{eq:MABD}
    \begin{split}
        \text{MABD}(\mathbf{Y}, \mathbf{X}) = \text{LN}(\mathbf{H} + \text{rFFN}(\mathbf{H})) \\
        \text{where}\quad \mathbf{H} = \text{LN}(\mathbf{H}' + \text{MHSA}(\mathbf{H}', \mathbf{X}, \mathbf{X})) \\
        \text{and}\quad \mathbf{H}' = \text{LN}(\text{MHSA}(\mathbf{Y}))
    \end{split}
\end{equation}

Our model, described throughout the following section, makes extensive use of these functions.

\vspace{-2mm}
\section{Latent Variable Sequential Set Transformers}
\label{sec:method}
\vspace{-1mm}

\begin{figure}[tb]
\begin{center}
\centerline{\includegraphics[width=.99\linewidth]{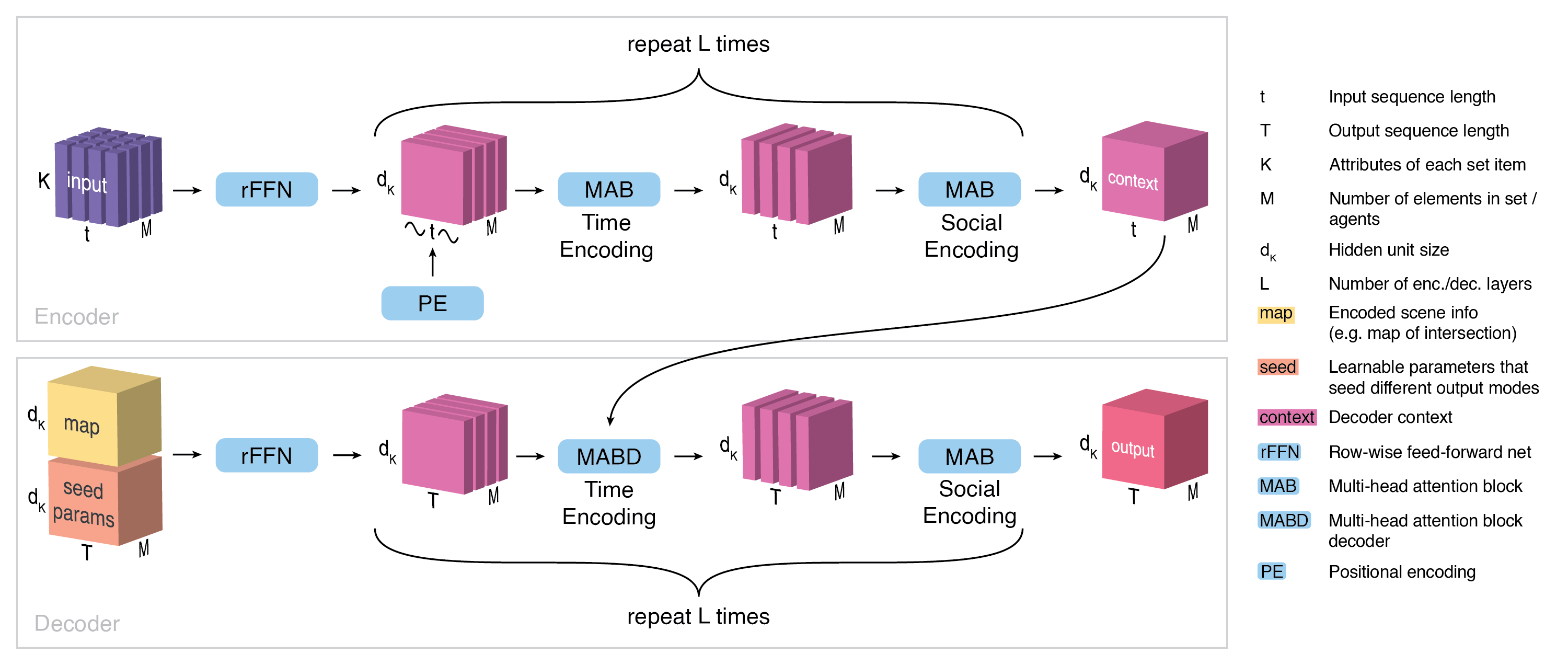}}
\caption{\small \textbf{Architecture Overview.} Our model takes as input a tensor of dimension $K, M, t$. A row-wise feed-forward network (rFFN) is applied to each row along the $t \times M$ plane transforming vectors of dimension $K$ to $d_K$. After adding positional encoding (PE) to the $t$ axis, the encoder passes the tensor through $L$ repeated layers of multi-head attention blocks (MAB) that are applied to the time axis (time encoding) and the agent axis (social encoding) before outputting the context tensor. In the decoder, the encoded map and the learnable seed parameters tensor are concatenated and passed through an rFFN before being passed through $L$ repeated layers of a multi-head attention block decoder (MABD) along the time axis (using the context from the encoder) followed by a MAB along the agent axis. This figure shows the process for predicting one possible future trajectory (``mode''). When multiple modes are predicted, each mode has its own learnable seed parameters and the decoder is rolled out once for each mode.}
\label{fig:architecture}
\end{center}
\vspace{-.5cm}
\end{figure}

Latent Variable Sequential Set Transformers is a class of encoder-decoder architectures that process sequences of sets (see Fig \ref{fig:architecture}). 
In this section, we describe the encoding and decoding procedures, the latent variable mechanism, and the training objective. Additionally, we provide proof of the permutation equivariance of our model in Appendix \ref{sec:theory}, and implementation details in Appendix \ref{apdx:model-details} as well as a discussion of the special case of \ourmodelegoNS, which is similar to \ourmodel but predicts a future sequence for only one element in a scene.

\vspace{-2mm}
\subsection{Encoder: Input Sequence Representation}
\label{sec:autobot_enc}
\vspace{-1mm}

\ourmodelma takes as an input a sequence of sets, $\mathbf{X}_{1:t} = (\mathbb{X}_1,\ldots, \mathbb{X}_{t})$, which in the motion prediction setting may be viewed as the state of a scene evolving over $t$ timesteps. Let us describe each set as having $M$ elements (or agents) with $K$ attributes (e.g. x/y position). 
To process the social and temporal information the encoder applies the following two transformations.  Appendix \ref{thm:encoder} provides a proof of equivariance with respect to the agent dimension for this encoder. 

First, the \ourmodel encoder injects temporal information into the sequence of sets using a sinusoidal positional encoding function $\text{PE}(.)$ \citep{Vaswani2017AttentionIA}. For this step, we view the data as a collection of matrices, $\{\mathbf{X}_0, \ldots, \mathbf{X}_M\}$, that describe the temporal evolution of agents. The encoder processes temporal relationships between sets using a MAB, performing the following operation on each set where $m \in \{1, \ldots, M\}$:
\begin{equation*}
    \label{eq:social_enc}
    \centering
    \mathbf{S}_m = \text{MAB}(\text{rFFN}(\mathbf{X}_m))
\end{equation*}
where $\text{rFFN}(.)$ is an embedding layer projecting each element in the set to the size of the hidden state $d_{K}$. Next, we process temporal slices of $\mathbf{S}$, retrieving sets of agent states $\mathbb{S}_\tau$ at some timestep $\tau$ 
and is processed using $\mathbb{S}_\tau = \text{MAB}(\mathbb{S}_\tau)$.
%
%
%
These two operations are repeated $L_\text{enc}$ times to produce a \textbf{context tensor} $\mathbf{C} \in \mathbb{R}^{(d_K, M, t)}$ summarizing the entire input scene, where $t$ is the number of timesteps in the input scene.

\vspace{-2mm}
\subsection{Decoder: Multimodal Sequence Generation}
\label{sec:generating}
\vspace{-1mm}

The goal of the decoder is to generate temporally and socially consistent predictions in the context of multi-modal data distributions. 
To generate $c \in \mathbb{N}$ different predictions for the same input scene, the \ourmodelma decoder employs $c$ matrices of learnable seed parameters $\mathbf{Q}_i \in \mathbb{R}^{(d_K, T)}$ where $T$ is the prediction horizon, where $i \in \{1, ... , c\}$.
Intuitively, each matrix of learnable seed parameters corresponds to a setting of the discrete latent variable in \ourmodelmaNS.
Each learnable matrix $\mathbf{Q}_i$ is then repeated across the agent dimension $M$ times to produce the input tensor $\mathbf{Q'}_i \in \mathbb{R}^{(d_K, M, T)}$
Additional contextual information (such as a rasterized image of the environment) is encoded using a convolutional neural network to produce a vector of features $\mathbf{m}_i$ where $i \in \{1, ... , c\}$.
In order to provide the contextual information to all future timesteps and to all set elements, we copy this vector along the $M$ and $T$ dimensions, producing the tensor  $\mathbf{M}_i \in \mathbb{R}^{(d_K, M, T)}$.
Each tensor $\mathbf{Q'}_i$ is concatenated with $\mathbf{M}_i$ along the $d_K$ dimension, as shown on the left side of Figure \ref{fig:architecture} (bottom).
This tensor is then processed using a rFFN$(.)$ to produce the tensor $\mathbf{H} \in \mathbb{R}^{(d_K, M, T)}$.

We begin to decode by processing the time dimension, conditioning on the encoder's output $\mathbf{C}$, and the encoded seed parameters and environmental information in $\mathbf{H}$. The decoder processes each of $m$ agents in $\mathbf{H}$ separately with an MABD layer:
\begin{equation*}
    \label{eq:temp_dec}
    \centering
    \mathbf{H}'_m = \text{MABD}(\mathbf{H}_m, \mathbf{C}_m)
\end{equation*}
where $\mathbf{H'}$ is the output tensor that encodes the future time evolution of each element in the set independently.
In order to ensure that the future scene is socially consistent between set elements, we process each temporal slice of $\mathbf{H'}$, retrieving sets of agent states $\mathbb{H}'_\tau$ at some future timestep $\tau$ where each element $\bm{h}'_\tau \in \mathbb{H}_\tau$ is processed by a MAB layer.
Essentially, the MAB layer here performs a per-timestep attention between all set elements, similar to the Set Transformer \citep{lee2019set}.

These two operations are repeated $L_\text{dec}$ times to produce a final output tensor for mode $i$.
The decoding process is repeated $c$ times with different learnable seed parameters $\mathbf{Q}_i$ and additional contextual information $\mathbf{m}_i$.
The output of the decoder is a tensor $\mathbf{O} \in \mathbb{R}^{(d_K, M, T, c)}$ which can then be processed element-wise using a neural network $\phi(.)$ to produce the desired output representation.
In several of our experiments, we generate trajectories in $(x,y)$ space, and as such, $\phi$ produces the parameters of a bivariate Gaussian distribution.

\textbf{Note on the learnable seed parameters.}
One of the main contributions that make \ourmodelmaNS's inference and training time faster than prior work is the use of the decoder seed parameters $\mathbf{Q}$. 
These parameters serve a dual purpose. 
Firstly, they account for the diversity in future prediction, where each matrix $\mathbf{Q}_i \in \mathbb{R}^{(d_k, T)}$ (where $i \in \{1, ... , c\}$) corresponds to one setting of the discrete latent variable. 
Secondly, they contribute to the speed of \ourmodelma by allowing it to perform inference on the entire scene with a single forward pass through the decoder without sequential sampling.
Prior work (e.g., \cite{tang2019multiple}) uses auto-regressive sampling of the future scene with interlaced social attention, which allows for socially consistent multi-agent predictions.
Other work (e.g., \cite{messaoud2020multi}) employ a single output MLP to generate the entire future of each agent \textit{independently} with a single forward pass.
Using the attention mechanisms of transformers, \ourmodelma combines the advantages of both approaches by deterministically decoding the future scene starting from each seed parameter matrix.
In Appendix \ref{appdx:seed_vs_ar_speed}, we compare the inference speed of \ourmodel with their autoregressive counterparts.

\vspace{-2mm}
\subsection{Training Objective}
\label{sec:autobot_training}
\vspace{-1mm}

Given a dataset  $\mathcal{D} = \big\{ (\mathbb{X}_1, \ldots, \mathbb{X}_T) \big\}_{i=1}^N$ consisting of $N$ sequences, each having $T$ sets, our goal is to maximize the likelihood of the future trajectory of all elements in the set given the input sequence of sets, i.e., $\max p(\mathbf{X}_{t+1:T}|\mathbf{X}_{1:t})$.
In order to simplify notation, we henceforth refer to the ego-agent's future trajectory by $\mathcal{Y}=\mathbf{X}_{t+1:T}$.
As discussed above, \ourmodelma employs discrete latent variables, which allows us to compute the log-likelihood exactly.
The gradient of our objective function can then be expressed as follows:
\vspace{-2mm}
\begin{equation}
    \label{eq:log_lik_ego}
    \begin{split}
        \nabla_{\theta} \mathcal{L}(\theta) 
        &= \nabla_{\theta} \log p_\theta(\mathcal{Y}|\mathbf{X}_{1:t}) 
        =  \nabla_{\theta} \log \bigg(\sum_{Z} p_\theta(\mathcal{Y}, Z | \mathbf{X}_{1:t})\bigg) \\
        &= \sum_Z p_\theta(Z | \mathcal{Y}, \mathbf{X}_{1:t}) \nabla_{\theta} \log p_\theta(\mathcal{Y}, Z | \mathbf{X}_{1:t})
    \end{split}
\end{equation}
%
%
As previously discussed in \citet{tang2019multiple}, computing the posterior likelihood $p_\theta(Z | \mathcal{Y}, \mathbf{X}_{1:t})$ is difficult in general and varies with $\theta$.
As discussed in \citet{Bishop2007PatternRA}, one can introduce a distribution over the latent variables, $q(Z)$, that is convenient to compute.
With this distribution, we can now decompose the log-likelihood as follows:
\begin{equation}
    \label{eq:log_lik_ego_elbo}
        \log p_\theta(\mathcal{Y}|\mathbf{X}_{1:t}) = \sum_Z q(Z) \log \frac{p_\theta(\mathcal{Y}, Z | \mathbf{X}_{1:t})}{q(Z)}  
          + D_{KL}(q(Z)|| p_\theta(Z | \mathcal{Y}, \mathbf{X}_{1:t})),
\end{equation}
where $D_{KL}(.||.)$ is the Kullback-Leibler divergence between the approximating posterior and the actual posterior.
A natural choice for this approximating distribution is $q(Z)=p_{\theta_{old}}(Z | \mathcal{Y}, \mathbf{X}_{1:t})$, where $\theta_{old}$ corresponds to \ourmodelmaNS's parameters before performing the parameter update.
With this choice of $q(Z)$, the objective function can be re-written as
\begin{equation}
    \label{eq:log_lik_ego2}
    \begin{split}
        \mathcal{L}(\theta)
        &= Q(\theta, \theta_{old}) + \text{const.} \\
        Q(\theta, \theta_{old}) &= \sum_Z p_{\theta_{old}}(Z | \mathcal{Y}, \mathbf{X}_{1:t}) \log p_\theta(\mathcal{Y}, Z| \mathbf{X}_{1:t}) \\
        &= \sum_Z p_{\theta_{old}}(Z | \mathcal{Y}, \mathbf{X}_{1:t}) \big\{\log p_\theta(\mathcal{Y} |Z, \mathbf{X}_{1:t}) 
         + \log p_\theta(Z|\mathbf{X}_{1:t})\big\}.
    \end{split}
\end{equation}
where the computation of $p_\theta(Z|\mathbf{X}_{1:t})$ is described in Appendix \ref{priordist}.
Note that the posterior $p_{\theta_{old}}(Z|\mathcal{Y}, \mathbf{X}_{1:t})$ can be computed exactly in the model we explore here since \ourmodelma operates with discrete latent variables.
Therefore, our final objective function becomes to maximize $Q(\theta, \theta_{old})$ and minimize $D_{KL}(p_{\theta_{old}}(Z|\mathcal{Y}, \mathbf{X}_{1:t})|| p_\theta (Z|\mathbf{X}_{1:t}))$.


As mentioned, for our experiments we use a function $\phi$ to produce a bivariate Gaussian distribution for each agent at every timestep. 
We found that to ensure that each mode's output sequence does not have high variance, we introduce a mode entropy (ME) regularization term to penalize large entropy values of the output distributions,
\begin{equation}
    \label{eq:ent_reg}
    L_{\text{ME}} = \lambda_e \max_Z \sum_{\tau=t+1}^T H(p_\theta(\mathbb{X}_{\tau}|Z, \mathbf{X}_{1:t})).
\end{equation}
As we can see, this entropy regularizer penalizes only the mode with the maximum entropy.
Our results in Section \ref{sec:toy} show the importance of this term in learning to cover the modes of the data.

\vspace{-2mm}
\section{Experiments}
\label{sec:experiments}
\vspace{-1mm}

We now demonstrate the capability of \ourmodel to model sequences of set-structured data across several domains. In subsection \ref{sec:nuscenes} we show strong performance on the competitive motion-prediction dataset nuScenes~\citep{Caesar2020nuScenesAM}, a dataset recorded from a self-driving car, capturing other vehicles and the intersection geometry. In subsection \ref{sec:argo_results}, we demonstrate our model on the Argoverse autonomous driving challenge. In subsection \ref{sec:trajnet_pp}, we show results on the synthetic partition of the TrajNet$++$~\citep{sadeghiankosaraju2018trajnet} dataset which contains multi-agent scenes with a high degree of interaction between agents. Finally in subsection \ref{sec:omniglot}, we show that our model can be applied to generate characters conditioning across strokes, and verify that \ourmodelego is permutation-invariant across an input set. See Appendix \ref{sec:toy}, for an illustrative toy experiment that highlights our model's ability to capture discrete latent modes. 

\vspace{-2mm}
\subsection{NuScenes, A Real-World Driving Dataset}
\label{sec:nuscenes}
\vspace{-1mm}

In this section, we present \ourmodelegoNS's results on the nuScenes dataset.
The goal of this benchmark dataset is to predict the ego-agent's future trajectory ($6$ seconds at $2$hz) given the past trajectory (up to $2$ seconds at $2$hz). Additionally, the benchmark provides access to all neighbouring agents' past trajectories and a birds-eye-view RGB image of the road network in the vicinity of the ego-agent.  Figure \ref{fig:nuscenes} shows three example predictions produced by \ourmodelego trained with $c=10$.
We observe that the model learns to produce trajectories that are in agreement with the road network and covers most possible futures directions.
We also note that for each direction, \ourmodelego assigns different modes to different speeds to more efficiently cover the possibilities.


We compare our results on this benchmark to the state-of-the-art in Table \ref{tab:nuscene_results}.
\textbf{Min ADE (5)} and \textbf{(10)} are the average of pointwise L2 distances between the predicted trajectory and ground truth over 5 and 10 most likely predictions respectively. A prediction is classified as a \textit{miss} if the maximum pointwise L2 distance between the prediction and ground truth is greater than 2 meters. \textbf{Miss Rate Top-5 (2m)} and \textbf{Top-10 (2m)} are the proportion of misses over all agents, where for each agent, 5 and 10 most likely predictions respectively are evaluated to check if they're misses. \textbf{Min FDE (1)} is the L2 distance between the final points of the prediction and ground truth of the most likely prediction averaged over all agents. \textbf{Off Road Rate} is the fraction of predicted trajectories that are not entirely contained in the drivable area of the map.

\begin{table*}[tb]
\centering
    \resizebox{0.91\linewidth}{!}{

  \makebox[\textwidth][c]{
  \begin{tabular}{lllllll}
    \toprule
    \textbf{Model} & \textbf{\begin{tabular}[c]{@{}l@{}}Min ADE \\ (5)\end{tabular}} & \textbf{\begin{tabular}[c]{@{}l@{}}Min ADE \\ (10)\end{tabular}} & \textbf{\begin{tabular}[c]{@{}l@{}}Miss Rate \\ Top-5 (2m)\end{tabular}} & \textbf{\begin{tabular}[c]{@{}l@{}}Miss Rate\\ Top-10 (2m)\end{tabular}} & \textbf{\begin{tabular}[c]{@{}l@{}}Min FDE\\ (1)\end{tabular}} & \textbf{\begin{tabular}[c]{@{}l@{}}Off Road \\ Rate\end{tabular}} \\ \midrule
    Noah\_prediction & 1.59 & 1.37 & 0.69 & 0.62 & 9.23 & 0.08 \\
    CXX & 1.63 & 1.29 & 0.69 & 0.60 & 8.86 & 0.08 \\
    LISA(MHA\_JAM) & 1.81 & 1.24 & 0.59 & 0.46 & 8.57 & 0.07 \\
    Trajectron++ & 1.88 & 1.51 & 0.70 & 0.57 & 9.52 & 0.25 \\ 
    CoverNet & 2.62 & 1.92 & 0.76 & 0.64 & 11.36 & 0.13 \\
    Physics Oracle & 3.70 & 3.70 & 0.88 & 0.88 & 9.09 & 0.12 \\
    WIMP & 1.84 & 1.11 & \textbf{0.55} & \textbf{0.43} & 8.49 & 0.04 \\
    GOHOME & 1.42 & 1.15 & 0.57 & 0.47 & \textbf{6.99} & 0.04 \\ \midrule
    \ourmodelego (c=10) & 1.43 & 1.05 & 0.66 & 0.45 & 8.66 & 0.03 \\
    \ourmodelego (ensemble) & \textbf{1.37} & \textbf{1.03} & 0.62 & 0.44 & 8.19 & \textbf{0.02} \\ \bottomrule
    \end{tabular}
      }}
\caption{\small \textbf{Quantitative Results on the nuScenes dataset}. Other methods: LISA \citep{messaoud2020multi}; Trajectron++ \citep{Salzmann2020TrajectronMG}; CoverNet~\citep{PhanMinh2020CoverNetMB}; Physics Oracle~\citep{Caesar2020nuScenesAM}; WIMP~\citep{khandelwal2020whatif}; GOHOME~\citep{gilles2021gohome}}
\label{tab:nuscene_results}

\end{table*}

We can see that \ourmodelego obtains the best performance on the Min ADE (10) metric and on Off Road Rate metric, and strong performance on the other metrics.
Furthermore, using an ensemble of three \ourmodelego models slightly improves the performance.
This demonstrates our model's ability to capture multi-modal distributions over future trajectories due to the latent variables. 
\ourmodelego excelled in maintaining predictions on the road, showing that the model has effectively used its capacity to attend to the map information.
Finally, we note that like all other models, \ourmodelego struggles to predict the probability of the correct mode, which results in poor performance of the Min FDE (1).
The most interesting part of \ourmodelego is its computational requirements.
We highlight that \ourmodelego was trained on a single Nvidia GTX 1080 Ti for 3 hours.
This is in large part due to the use of the decoder seed parameters which allow us to predict the future trajectory of the ego-agent in a single forward pass.
Note, that in the appendix \ref{apdx:model-details}, we present the computational complexity of \ourmodelego compared to WIMP \citep{khandelwal2020whatif}, which shows a \emph{2x inference speedup}.
Ablation studies on various components of \ourmodelego are presented in Appendix \ref{sec:nuscnes_abl}.

\begin{SCfigure}[50]
    \includegraphics[width=0.5\linewidth]{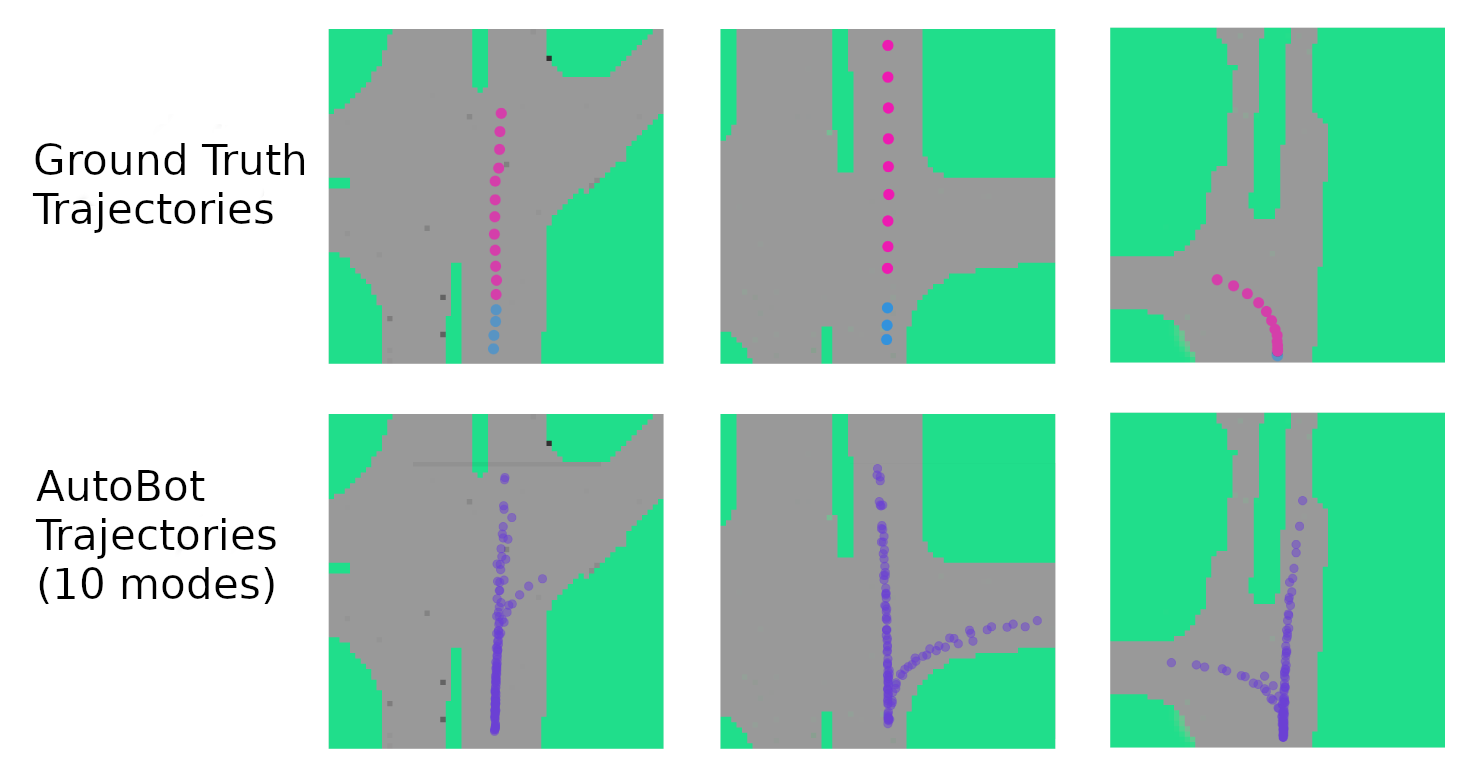}
    \label{fig:nuscenes}
    \caption{\small \textbf{NuScenes Trajectory Prediction}. Top row: birds-eye-view of road network with ground truth trajectory data where given trajectory is cyan and held-out trajectory information is pink. Bottom row: diverse trajectories generated by the different modes of \ourmodelegoNS. The model generates trajectories that adhere to the road network and captures distinct possible futures. }
\end{SCfigure}

\vspace{-2mm}
\subsection{Argoverse Results}
\label{sec:argo_results}
\vspace{-1mm}

Table \ref{tab:argo_results} shows the results of \ourmodelego (with $c=6$) on the Argoverse test set. While our method does not achieve top scores, it uses much less computation, less data augmentation and no ensembling compared to alternatives. Further, we compare quite well to the 2020 Argoverse competition winner, a model called ``Jean''. In addition, we note that the Argoverse challenge does not restrict the use of additional training data and that many of the top entries on the leaderboard have no publicly available papers or codebases. Many methods are therefore optimized for Argoverse by means of various data augmentations. For example, Scene Transformer ~\citep{ngiam2021scene} used a data augmentation scheme that includes agent dropout and scene rotation which we have not, and the current top method and winner of the 2021 challenge (QCraft) makes use of ensembling.

We believe that a fair comparison of the performance between neural network methods should ensure that each architecture is parameter-matched and the training is FLOP-matched. While the Argoverse leaderboard does not require entrants to report their parameter or FLOP count, we report available public statistics to enable comparison here. First, Scene Transformer ~\citep{ngiam2021scene} reports that their model trains in 3 days on TPUs and TPU-v3s have a peak performance of 420 TeraFLOPS, which means their model could have consumed up to 3 days $\times$ 24 hours $\times$ 420 TFLOPS, or 108,000,000 TFLOPS. On the other hand, \ourmodelego was trained on a Nvidia 1080ti for 10 hours, consuming roughly 396,000 TFLOPs, or 0.4\% as much. Jean’s method \citep{mercat2020multi} consumes roughly 8,600,000 TFLOPs (training our method uses only 5\% of that compute). While the method currently topping the leaderboard has not revealed their model details, the fact that they used ensembles very likely requires significantly more compute than our method. 

\begin{table*}[tb]
\centering
    \resizebox{0.91\linewidth}{!}{

  \makebox[\textwidth][c]{
  \begin{tabular}{lllll}
    \toprule
    \textbf{Model} & \textbf{Min ADE ($\downarrow$)} & \textbf{Min FDE ($\downarrow$)} & \textbf{Miss Rate ($\downarrow$)} & \textbf{DAC} ($\uparrow$) \\ \midrule
    \ourmodelego (Valid Set) & 0.73 & 1.10 & 0.12 & -  \\
    \ourmodelego (Test Set) & 0.89 (top-5) & 1.41 (top-5) & 0.16 (top-6) & 0.9886 (top-3) \\ \midrule
    Jean (2020 winner) & 0.97 & 1.42 & 0.13 & 0.9868 \\
    WIMP & 0.90 & 1.42 & 0.17 & 0.9815 \\
    TNT & 0.94 & 1.54 & 0.13 & 0.9889 \\
    LaneGCN & 0.87 & 1.36 & 0.16 & 0.9800 \\ 
    TPCN & 0.85 & 1.35 & 0.16 & 0.9884 \\
    mmTransformer & 0.84 & 1.34 & 0.15 & 0.9842 \\
    GOHOME & 0.94 & 1.45 & \textbf{0.105} & 0.9811 \\ 
    Scene Transformer & \textbf{0.80} & \textbf{1.23} & 0.13 & \textbf{0.9899}  \\ \bottomrule
    \end{tabular}
     }}
\caption{\small \textbf{Quantitative Results on the Argoverse test set}. Other methods that had an associated paper: 
Jean~\citep{mercat2020multi}; WIMP~\citep{khandelwal2020whatif}; TNT~\citep{Zhao2020TNTTT}; LaneGCN~\citep{Liang2020LearningLG}; TPCN~\citep{Ye2021TPCNTP}; mmTransformer~\citep{Liu2021MultimodalMP}; GOHOME~\citep{gilles2021gohome}; Scene Transformer~\citep{ngiam2021scene}.
}
\label{tab:argo_results}

\end{table*}

\vspace{-2mm}
\subsection{TrajNet$++$ Synthetic Dataset}
\label{sec:trajnet_pp}
\vspace{-1mm}

In this section, we demonstrate the utility of the having social and temporal attention in the encoder and decoder of \ourmodelmaNS.
We chose to evaluate on the synthetic partition of the TrajNet$++$ dataset since this was specifically designed to have a high-level of interaction between scene agents \citep{kothari2021human}.
In this task, we are provided with the state of all agents for the past $9$ timesteps and are tasked with predicting the next $12$ timesteps for all agents. 
The scene is initially normalized around one (ego) agent such that the agent's final position in the input sequence is at $(0,0)$ and its heading is aligned with the positive $y$ direction.
There are a total of $54,513$ unique scenes in the dataset, which we split into $49,062$ training scenes and $5,451$ validation scenes.
For this experiment, we apply our general architecture (\ourmodelmaNS) to multi-agent scene forecasting, and perform an ablation study on the social attention components in the encoder and the decoder.

\vspace{-.25cm}
\begin{table}[bt]
\centering
\begin{tabular}{lllll}
\toprule
\textbf{Model} & \textbf{\begin{tabular}[c]{@{}l@{}}Ego Agent's \\ Min ADE(6) ($\downarrow$) \end{tabular}} & \textbf{\begin{tabular}[c]{@{}l@{}}Number Of \\ Collisions ($\downarrow$)\end{tabular}} & \textbf{\begin{tabular}[c]{@{}l@{}}Scene-level \\ Min ADE(6) ($\downarrow$) \end{tabular}} & \textbf{\begin{tabular}[c]{@{}l@{}}Scene-level \\ Min FDE(6) ($\downarrow$) \end{tabular}} \\ \midrule
Linear Extrapolation & 0.439 & 2220 & 0.409 & 0.897\\
\ourmodelmaNS-AntiSocial & 0.196 & 1827 & 0.316 & 0.632\\
\ourmodelegoNS & 0.098 & 1144 & 0.214 & 0.431 \\ 
\ourmodelmaNS & \textbf{0.095} & \textbf{139} & \textbf{0.128} & \textbf{0.234} \\ \bottomrule
\end{tabular}
\caption{\small \textbf{TrajNet$++$ ablation studies for a multi-agent forecasting scenario.} We investigate the impact of using the social attention in the encoder and in the decoder. We found that \ourmodelma is able to cause significantly fewer collisions between agents compared to variants without the social attention, and improves the prediction accuracy on the scene-level metrics while maintaining strong performance on the ego-agent. Metrics are reported on the validation set.}
\label{tab:trajnetpp_ped_ablation}
\end{table}

Table \ref{tab:trajnetpp_ped_ablation} presents three variants of the model, and a baseline extrapolation model which shows that the prediction problem cannot be trivially solved.
The first, \ourmodelmaNS-AntiSocial, corresponds to a model without the  intra-set attention functions in the encoder and decoder (see Figure \ref{fig:architecture}).
The second variant, \ourmodelegoNS, only sees the past social context and has no social attention during the trajectory generation process.
The third model corresponds to the general architecture, \ourmodelmaNS.
All models were trained with $c=6$ and we report scene-level error metrics as defined by \cite{casas2020implicit}.
Using the social attention in the decoder significantly reduces the number of collisions between predicted agents' futures and significantly outperforms its ablated counterparts on  scene-level minADE and minFDE.
We expect this at a theoretical level since the ego-centric formulation makes the independence assumption that the scene's future rollout can be decomposed as the product of the individual agent's future motion in isolation, as remarked in \cite{casas2020implicit}.
\ourmodelma does not make this assumption as it forecasts the future of all agents jointly.
We refer the reader to Appendix \ref{sec:trajnetpp_qual} for additional ablation experiments and qualitative results on this dataset.

\vspace{-2mm}
\subsection{Omniglot, Stroke-based Image Generation}
\label{sec:omniglot}
\vspace{-1mm}

To demonstrate the general efficacy of our approach to sequence generation, we demonstrate \ourmodel on a character completion task, a diverse alternative to automotive and pedestrian trajectory prediction.
We set up two different tasks on the Omniglot dataset~\citep{lake2015human}: (a) \textbf{stroke-completion task} in which the model is presented with the first half of each stroke and has to complete all of them in parallel and (b) \textbf{character-completion task} in which the model is given several full strokes and has to draw the last stroke of the character. 
In task (a), the model is trained on the entire Omniglot dataset (training set), in which strokes are encoded as sequential sets of points, and we are comparing the performance of \ourmodelma qualitatively against that of an LSTM baseline with social attention and discrete latents (similar to \ourmodelma but with classic RNN architecture) on a held-out test set. 
With the first task, we illustrate that our model is flexible enough to learn to legibly complete characters from a variety of different languages.
In task (b), we train \ourmodelego only on characters ``F'', ``H'', and ``$\Pi$'' to demonstrate that our discrete latent variable captures plausible character completions given the context of the other strokes, e.g. a character ``F'' can be completed as an ``H'' when the top stroke is missing. 
The results of these experiments can be found in Fig.~\ref{fig:omni_v1}.
Implementation details and additional results can be found in Appendix section \ref{sec:apdx-omniglot-details}.

\begin{figure}
    \includegraphics[width=0.5\linewidth]{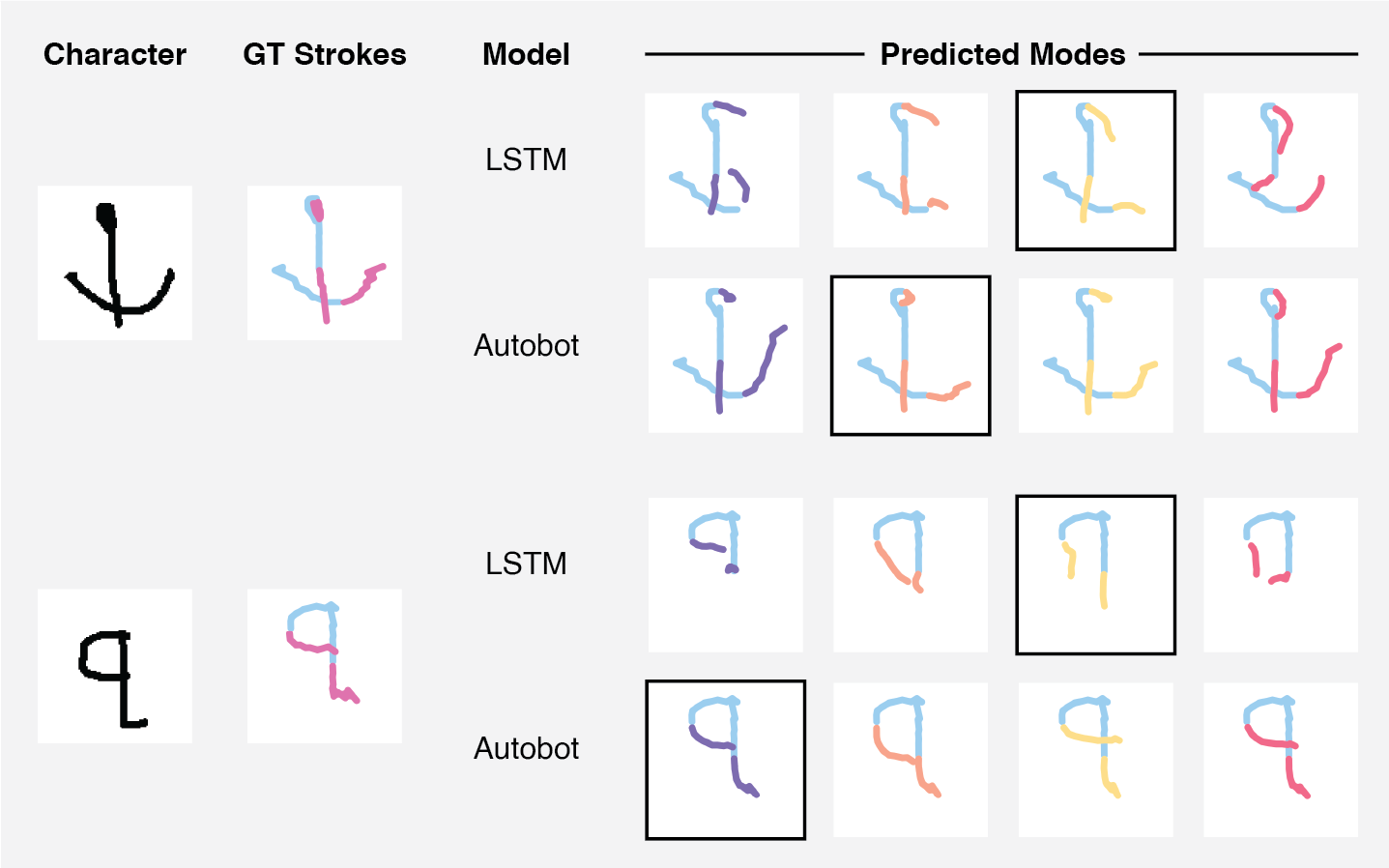}
    \includegraphics[width=0.5\linewidth]{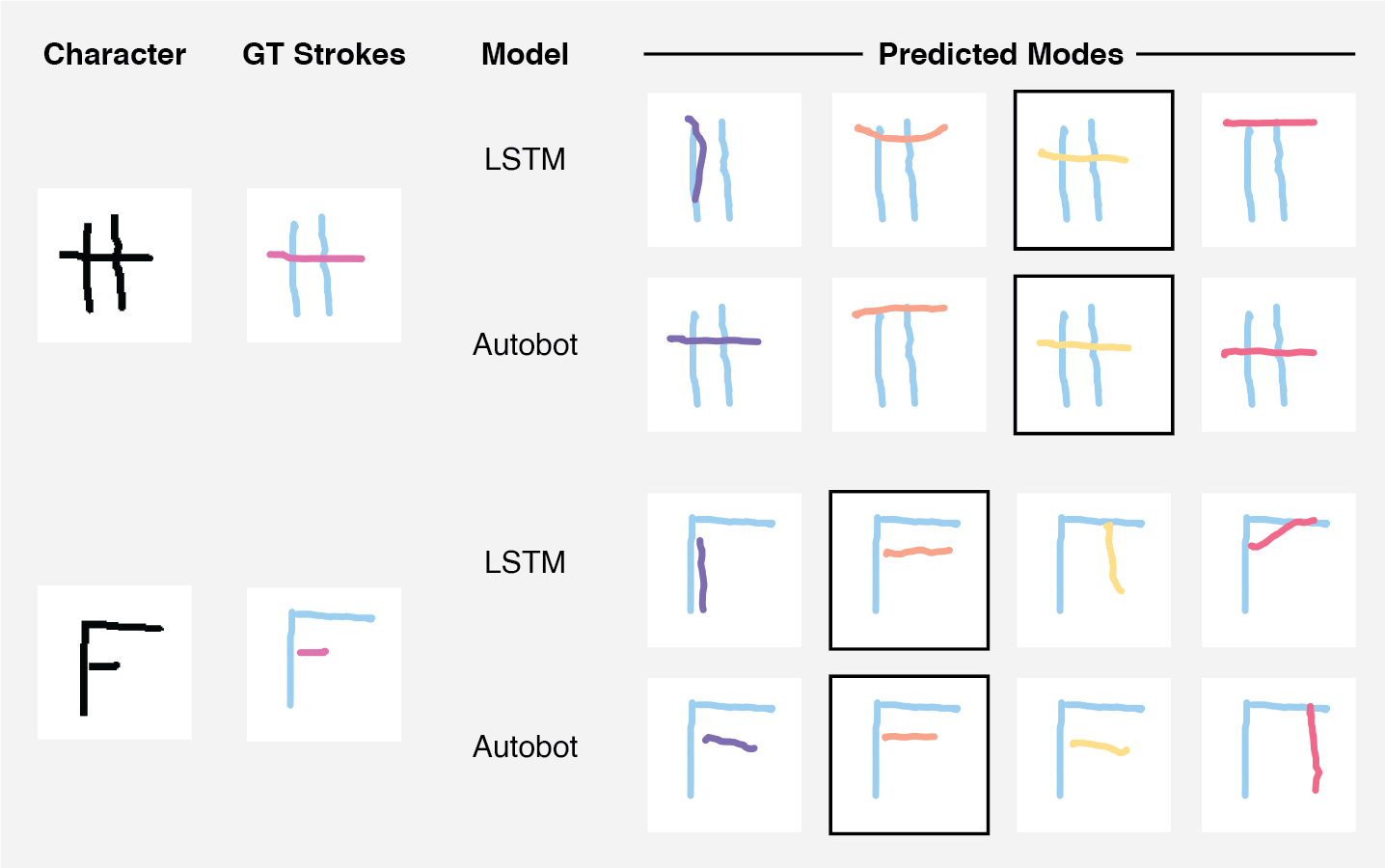}
    \vspace{-.5cm}
    \caption{\small \textbf{Omniglot qualitative results.} 
    \textit{Left: stroke completion task.} We show examples of characters generated using \ourmodelma and an LSTM baseline. The first two columns show the ground-truth image of the character and the corresponding ground-truth strokes. In this task, the models are provided with the first half of all strokes (cyan) and are tasked with generating the other half (pink). The four other columns show the generated future strokes, one for each latent mode. We can see that \ourmodelma produces more consistent and realistic characters compared to the LSTM baseline.
    \textit{Right: character completion task.} We show two examples of characters completed using \ourmodelego and an LSTM baseline. The first two columns show the ground-truth image of the character as before. In this case, the models are provided with two complete strokes and are tasked with generating a new stroke to complete the character. We observe that \ourmodelego generates more realistic characters across all modes, given ambiguous input strokes.
    }
    \label{fig:omni_v1}
\end{figure}

\vspace{-2mm}
\section{Related Work}
\label{sec:rel_work}
\vspace{-1mm}

\textbf{Processing set structured data.}
Models of set-valued data (i.e. unordered collections) should be permutation-invariant (see Appendix \ref{sec:theory} for formal definitions), and be capable of processing sets with arbitrary cardinality.
Canonical forms of feed forward neural networks and recurrent neural network models do not have these properties \citep{lee2019set}.
One type of proposed solution is to integrate pooling mechanisms with feed forward networks to process set data~\citep{zaheer2017deep,su2015multi,hartford2016deep}. This is referred as social pooling in the context of motion prediction \citep{alahi2016social,Deo2018ConvolutionalSP, Lee2017DESIREDF}. Attention-based models \citep{lee2019set,casas2020implicit,yuan2021agent,ngiam2021scene,zhao2020point}  or graph-based models \citep{Zhao2020TNTTT, messaoud2020multi, li2020end,gao2020vectornet} have performed well in set processing tasks that require modelling high-order element interactions.
In contrast, we provide a joint single pass attention-based approach for modelling more complex data-structures composed by sets \textbf{and} sequence, while also  addressing how to generate diverse and heterogeneous data-structures. Although those properties exist in concurrent motion prediction papers
\citep{tang2019multiple,casas2020implicit,yuan2021agent,ngiam2021scene, li2020evolvegraph}, our proposed method is targeting a more general set of problems
while still getting competitive results in
motion prediction benchmarks at a fraction of other models' compute.

\textbf{Diverse set generation.} 
Generation of diverse samples using neural sequence models is an important part of our model and a well-studied problem.
For example, in \citet{bowman2015generating} a variational (CVAE) approach is used to construct models that condition on attributes for dialog generation. 
The CVAE approach has also been integrated with transformer networks \citep{lin2020variational,wang2019TCVAETC} for diverse response generation.
For motion prediction, \cite{Lee2017DESIREDF} and \cite{Deo2018ConvolutionalSP} use a variational mechanism where each agent has their own intentions encoded.
\cite{li2020evolvegraph} output Gaussian mixtures at every timestep and sample from it to generate multiple futures.
%
%
%
The most similar strategy to ours was developed  by \cite{tang2019multiple}.
Other recent works \citep{suo2021trafficsim, casas2020implicit} use continuous latent variables to model the multiple actors' intentions jointly. 
We built on this method by using one joint transformer conditioned on the latent variables and  added a maximum entropy likelihood loss that improves the variability of the generated outcomes.

\textbf{Sequential state modelling.}
We are interested in data that can be better represented as a sequence. The main example of this is language modelling \citep{Vaswani2017AttentionIA, devlin2018bert}, but
also includes sound \citep{kong2020sound}, 
video \citep{becker2018red}, and others \citep{kant2020spatially,yang2021tap}.
For motion prediction,
prior work has largely focused on employing RNNs to model the input and/or output sequence \citep{alahi2016social, Lee2017DESIREDF, Deo2018ConvolutionalSP, tang2019multiple, messaoud2020multi, casas2020implicit, li2020end, li2020evolvegraph}.
With its recent success in natural language processing, Transformers \citep{Vaswani2017AttentionIA} have been adopted by recent approaches for motion forecasting.\cite{Yu2020SpatioTemporalGT} uses a graph encoding strategy for pedestrian motion prediction. \cite{Giuliari2021TransformerNF} and \cite{Liu2021MultimodalMP} propose a transformer-based encoding to predict
the future trajectories of agents separately. 
In contrast,  \ourmodel produces representations that jointly encode both the time and social dimensions by treating this structure as a sequence of sets. This provides reuse of the model capacity and a lightweight approach. Concurrent to our method, \cite{yuan2021agent}, also jointly encodes social and time dimensions but flattens both of those axis into a single dimension.

\vspace{-2mm}
\section{Conclusion}
\label{sec:conclusion}
\vspace{-1mm}

In this paper, we propose the Latent Variable Sequential Set Transformer to model time-evolution of sequential sets using discrete latent variables.
We make use of the multi-head attention block to efficiently perform intra-set attention, to model the time-dependence between the input and output sequence, and to predict the prior probability distribution over the latent modes.
We validate our \ourmodelego by achieving competitive results on the nuScenes benchmark for ego-centric motion forecasting in the domain of autonomous vehicles.
We further demonstrate that \ourmodelma can model diverse sequences of sets that adhere to social conventions.
We validate that since our model can attend to the hidden state of all agents during the generative process, it produces scene-consistent predictions on the TrajNet$++$ dataset when compared with ego-centric forecasting methods. 

\textbf{Limitations:} Our approach is limited to operate in a setting where a perceptual pipeline is providing structured sets representing the other entities under consideration. If our approach is used in a robot or autonomous car, any errors or unknown entities not captured by sets used as input to the algorithm would not be modelled by our approach. Memory limitations may also limit the number of sets that can be processed at any time. 
In future work, we plan to extend this line of work to broader and more interpretable scene-consistency. 
Further, we would like to investigate the combination of a powerful sequence model like \ourmodelma with a reinforcement learning algorithm to create a data-efficient model-based RL agent for multi-agent settings.

\newpage



\bibliographystyle{plainnat}
\bibliography{iclr2022_conference}

\begin{thebibliography}{48}
\providecommand{\natexlab}[1]{#1}
\providecommand{\url}[1]{\texttt{#1}}
\expandafter\ifx\csname urlstyle\endcsname\relax
  \providecommand{\doi}[1]{doi: #1}\else
  \providecommand{\doi}{doi: \begingroup \urlstyle{rm}\Url}\fi

\bibitem[Alahi et~al.(2016)Alahi, Goel, Ramanathan, Robicquet, Fei-Fei, and
  Savarese]{alahi2016social}
Alexandre Alahi, Kratarth Goel, Vignesh Ramanathan, Alexandre Robicquet,
  Li~Fei-Fei, and Silvio Savarese.
\newblock Social lstm: Human trajectory prediction in crowded spaces.
\newblock In \emph{Proceedings of the IEEE conference on computer vision and
  pattern recognition}, pages 961--971, 2016.

\bibitem[Ba et~al.(2016)Ba, Kiros, and Hinton]{ba2016layer}
Jimmy~Lei Ba, Jamie~Ryan Kiros, and Geoffrey~E Hinton.
\newblock Layer normalization.
\newblock \emph{arXiv preprint arXiv:1607.06450}, 2016.

\bibitem[Becker et~al.(2018)Becker, Hug, Hubner, and Arens]{becker2018red}
Stefan Becker, Ronny Hug, Wolfgang Hubner, and Michael Arens.
\newblock Red: A simple but effective baseline predictor for the trajnet
  benchmark.
\newblock In \emph{Proceedings of the European Conference on Computer Vision
  (ECCV)}, pages 0--0, 2018.

\bibitem[Biktairov et~al.(2020)Biktairov, Stebelev, Rudenko, Shliazhko, and
  Yangel]{DBLP:journals/corr/abs-2010-12007}
Yuriy Biktairov, Maxim Stebelev, Irina Rudenko, Oleh Shliazhko, and Boris
  Yangel.
\newblock {PRANK:} motion prediction based on ranking.
\newblock \emph{CoRR}, abs/2010.12007, 2020.
\newblock URL \url{https://arxiv.org/abs/2010.12007}.

\bibitem[Bishop(2006)]{Bishop2007PatternRA}
C.~M. Bishop.
\newblock \emph{Pattern Recognition and Machine Learning}.
\newblock Springer; 1st ed. 2006, 2006.

\bibitem[Bowman et~al.(2015)Bowman, Vilnis, Vinyals, Dai, Jozefowicz, and
  Bengio]{bowman2015generating}
Samuel~R Bowman, Luke Vilnis, Oriol Vinyals, Andrew~M Dai, Rafal Jozefowicz,
  and Samy Bengio.
\newblock Generating sentences from a continuous space.
\newblock \emph{arXiv preprint arXiv:1511.06349}, 2015.

\bibitem[Caesar et~al.(2020)Caesar, Bankiti, Lang, Vora, Liong, Xu, Krishnan,
  Pan, Baldan, and Beijbom]{Caesar2020nuScenesAM}
H.~Caesar, Varun Bankiti, A.~Lang, Sourabh Vora, Venice~Erin Liong, Q.~Xu,
  A.~Krishnan, Yu~Pan, Giancarlo Baldan, and Oscar Beijbom.
\newblock nuscenes: A multimodal dataset for autonomous driving.
\newblock \emph{2020 IEEE/CVF Conference on Computer Vision and Pattern
  Recognition (CVPR)}, pages 11618--11628, 2020.

\bibitem[Casas et~al.(2020)Casas, Gulino, Suo, Luo, Liao, and
  Urtasun]{casas2020implicit}
Sergio Casas, Cole Gulino, Simon Suo, Katie Luo, Renjie Liao, and Raquel
  Urtasun.
\newblock Implicit latent variable model for scene-consistent motion
  forecasting.
\newblock \emph{arXiv preprint arXiv:2007.12036}, 2020.

\bibitem[Chang et~al.(2019)Chang, Lambert, Sangkloy, Singh, Bąk, Hartnett,
  Wang, Carr, Lucey, Ramanan, and Hays]{Chang2019Argoverse3T}
Ming-Fang Chang, John Lambert, Patsorn Sangkloy, Jagjeet Singh, Sławomir Bąk,
  Andrew Hartnett, De~Wang, Peter Carr, Simon Lucey, Deva Ramanan, and James
  Hays.
\newblock Argoverse: 3d tracking and forecasting with rich maps.
\newblock \emph{2019 IEEE/CVF Conference on Computer Vision and Pattern
  Recognition (CVPR)}, pages 8740--8749, 2019.

\bibitem[Deo and Trivedi(2018)]{Deo2018ConvolutionalSP}
Nachiket Deo and M.~Trivedi.
\newblock Convolutional social pooling for vehicle trajectory prediction.
\newblock \emph{2018 IEEE/CVF Conference on Computer Vision and Pattern
  Recognition Workshops (CVPRW)}, pages 1549--15498, 2018.

\bibitem[Devlin et~al.(2018)Devlin, Chang, Lee, and Toutanova]{devlin2018bert}
Jacob Devlin, Ming-Wei Chang, Kenton Lee, and Kristina Toutanova.
\newblock Bert: Pre-training of deep bidirectional transformers for language
  understanding.
\newblock \emph{arXiv preprint arXiv:1810.04805}, 2018.

\bibitem[Gao et~al.(2020)Gao, Sun, Zhao, Shen, Anguelov, Li, and
  Schmid]{gao2020vectornet}
Jiyang Gao, Chen Sun, Hang Zhao, Yi~Shen, Dragomir Anguelov, Congcong Li, and
  Cordelia Schmid.
\newblock Vectornet: Encoding hd maps and agent dynamics from vectorized
  representation.
\newblock In \emph{Proceedings of the IEEE/CVF Conference on Computer Vision
  and Pattern Recognition}, pages 11525--11533, 2020.

\bibitem[Gilles et~al.(2021)Gilles, Sabatini, Tsishkou, Stanciulescu, and
  Moutarde]{gilles2021gohome}
Thomas Gilles, Stefano Sabatini, Dzmitry Tsishkou, Bogdan Stanciulescu, and
  Fabien Moutarde.
\newblock Gohome: Graph-oriented heatmap output forfuture motion estimation.
\newblock \emph{arXiv preprint arXiv:2109.01827}, 2021.

\bibitem[Giuliari et~al.(2021)Giuliari, Hasan, Cristani, and
  Galasso]{Giuliari2021TransformerNF}
Francesco Giuliari, Irtiza Hasan, M.~Cristani, and Fabio Galasso.
\newblock Transformer networks for trajectory forecasting.
\newblock \emph{2020 25th International Conference on Pattern Recognition
  (ICPR)}, pages 10335--10342, 2021.

\bibitem[Hartford et~al.(2016)Hartford, Wright, and
  Leyton-Brown]{hartford2016deep}
Jason~S Hartford, James~R Wright, and Kevin Leyton-Brown.
\newblock Deep learning for predicting human strategic behavior.
\newblock In \emph{NIPS}, 2016.

\bibitem[Kant et~al.(2020)Kant, Batra, Anderson, Schwing, Parikh, Lu, and
  Agrawal]{kant2020spatially}
Yash Kant, Dhruv Batra, Peter Anderson, Alexander Schwing, Devi Parikh, Jiasen
  Lu, and Harsh Agrawal.
\newblock Spatially aware multimodal transformers for textvqa.
\newblock In \emph{Computer Vision--ECCV 2020: 16th European Conference,
  Glasgow, UK, August 23--28, 2020, Proceedings, Part IX 16}, pages 715--732.
  Springer, 2020.

\bibitem[Kembhavi et~al.(2017)Kembhavi, Seo, Schwenk, Choi, Farhadi, and
  Hajishirzi]{areyousmarterthan2017}
Aniruddha Kembhavi, Minjoon Seo, Dustin Schwenk, Jonghyun Choi, Ali Farhadi,
  and Hannaneh Hajishirzi.
\newblock Are you smarter than a sixth grader? textbook question answering for
  multimodal machine comprehension.
\newblock In \emph{2017 IEEE Conference on Computer Vision and Pattern
  Recognition (CVPR)}, pages 5376--5384, 2017.
\newblock \doi{10.1109/CVPR.2017.571}.

\bibitem[Khandelwal et~al.(2020)Khandelwal, Qi, Singh, Hartnett, and
  Ramanan]{khandelwal2020whatif}
Siddhesh Khandelwal, William Qi, Jagjeet Singh, Andrew Hartnett, and Deva
  Ramanan.
\newblock What-if motion prediction for autonomous driving, 2020.

\bibitem[Kong et~al.(2020)Kong, Xu, Wang, and Plumbley]{kong2020sound}
Qiuqiang Kong, Yong Xu, Wenwu Wang, and Mark~D Plumbley.
\newblock Sound event detection of weakly labelled data with cnn-transformer
  and automatic threshold optimization.
\newblock \emph{IEEE/ACM Transactions on Audio, Speech, and Language
  Processing}, 28:\penalty0 2450--2460, 2020.

\bibitem[Kothari et~al.(2021)Kothari, Kreiss, and Alahi]{kothari2021human}
Parth Kothari, Sven Kreiss, and Alexandre Alahi.
\newblock Human trajectory forecasting in crowds: A deep learning perspective.
\newblock \emph{IEEE Transactions on Intelligent Transportation Systems}, 2021.

\bibitem[Lake et~al.(2015)Lake, Salakhutdinov, and Tenenbaum]{lake2015human}
Brenden~M Lake, Ruslan Salakhutdinov, and Joshua~B Tenenbaum.
\newblock Human-level concept learning through probabilistic program induction.
\newblock \emph{Science}, 350\penalty0 (6266):\penalty0 1332--1338, 2015.

\bibitem[Lee et~al.(2019)Lee, Lee, Kim, Kosiorek, Choi, and Teh]{lee2019set}
Juho Lee, Yoonho Lee, Jungtaek Kim, Adam Kosiorek, Seungjin Choi, and Yee~Whye
  Teh.
\newblock Set transformer: A framework for attention-based
  permutation-invariant neural networks.
\newblock In \emph{International Conference on Machine Learning}, pages
  3744--3753. PMLR, 2019.

\bibitem[Lee et~al.(2017)Lee, Choi, Vernaza, Choy, Torr, and
  Chandraker]{Lee2017DESIREDF}
N.~Lee, W.~Choi, Paul Vernaza, C.~Choy, P.~Torr, and Manmohan Chandraker.
\newblock Desire: Distant future prediction in dynamic scenes with interacting
  agents.
\newblock \emph{2017 IEEE Conference on Computer Vision and Pattern Recognition
  (CVPR)}, pages 2165--2174, 2017.

\bibitem[Li et~al.(2020{\natexlab{a}})Li, Yang, Tomizuka, and
  Choi]{li2020evolvegraph}
Jiachen Li, Fan Yang, Masayoshi Tomizuka, and Chiho Choi.
\newblock Evolvegraph: Multi-agent trajectory prediction with dynamic
  relational reasoning.
\newblock \emph{arXiv preprint arXiv:2003.13924}, 2020{\natexlab{a}}.

\bibitem[Li et~al.(2020{\natexlab{b}})Li, Yang, Liang, Zeng, Ren, Segal, and
  Urtasun]{li2020end}
Lingyun~Luke Li, Bin Yang, Ming Liang, Wenyuan Zeng, Mengye Ren, Sean Segal,
  and Raquel Urtasun.
\newblock End-to-end contextual perception and prediction with interaction
  transformer.
\newblock \emph{arXiv preprint arXiv:2008.05927}, 2020{\natexlab{b}}.

\bibitem[Liang et~al.(2020)Liang, Yang, Hu, Chen, Liao, Feng, and
  Urtasun]{Liang2020LearningLG}
Ming Liang, Binh Yang, Rui Hu, Yun Chen, Renjie Liao, Song Feng, and Raquel
  Urtasun.
\newblock Learning lane graph representations for motion forecasting.
\newblock \emph{ArXiv}, abs/2007.13732, 2020.

\bibitem[Lin et~al.(2020)Lin, Winata, Xu, Liu, and Fung]{lin2020variational}
Zhaojiang Lin, Genta~Indra Winata, Peng Xu, Zihan Liu, and Pascale Fung.
\newblock Variational transformers for diverse response generation.
\newblock \emph{arXiv preprint arXiv:2003.12738}, 2020.

\bibitem[Liu et~al.(2021)Liu, Zhang, Fang, Jiang, and
  Zhou]{Liu2021MultimodalMP}
Yicheng Liu, Jinghuai Zhang, Liangji Fang, Qinhong Jiang, and Bolei Zhou.
\newblock Multimodal motion prediction with stacked transformers.
\newblock \emph{ArXiv}, abs/2103.11624, 2021.

\bibitem[Mercat et~al.(2020)Mercat, Gilles, Zoghby, Sandou, Beauvois, and
  Gil]{mercat2020multi}
Jean Mercat, Thomas Gilles, Nicole Zoghby, Guillaume Sandou, Dominique
  Beauvois, and Guillermo Gil.
\newblock Multi-head attention for joint multi-modal vehicle motion
  forecasting.
\newblock In \emph{IEEE International Conference on Robotics and Automation},
  2020.

\bibitem[Messaoud et~al.(2020)Messaoud, Deo, Trivedi, and
  Nashashibi]{messaoud2020multi}
Kaouther Messaoud, Nachiket Deo, Mohan~M Trivedi, and Fawzi Nashashibi.
\newblock Multi-head attention with joint agent-map representation for
  trajectory prediction in autonomous driving.
\newblock \emph{arXiv}, pages arXiv--2005, 2020.

\bibitem[Ngiam et~al.(2021)Ngiam, Caine, Vasudevan, Zhang, Chiang, Ling,
  Roelofs, Bewley, Liu, Venugopal, et~al.]{ngiam2021scene}
Jiquan Ngiam, Benjamin Caine, Vijay Vasudevan, Zhengdong Zhang, Hao-Tien~Lewis
  Chiang, Jeffrey Ling, Rebecca Roelofs, Alex Bewley, Chenxi Liu, Ashish
  Venugopal, et~al.
\newblock Scene transformer: A unified multi-task model for behavior prediction
  and planning.
\newblock \emph{arXiv preprint arXiv:2106.08417}, 2021.

\bibitem[Phan-Minh et~al.(2020)Phan-Minh, Grigore, Boulton, Beijbom, and
  Wolff]{PhanMinh2020CoverNetMB}
Tung Phan-Minh, E.~Grigore, F.~Boulton, Oscar Beijbom, and Eric~M. Wolff.
\newblock Covernet: Multimodal behavior prediction using trajectory sets.
\newblock \emph{2020 IEEE/CVF Conference on Computer Vision and Pattern
  Recognition (CVPR)}, pages 14062--14071, 2020.

\bibitem[Richardson et~al.(2013)Richardson, Burges, and
  Renshaw]{richardson-etal-2013-mctest}
Matthew Richardson, Christopher~J.C. Burges, and Erin Renshaw.
\newblock {MCT}est: A challenge dataset for the open-domain machine
  comprehension of text.
\newblock In \emph{Proceedings of the 2013 Conference on Empirical Methods in
  Natural Language Processing}, pages 193--203, Seattle, Washington, USA,
  October 2013. Association for Computational Linguistics.
\newblock URL \url{https://aclanthology.org/D13-1020}.

\bibitem[Sadeghian et~al.(2018)Sadeghian, Kosaraju, Gupta, Savarese, and
  Alahi]{sadeghiankosaraju2018trajnet}
Amir Sadeghian, Vineet Kosaraju, Agrim Gupta, Silvio Savarese, and Alexandre
  Alahi.
\newblock Trajnet: Towards a benchmark for human trajectory prediction.
\newblock \emph{arXiv preprint}, 2018.

\bibitem[Salzmann et~al.(2020)Salzmann, Ivanovic, Chakravarty, and
  Pavone]{Salzmann2020TrajectronMG}
Tim Salzmann, B.~Ivanovic, P.~Chakravarty, and M.~Pavone.
\newblock Trajectron++: Multi-agent generative trajectory forecasting with
  heterogeneous data for control.
\newblock \emph{ArXiv}, abs/2001.03093, 2020.

\bibitem[Su et~al.(2015)Su, Maji, Kalogerakis, and Learned-Miller]{su2015multi}
Hang Su, Subhransu Maji, Evangelos Kalogerakis, and Erik Learned-Miller.
\newblock Multi-view convolutional neural networks for 3d shape recognition.
\newblock In \emph{Proceedings of the IEEE international conference on computer
  vision}, pages 945--953, 2015.

\bibitem[Suo et~al.(2021)Suo, Regalado, Casas, and Urtasun]{suo2021trafficsim}
Simon Suo, Sebastian Regalado, Sergio Casas, and Raquel Urtasun.
\newblock Trafficsim: Learning to simulate realistic multi-agent behaviors.
\newblock In \emph{Proceedings of the IEEE/CVF Conference on Computer Vision
  and Pattern Recognition}, pages 10400--10409, 2021.

\bibitem[Tang and Salakhutdinov(2019)]{tang2019multiple}
Charlie Tang and Russ~R Salakhutdinov.
\newblock Multiple futures prediction.
\newblock In \emph{Advances in Neural Information Processing Systems}, pages
  15424--15434, 2019.

\bibitem[Vaswani et~al.(2017)Vaswani, Shazeer, Parmar, Uszkoreit, Jones, Gomez,
  Kaiser, and Polosukhin]{Vaswani2017AttentionIA}
Ashish Vaswani, Noam Shazeer, Niki Parmar, Jakob Uszkoreit, Llion Jones,
  Aidan~N. Gomez, L.~Kaiser, and Illia Polosukhin.
\newblock Attention is all you need.
\newblock In \emph{NIPS}, 2017.

\bibitem[Wang and Wan(2019)]{wang2019TCVAETC}
T.~Wang and Xiaojun Wan.
\newblock T-cvae: Transformer-based conditioned variational autoencoder for
  story completion.
\newblock In \emph{IJCAI}, 2019.

\bibitem[Yang et~al.(2021)Yang, Lu, Wang, Yin, Florencio, Wang, Zhang, Zhang,
  and Luo]{yang2021tap}
Zhengyuan Yang, Yijuan Lu, Jianfeng Wang, Xi~Yin, Dinei Florencio, Lijuan Wang,
  Cha Zhang, Lei Zhang, and Jiebo Luo.
\newblock Tap: Text-aware pre-training for text-vqa and text-caption.
\newblock In \emph{Proceedings of the IEEE/CVF Conference on Computer Vision
  and Pattern Recognition}, pages 8751--8761, 2021.

\bibitem[Ye et~al.(2021)Ye, Cao, and Chen]{Ye2021TPCNTP}
Maosheng Ye, Tongyi Cao, and Qifeng Chen.
\newblock Tpcn: Temporal point cloud networks for motion forecasting.
\newblock \emph{ArXiv}, abs/2103.03067, 2021.

\bibitem[Yeong et~al.(2021)Yeong, Velasco-Hern{\'a}ndez, Barry, and
  Walsh]{Yeong2021SensorAS}
De~Jong Yeong, Gustavo~Adolfo Velasco-Hern{\'a}ndez, John Barry, and Joseph
  Walsh.
\newblock Sensor and sensor fusion technology in autonomous vehicles: A review.
\newblock \emph{Sensors (Basel, Switzerland)}, 21, 2021.

\bibitem[Yu et~al.(2020)Yu, Ma, Ren, Zhao, and Yi]{Yu2020SpatioTemporalGT}
Cunjun Yu, Xiao Ma, J.~Ren, H.~Zhao, and Shuai Yi.
\newblock Spatio-temporal graph transformer networks for pedestrian trajectory
  prediction.
\newblock In \emph{ECCV}, 2020.

\bibitem[Yuan et~al.(2021)Yuan, Weng, Ou, and Kitani]{yuan2021agent}
Ye~Yuan, Xinshuo Weng, Yanglan Ou, and Kris Kitani.
\newblock Agentformer: Agent-aware transformers for socio-temporal multi-agent
  forecasting.
\newblock In \emph{Proceedings of the IEEE/CVF International Conference on
  Computer Vision (ICCV)}, 2021.

\bibitem[Zaheer et~al.(2017)Zaheer, Kottur, Ravanbakhsh, Poczos, Salakhutdinov,
  and Smola]{zaheer2017deep}
Manzil Zaheer, Satwik Kottur, Siamak Ravanbakhsh, Barnabas Poczos, Ruslan
  Salakhutdinov, and Alexander Smola.
\newblock Deep sets.
\newblock \emph{arXiv preprint arXiv:1703.06114}, 2017.

\bibitem[Zhao et~al.(2020{\natexlab{a}})Zhao, Gao, Lan, Sun, Sapp, Varadarajan,
  Shen, Chai, Schmid, Li, and Anguelov]{Zhao2020TNTTT}
Hang Zhao, Jiyang Gao, T.~Lan, Chen Sun, Benjamin Sapp, Balakrishnan
  Varadarajan, Y.~Shen, Yuning Chai, C.~Schmid, Congcong Li, and Dragomir
  Anguelov.
\newblock Tnt: Target-driven trajectory prediction.
\newblock \emph{ArXiv}, abs/2008.08294, 2020{\natexlab{a}}.

\bibitem[Zhao et~al.(2020{\natexlab{b}})Zhao, Jiang, Jia, Torr, and
  Koltun]{zhao2020point}
Hengshuang Zhao, Li~Jiang, Jiaya Jia, Philip Torr, and Vladlen Koltun.
\newblock Point transformer, 2020{\natexlab{b}}.

\end{thebibliography}


\newpage

\appendix

\section{Background}
\label{appdx:background}

We now review several components of the Transformer \citep{Vaswani2017AttentionIA} and Set Transformer \citep{lee2019set} architectures, mostly following the notation found in their manuscripts. 

\textbf{Multi-Head Self Attention (MHSA)} as proposed in \citep{Vaswani2017AttentionIA} is a function defined on $n_q$ query vectors $\mathbf{Q} \in \mathbb{R}^{n_q \times d_q}$, and key-value pairs ($\mathbf{K} \in \mathbb{R}^{n_{k} \times d_q}$ and $\mathbf{V} \in \mathbb{R}^{n_{k} \times d_v}$ ). With a single attention head, the function resolves the queries $\mathbf{Q}$ by performing the computation:  
\begin{equation}
    \label{eq:self_attn}
    \text{Attn}(\mathbf{Q}, \mathbf{K}, \mathbf{V}) = \text{softmax}(\mathbf{QK}^\top)\mathbf{V},
\end{equation}
where $d_q$ is the dimension of the query and key vectors and $d_v$ is the dimension of the value vector.
MHSA consists of $h$ attention heads performing the operation shown in Eq. \ref{eq:self_attn} with $h$ linear projections of keys, queries, and values.
The final output is a linear projection of the concatenated output of each attention head.
These computations can be expressed as
\begin{equation}
    \label{eq:MHSA}
    \begin{split}
        \text{MHSA}(\mathbf{Q}, \mathbf{K}, \mathbf{V}) = \text{concat}(\text{head}_1, \ldots, \text{head}_h)\mathbf{W}^O, 
        \quad \text{where} \\ \quad \text{head}_i = \text{Attn}(\mathbf{QW}_i^Q, \mathbf{KW}_i^K, \mathbf{VW}_i^V),
    \end{split}
\end{equation}
and where $\mathbf{W}_i^Q$, $\mathbf{W}_i^V$ and $\mathbf{W}_i^K$ along with the output projection matrix $\mathbf{W}^O$ are the learnable projection matrices of each attention head in the MHSA.
Note that $\mathbf{W}_i^Q$, $\mathbf{W}_i^V$ and $\mathbf{W}_i^K$ project the initial set of inputs to smaller dimensionality.
For example, $\mathbf{W}_i^Q \in \mathbb{R}^{d_q \times d_q^M}$ projects the original queries with dimension $d_q$ to $d_q^M$ which is typically chosen to be $d_q/h$. 
This operation has useful applications in learning a representation of an input set where all its elements interact with each other.

An interesting application of MHSA is to perform self-attention on an input set \citep{lee2019set}.
Given an input set $X$, one can perform intra-set attention by using $X$ as the queries, keys and values and having a residual connection, i.e., MHSA$(\mathbf{X})=X+$MHSA$(\mathbf{X}, \mathbf{X}, \mathbf{X})$.
This operation has useful applications in learning a representation of an input set where all its elements interact with each other.

\textbf{Multi-head Attention Block Decoders (MABD)} were also introduced in \citet{Vaswani2017AttentionIA}, and were used to produce decoded sequences.
Given an input matrix $\mathbf{X}$ and an output matrix $\mathbf{Y}$ representing sequences, the decoder block performs the following computations:
\begin{equation}
    \label{eq:MABD-appdx}
    \begin{split}
        \text{MABD}(\mathbf{Y}, \mathbf{X}) = \text{LN}(\mathbf{H} + \text{rFFN}(\mathbf{H})) \\
        \text{where}\quad \mathbf{H} = \text{LN}(\mathbf{H}' + \text{MHSA}(\mathbf{H}', \mathbf{X}, \mathbf{X})) \\
        \text{and}\quad \mathbf{H}' = \text{LN}(\text{MHSA}(\mathbf{Y}))
    \end{split}
\end{equation}
During training, MABD can be trained efficiently by shifting $Y$ backward by one (with a start token at the beginning) and computing the output with one forward pass.
During testing where one does not have access to $Y$, the model would then generate the future autoregressively.
In order to avoid conditioning on the future during training, the MHSA$(Y)$ operation in MABD employs time-masking.
This would prevent the model from accessing privileged future information during training.

\section{Permutation Equivariance of the Model}
\label{sec:theory}

In this section, we prove the permutation equivariance of the Latent Variable Sequential Set Transformer with respect to the dimension $M$. To do so, we analyze the model's response to permutation along this dimension of the input tensor. This tensor, $\mathbf{X}$, is of dimension $(K, M, t)$, where $K, M, t \in \mathbb{N}$. The dimensions $t$ and $K$ are ordered, i.e. if we index into the tensor along dimension $M$ (we denote this operation as $\mathbf{X}_m$ where $m \in \{1, \ldots, M\}$) then we retrieve a $(K, t)$-dimensional matrix. However, the $M$ dimension is unordered. This implies that when we index by another dimension, for example $t$, we retrieve an invertible multiset of vectors, denoted $\mathbb{X}_\tau = \{\bm{x}_1, \ldots, \bm{x}_M\}$ where $\tau \in \{1, \ldots, t\}$ and $\bm{x}_m \in \mathbb{X}_\tau$ are $K$-dimensional.

Our proof that our model is permutation equivariant on $M$ demonstrates a clear difference with the properties of similar models. For example, the model proposed in Set Transformers \citep{lee2019set} features a permutation equivariant encoder combined with a permutation \textit{invariant} decoder.  And while Deep Sets \citep{zaheer2017deep} provides a theoretical framework for designing neural network models that operating on sets, they do not directly address the processing of hetereogeneous data structures as we do here.

The rest of the section will proceed in the following manner:
\begin{enumerate}
    \itemsep0em 
    \item Provide definitions and lemmas of the mathematical structures we will use in our proofs.
    \item Show the permutation equivariance of the \ourmodelma encoder on $M$.
    \item Show the permutation equivariance of the \ourmodelma decoder on $M$.
\end{enumerate}

For clarity, we re-state the motivation for proving permutation equivariance in this particular model. In tasks like motion prediction, we may not be able to uniquely identify moving objects (perhaps due to occlusion or similar visual features). Therefore, we may wish to represent these objects as a collection of states evolving over time. Readers may benefit from this perspective on $\mathbf{X}$ as a time-evolving ($t$) collection of $M$ objects with $K$ attributes. We described $\mathbb{X}_\tau$ as a multiset for generality, as it permits repeated elements. However, in motion prediction tasks it might be safe to assume that no two distinct objects will share the same attribute vector. By proving that our model is equivariant with respect to $M$, we ensure that it will not waste capacity on spurious order-specific features. The inductive bias for this type of task differs from other settings like machine translation where word order is meaningful.

\subsection{Definitions and Lemmas}
A permutation equivariant function is one that achieves the same result if you permute the set then apply the function to the permuted set, as you would achieve if you had applied the function to the set then applied the same permutation to the output set. We provide a formal description in Def \ref{def:perm-equi}.
\begin{definition}[Permutation Equivariance]
\label{def:perm-equi}
Let $\mathbb{X}$ be the set of $n$ vectors $\{ \bm{x}_1, \ldots, \bm{x}_n \}$, and $S_n$ the group of permutations of n elements $\pi: \mathbb{X} \to \mathbb{X}$. 
Suppose we have a function $f: \mathbb{X} \to \mathbb{X}$, we call $f$ \textbf{Permutation Equivariant} iff $f(\pi(\mathbb{X})) = \pi(f(\mathbb{X}))$.
\end{definition}

One of the central components of the Sequential Set Transformer is the Multi-Head Self Attention function (MHSA : $\mathbb{X} \to \mathbb{X}$)  originally introduced in \citep{Vaswani2017AttentionIA}. MHSA computes an updated representation for one input vector in a set of input vectors by processing its relation with a collection of other vectors. Without the explicit addition of positional information to the content of these vectors, this operation is \textit{invariant} to input order. MHSA is a content-based attention mechanism, and updates the contents of a \textit{single} input vector based on the contents of the others. However, if we define $\mathbf{f}_{\textnormal{MHSA}}$ as the function applying MHSA to update \textit{each} input vector, then we have created a permutation equivariant function, described in Lemma \ref{lemma:sab-perm-eq}.

\begin{lemma}[$\mathbf{f}_{\textnormal{MAB}}$ is Permutation Equivariant on $M$]
\label{lemma:sab-perm-eq}
Let $\mathbb{X}$ be a set of $D$-dimensional vectors $\{\bm{x}_1, \ldots, \bm{x}_M\}$, and $\mathbf{f}_{\textnormal{MAB}}: \mathbb{R}^{D \times M} \to \mathbb{R}^{D \times M}$ be a function that applies \textnormal{MAB} (as defined in Equation \ref{eq:MAB} of the main text) to each $\bm{x}_m \in \mathbb{X}$. Then $\mathbf{f}_{\textnormal{MAB}}(\mathbb{X}_\tau)$ is permutation equivariant because $\forall \pi \in S_m$, $\mathbf{f}_{\textnormal{MAB}}(\pi(\mathbb{X})) = \pi(\mathbf{f}_{\textnormal{MAB}}(\mathbb{X}))$.
\end{lemma}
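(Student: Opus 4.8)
The plan is to prove that applying MHSA (with its residual connection) row-wise across the $M$ agents commutes with any permutation $\pi \in S_M$ of those agents. I would work directly from the definitions in Equations \ref{eq:MHSA} and \ref{eq:MAB}. The key structural observation is that MHSA is a \emph{content-based} attention mechanism: the updated representation of agent $m$ depends only on the unordered collection of all agents' content vectors through the softmax-weighted sum $\text{softmax}(\bm{q}_m \mathbf{K}^\top)\mathbf{V}$, and \emph{not} on the positional index $m$ itself (since no positional encoding is added along $M$). This is precisely what distinguishes the $M$ dimension from the ordered $t$ dimension where $\text{PE}(\cdot)$ is injected.

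First I would set up notation: let $\pi$ act on $\mathbb{X} = \{\bm{x}_1, \ldots, \bm{x}_M\}$ by sending the matrix with rows $\bm{x}_m$ to the matrix with rows $\bm{x}_{\pi(m)}$, i.e. as left-multiplication by a permutation matrix $\mathbf{P}_\pi$. I would then trace $\mathbf{P}_\pi \mathbf{X}$ through each sub-operation. The query, key, and value projections are applied row-wise, so $(\mathbf{P}_\pi \mathbf{X})\mathbf{W}_i^Q = \mathbf{P}_\pi (\mathbf{X}\mathbf{W}_i^Q)$, and likewise for $\mathbf{K}$ and $\mathbf{V}$. The core identity is that attention is permutation equivariant: $\text{Attn}(\mathbf{P}_\pi\mathbf{Q}, \mathbf{P}_\pi\mathbf{K}, \mathbf{P}_\pi\mathbf{V}) = \mathbf{P}_\pi\,\text{Attn}(\mathbf{Q}, \mathbf{K}, \mathbf{V})$. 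I would verify this by noting $\text{softmax}\big((\mathbf{P}_\pi\mathbf{Q})(\mathbf{P}_\pi\mathbf{K})^\top\big)(\mathbf{P}_\pi\mathbf{V}) = \text{softmax}(\mathbf{P}_\pi\mathbf{Q}\mathbf{K}^\top\mathbf{P}_\pi^\top)\mathbf{P}_\pi\mathbf{V}$, and that softmax is applied row-wise so it commutes with the row/column permutation as $\mathbf{P}_\pi\,\text{softmax}(\mathbf{Q}\mathbf{K}^\top)\mathbf{P}_\pi^\top$; the trailing $\mathbf{P}_\pi^\top\mathbf{P}_\pi = \mathbf{I}$ cancels against the permuted value matrix, leaving $\mathbf{P}_\pi\,\text{Attn}(\mathbf{Q},\mathbf{K},\mathbf{V})$.

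Next I would propagate equivariance through the remaining components of the MAB. The multi-head concatenation and output projection $\mathbf{W}^O$ act row-wise and so preserve the leading $\mathbf{P}_\pi$; the residual addition $\mathbf{X} + \text{MHSA}(\mathbf{X},\mathbf{X},\mathbf{X})$ preserves it because both summands carry the same $\mathbf{P}_\pi$; and crucially both $\text{rFFN}(\cdot)$ and layer normalization $\text{LN}(\cdot)$ are applied \emph{independently to each row}, hence they too commute with $\mathbf{P}_\pi$. Composing these, I obtain $\text{MAB}(\mathbf{P}_\pi\mathbf{X}) = \mathbf{P}_\pi\,\text{MAB}(\mathbf{X})$, which is exactly the claim $\mathbf{f}_{\textnormal{MAB}}(\pi(\mathbb{X})) = \pi(\mathbf{f}_{\textnormal{MAB}}(\mathbb{X}))$.

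I anticipate the main obstacle to be the softmax/permutation interchange step: one must be careful that the softmax in $\text{Attn}$ normalizes over the \emph{key} axis (columns of $\mathbf{Q}\mathbf{K}^\top$), so permuting agents simultaneously reindexes which row we read out \emph{and} which columns we sum over, and these two effects must be shown to be consistent via the $\mathbf{P}_\pi^\top\mathbf{P}_\pi = \mathbf{I}$ cancellation. A secondary subtlety worth stating explicitly is that layer normalization is equivariant only because it operates per-row (normalizing over the feature dimension $D$, not across agents); if it pooled statistics across the $M$ axis the result would fail, so I would flag this assumption where $\text{LN}$ is introduced.
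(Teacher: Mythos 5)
Your proof is correct, and it is considerably more explicit than what the paper actually provides: the paper states Lemma \ref{lemma:sab-perm-eq} without a formal proof, justifying it only by the informal remark in the preceding paragraph that MHSA is a content-based attention mechanism with no positional encoding injected along the $M$ axis, so that applying it to update every element of the set yields an equivariant map. Your argument rests on exactly the same underlying observation, but you actually carry out the computation: conjugating $\mathbf{QK}^\top$ by $\mathbf{P}_\pi$, commuting the row-wise softmax past the row and column permutations, cancelling $\mathbf{P}_\pi^\top\mathbf{P}_\pi = \mathbf{I}$ against the permuted value matrix, and then propagating the leading $\mathbf{P}_\pi$ through the head concatenation, the output projection, the residual connection, the rFFN, and LayerNorm. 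The two subtleties you flag --- that the softmax normalizes over the key axis so the row and column reindexings must be shown consistent, and that LN is equivariant only because it normalizes per row over the feature dimension rather than pooling across $M$ --- are genuine and are precisely the points the paper's informal justification glosses over. The only caution is notational: the lemma writes $\mathbf{f}_{\textnormal{MAB}}$ as ``applying MAB to each $\bm{x}_m$,'' which could be misread as an element-wise map, but your interpretation (a single MAB jointly attending over the set and emitting one updated vector per element) is the intended one and the one under which the statement is nontrivial. In short, the paper's treatment buys brevity; yours buys an actual proof.
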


Having discussed the preliminaries, we now show the permutation equivariance of \ourmodelmaNS's encoder.

\subsection{\ourmodelma Encoder Properties}

\begin{theorem}[\ourmodelma Encoder is Permutation Equivariant on $M$]
\label{thm:encoder}
Let $\mathbf{X}$ be a tensor of dimension $(K, M, t)$ where $K, M, t \in \mathbb{N}$, and where selecting on $t$ retrieves an invertible multiset of sequences, denoted $\mathbb{X}_\tau = \{\bm{x}_1, \ldots, \bm{x}_M\}$, where $\tau \in \{1, \ldots, t\}$ and $\bm{x}_m \in \mathbb{X}_\tau$ are $K$-dimensional. The \ourmodelma encoder, as defined in Section \ref{sec:autobot_enc}, is permutation equivariant on $M$. \end{theorem}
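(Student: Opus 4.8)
The plan is to decompose the encoder into its elementary operations, show that each is permutation equivariant on $M$, and then close the argument with a composition lemma. Concretely, a single pass of the \ourmodelma encoder consists of: (i) an embedding step in which $\text{rFFN}$ and the sinusoidal positional encoding $\text{PE}(\cdot)$ are applied to each agent matrix $\mathbf{X}_m$; (ii) a temporal step $\mathbf{S}_m = \text{MAB}(\text{rFFN}(\mathbf{X}_m))$ applied independently to each agent; and (iii) a social step $\mathbb{S}_\tau = \text{MAB}(\mathbb{S}_\tau)$ applied to each temporal slice across the $M$ agents. These maps are composed, and the composite is iterated $L_\text{enc}$ times.

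First I would establish a general fact: any map acting identically and independently on each agent slice — i.e. of the form $\mathbf{X} \mapsto (g(\mathbf{X}_1), \ldots, g(\mathbf{X}_M))$ for a fixed $g$ that does not depend on the agent index — is permutation equivariant on $M$. This follows directly from Definition \ref{def:perm-equi}, since applying $g$ slice-wise and then relabelling the slices by $\pi$ yields the same tensor as relabelling first and then applying $g$, because $g$ mixes no information across agents. This single observation covers operations (i) and (ii): the embedding $\text{rFFN}$, the addition of $\text{PE}(\cdot)$ along the $t$ axis (the same positional vector is added to every agent, so the addition is agent-independent), and the temporal $\text{MAB}$, which attends only within one agent's time axis.

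For the social step (iii), I would invoke Lemma \ref{lemma:sab-perm-eq} directly: the map $\mathbf{f}_{\textnormal{MAB}}$ that applies $\text{MAB}$ (Equation \ref{eq:MAB}) across the $M$ agents at a fixed timestep is permutation equivariant on $M$. Since the same permutation $\pi \in S_M$ acts on every temporal slice $\mathbb{S}_\tau$, equivariance is preserved across the whole tensor. I would then stitch the pieces together with a composition lemma: if $f$ and $h$ are each permutation equivariant on $M$, then so is $h \circ f$, because $h(f(\pi(\mathbf{X}))) = h(\pi(f(\mathbf{X}))) = \pi(h(f(\mathbf{X})))$. Applying this to the ordered composition of (i), (ii), (iii) and inducting on the $L_\text{enc}$ repetitions yields equivariance of the entire encoder.

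The main subtlety — and the step I expect to require the most care — is the bookkeeping between the two views of the tensor: the temporal operations are written over the agent matrices $\mathbf{X}_m$, whereas the social operation is written over the timestep multisets $\mathbb{X}_\tau$. One must verify that a single permutation $\pi \in S_M$ acts consistently in both indexings, so that the equivariance relations can legitimately be chained through the reshapings between the two perspectives. Once this consistency is checked, the remainder of the argument is a routine composition.
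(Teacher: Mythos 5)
Your proposal is correct and follows essentially the same route as the paper's proof: decompose the encoder into the agent-wise operations (rFFN, positional encoding, temporal MAB), which are equivariant because they act identically and independently on each agent slice, handle the social MAB via Lemma \ref{lemma:sab-perm-eq}, and close by composition of equivariant maps. Your additions — making the slice-wise fact explicit, the induction over the $L_\text{enc}$ repetitions, and the remark about consistent indexing across the two tensor views — are refinements of the same argument rather than a different approach.
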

\begin{proof}

What follows is a direct proof of the permutation equivariance of the \ourmodelma encoder. The encoder is a composition of three functions, $\mathbf{\text{rFFN}}$: $\mathbb{R}^{K \times t} \to \mathbb{R}^{d_K \times t}$, $\mathbf{f}_{\textnormal{MAB}}: \mathbb{R}^{d_k \times M} \to \mathbb{R}^{d_k \times M}$, and $\mathbf{f}_{\textnormal{MAB}}: \mathbb{R}^{d_K \times t \times M} \to \mathbb{R}^{d_K \times t \times M}$. 

First, we show that the embedding function $\mathbf{\text{rFFN}}$ is equivariant on $M$. Let $\mathbf{\text{rFFN}}$ represent the application of a function $\textit{e}$: $\mathbb{R}^K \to \mathbb{R}^{d_K}$ to each element of a set $\mathbb{X}_\tau$ where $\tau \in \{1, \ldots, t\}$. Then $\mathbf{\text{rFFN}}$ is equivariant with respect to $M$ by Definition \ref{def:perm-equi}. Specifically, we can see that the function $\mathbf{\text{rFFN}}$ satisfies the equation $\forall \pi \in S_{M}$, $\mathbf{\text{rFFN}}(\pi(\mathbb{X}_\tau)) = \pi(\mathbf{\text{rFFN}}(\mathbb{X}_\tau))$. In the \ourmodelma encoder, the function $\mathbf{\text{rFFN}}$ is applied to each set $\mathbb{X}_\tau$, and because it is permutation equivariant on $M$ for each, the application to each set represents a permutation equivariant transformation of the entire tensor.

Next, we show that the function $\textbf{f}_{\textnormal{MAB}}$, which applies a  Multi-Head Attention Block (MAB) to each set, $(\mathbb{X}_1, \ldots, \mathbb{X}_t)$, is equivariant on $M$. $\mathbf{f}_{\textnormal{MAB}}$ is permutation equivariant with respect to the $M$ dimension of $\mathbf{X}$ because MAB is permutation equivariant on each set $\mathbb{X}_\tau$ by Lemma \ref{lemma:sab-perm-eq}.

Finally, we define a function $\textbf{f}_{\textnormal{MAB}}$ which applies a positional encoding along the time dimension of $\mathbf{X}$ then a Multi-head Attention Block (MAB) to each matrix in $\mathbf{X}$, $\{\mathbf{X}_1, \ldots, \mathbf{X}_M\}$. $\textbf{f}_{\textnormal{MAB}}$ is permutation equivariant with respect to the $M$ dimension of $\mathbf{X}$ because the collection of matrices $\{\mathbf{X}_1, \ldots, \mathbf{X}_M\}$ is an unordered set, and the uniform application of a function to each element of this set satisfies permutation equvariance. 

A composition of equivariant functions is equivariant, so the encoder is equivariant on $M$. 
\end{proof}

\subsection{\ourmodelma Decoder Properties}
We now provide a direct proof the the permutation equivariance of the \ourmodelma decoder on $M$. The decoder is an auto-regressive function initialized with a seed set $\mathbb{X}_t$ containing $M$ vectors of dimension $d_k$, each concatenated with a vector of latent variables and other context of dimension $2 d_k$. For $\tau \in \{1, \ldots, T-t\}$ iterations, we concatenate the output of the decoder with the seed, the previous decoder output, and the latents to produce a tensor of dimension $(3 d_K, M, \tau)$.  In particular, the decoder is a composition of three functions, $\mathbf{\text{rFFN}}_{dec}$: $\mathbb{R}^{3 d_K \times \tau} \to \mathbb{R}^{d_K \times \tau}$, $\mathbf{f}_{\textnormal{MAB}}: \mathbb{R}^{d_k \times M} \to \mathbb{R}^{d_k \times M}$, and $\mathbf{f}_{\textnormal{MABD}}: \mathbb{R}^{d_K \times M \times \tau} \to \mathbb{R}^{d_K \times M \times \tau}$. 

\begin{theorem}[\ourmodelma Decoder is Permutation Equivariant on $M$]
Given an invertible multiset $\mathbb{X}_t$ with $M$ vector-valued elements, and an auto-regressive generator iterated for $\tau \in \{1, \ldots, T-t\}$, the \ourmodelma decoder, formed by a composition of three functions, $\mathbf{\text{rFFN}}_{dec}$: $\mathbb{R}^{3 d_K \times M} \to \mathbb{R}^{d_K \times M}$, $\mathbf{f}_{\textnormal{MAB}}: \mathbb{R}^{d_k \times M} \to \mathbb{R}^{d_k \times M}$, and $\mathbf{f}_{\textnormal{MABD}}: \mathbb{R}^{d_K \times \tau \times M} \to \mathbb{R}^{d_K \times \tau \times M}$, is equivariant on $M$. 
\end{theorem}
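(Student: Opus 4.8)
The plan is to mirror the structure of the encoder proof (Theorem~\ref{thm:encoder}) almost verbatim, since the decoder is likewise presented as a composition of an $\mathbf{\text{rFFN}}$ embedding, an intra-set $\mathbf{f}_{\textnormal{MAB}}$ acting across the $M$ dimension, and a cross-attention block $\mathbf{f}_{\textnormal{MABD}}$ acting across the time dimension. First I would establish that each of the three constituent functions is permutation equivariant on $M$, and then invoke the fact (already used in the encoder proof) that a composition of equivariant functions is equivariant. The definition of equivariance (Def.~\ref{def:perm-equi}) and Lemma~\ref{lemma:sab-perm-eq}, which gives equivariance of $\mathbf{f}_{\textnormal{MAB}}$ on $M$, are the workhorses here.

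For the embedding, I would argue exactly as in the encoder case: $\mathbf{\text{rFFN}}_{dec}$ applies a fixed function $\mathbb{R}^{3d_K}\to\mathbb{R}^{d_K}$ independently to each of the $M$ elements of the seed multiset $\mathbb{X}_t$, so a uniform, element-wise map trivially satisfies $\mathbf{\text{rFFN}}_{dec}(\pi(\mathbb{X}_t)) = \pi(\mathbf{\text{rFFN}}_{dec}(\mathbb{X}_t))$ for all $\pi \in S_M$ by Definition~\ref{def:perm-equi}. The intra-set social block $\mathbf{f}_{\textnormal{MAB}}: \mathbb{R}^{d_k\times M}\to\mathbb{R}^{d_k\times M}$ is equivariant on $M$ directly by Lemma~\ref{lemma:sab-perm-eq}, since it performs per-timestep attention among the $M$ set elements in exactly the manner covered by that lemma.

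The one step requiring genuine care is the cross-attention block $\mathbf{f}_{\textnormal{MABD}}: \mathbb{R}^{d_K\times \tau\times M}\to\mathbb{R}^{d_K\times \tau\times M}$, because it is not an intra-set operation across $M$ but a temporal decoder that attends, per agent, into the encoder context $\mathbf{C}$. The key observation is that the MABD is applied \emph{independently and identically to each agent slice} $\mathbf{H}_m$ (as in $\mathbf{H}'_m = \text{MABD}(\mathbf{H}_m,\mathbf{C}_m)$ in Section~\ref{sec:generating}): it attends over the time axis and over the context, but never mixes information across different values of $m$. Hence the collection $\{\mathbf{H}_1,\ldots,\mathbf{H}_M\}$ is an unordered set and $\mathbf{f}_{\textnormal{MABD}}$ is a uniform application of a single function to each member, which satisfies equivariance on $M$ by the same argument used for the final step of the encoder proof. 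The subtlety I would flag explicitly is that this relies on the context $\mathbf{C}$ itself being permuted consistently along $M$ (which holds, since the encoder is equivariant on $M$ by Theorem~\ref{thm:encoder}), so that permuting the input seed induces the matching permutation of the $(\mathbf{H}_m,\mathbf{C}_m)$ pairing. The main obstacle is therefore conceptual rather than computational: justifying that the autoregressive rollout preserves equivariance at every iteration $\tau$, which follows because at each step the three equivariant functions are composed and the newly produced outputs are concatenated along the feature axis (not the $M$ axis), keeping the per-agent-slice independence intact. With all three maps equivariant on $M$ and equivariance closed under composition, the decoder is permutation equivariant on $M$.
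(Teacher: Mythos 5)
Your proposal is correct and follows essentially the same route as the paper's proof: the same decomposition into $\mathbf{\text{rFFN}}_{dec}$, $\mathbf{f}_{\textnormal{MAB}}$, and $\mathbf{f}_{\textnormal{MABD}}$, with the first and third handled as uniform per-element applications over the unordered collection of agent slices, the second handled by Lemma~\ref{lemma:sab-perm-eq}, and the conclusion obtained by closure of equivariance under composition. The one place you go beyond the paper --- noting that $\text{MABD}(\mathbf{H}_m,\mathbf{C}_m)$ conditions on the per-agent context and that the pairing only permutes consistently because the encoder is itself equivariant on $M$ (Theorem~\ref{thm:encoder}) --- is a genuine subtlety the paper's proof leaves implicit, and is worth keeping.
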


\begin{proof}
First, we must establish that the function $\mathbf{\text{rFFN}}_{dec}$: $\mathbb{R}^{3 d_K \times M} \to \mathbb{R}^{d_K \times M}$ is equivariant with respect to $M$. The \ourmodelma decoder applies $\mathbf{\text{rFFN}}_{dec}$ to each set $\mathbb{X}_\tau$ where $\tau \in \{1, \ldots, T-t\}$, transforming the $M$ vectors of dimension $3 d_K \to d_k$. Because $\mathbf{\text{rFFN}}_{dec}$ represents a uniform application of $e_{dec}$ to each element of the set, we can see that it satisfies the definition of permutation equivariance, specifically that $\forall \pi \in S_{M}$, $\mathbf{\text{rFFN}}_{dec}(\pi(\mathbb{X}_\tau)) = \pi(\mathbf{\text{rFFN}}_{dec}(\mathbb{X}_\tau))$.

Next, we take the function $\mathbf{f}_{\textnormal{MABD}}$, which first applies a function $\textnormal{PE} : \mathbb{R}^{d_k, M, \tau} \to \mathbb{R}^{d_k, M, \tau}$ to $\mathbf{X}$ adding position information along the temporal dimension, then applies $f_{\textnormal{MABD}}: \mathbb{R}^{d_K \times \tau} \to \mathbb{R}^{d_K \times \tau}$ to each matrix $\{\mathbf{X}_1, \ldots, \mathbf{X}_M\}$. 
$\mathbf{f}_{\textnormal{MABD}}$ is permutation equivariant with respect to the $M$ dimension of $\mathbf{X}$ because the collection of matrices $\{\mathbf{X}_1, \ldots, \mathbf{X}_M\}$ is an unordered set, and the uniform application of a function to transform each element independently does not impose an order on this set.

Similar to the final step of the previous proof, we see that a function $\textbf{f}_{\textnormal{MAB}}$ applying a multi-head self-attention block (MAB) to each set $\{ \mathbb{X}_{t+1}, \ldots, \mathbb{X}_{\tau}\}$ is equivariant on $M$ because MAB is permutation equivariant on each set (see Lemma \ref{lemma:sab-perm-eq}).

The composition of equivariant functions is equivariant, so the decoder is equivariant.
\end{proof}

\section{Model Details}
\label{apdx:model-details}
\subsection{Ego-Centric Forecasting}
\label{sec:autobotego}

\vspace{-2mm}
\ourmodel can be used to predict the future trajectories of all agents in a scene. However, in some settings we may wish to forecast the future trajectory of only a single agent in the set.
We call this variant ``\ourmodelegoNS''.
Referring back to the encoder presented in Section \ref{sec:autobot_enc}, instead of propagating the encoding of all elements in the set, we can instead select the state of that element produced by the intra-set attention, $\bm{s} \in \mathbb{S}_\tau$.
With this, our encoder proceeds with the rest of the computation identically to produce the tensor $\mathbf{C}^m_{1:t}$, which is an encoding of the ego element's history conditioned on the past sequence of all elements.

\ourmodelegoNS's decoder proceeds as presented in section \ref{sec:generating}, with one exception.
Since \ourmodelego only generates the future trajectory of an ego element, we do not perform intra-set attention in the decoder.
As a consequence of only dealing with a single ego-agent in the future generation, we do not repeat the seed parameter matrices $\mathbf{Q}_i$ across the $M$ dimension, where $i \in \{1,\ldots,c\}$.
The objective function is updated to compute the likelihood of one element's future given the entire input sequence.
That is, in \ourmodelegoNS, $\mathcal{Y}=\mathbf{X}^{m}_{t+1:T}$.

\subsection{Expectation Maximization Details}
\label{priordist}
Our model also computes the distribution $P(Z|\mathbf{X}_{1:t})$ of discrete random variables.
To achieve this, we employ $c$ learnable vectors (one for each mode) concatenated into the matrix $\mathbf{P} \in \mathbb{R}^{(d_K, c)}$, which behave like seeds to compute distribution over modes prior to observing the future (these seed parameters should not be confused with the decoder seed parameters $\mathbf{Q}$).
The distribution is generated by performing the computations
\begin{equation*}
    \label{eq:prob_pred}
        p(Z|\mathbf{X}_{1:t}) = \text{softmax}(\text{rLin}(\mathbf{F})), 
        \quad \text{where} \quad \mathbf{F} = \text{MABD}(\mathbf{P}_{1:c}, \mathbf{C}),
\end{equation*}
where rLin is a row-wise linear projection layer to a vector of size $c$.

\subsection{Context Encoding Details}
In the nuScenes dataset, we are provided with a birds-eye-view $128\times128$ RGB image of the road network, as shown in Figure \ref{autobot_cnn}.
In the TrajNet dataset, we are also provided with a birds-eye-view of the scene, as shown in Figure \ref{fig:trajnet-map}.
We process this additional context using a 4 layer convolutional neural network (CNN) which encodes the map information into a volume of size $7\times7\times(7*c)$ where $c$ is the number of modes.
We apply a $2$D dropout layer with a rate of $0.1$ on this output volume before processing it further.
As our model employs discrete latent variables, we found it helpful to divide this volume equally among all $c$ modes, where each mode receives a flattened version of the $7\times7\times7$ volume.
As described in Section \ref{sec:generating}, this context is copied across the $M$ and $T$ dimensions, and concatenated with the decoder seed parameters during the sequence generation process.
Intuitively, each mode's generated trajectory is conditioned on a different representation of the context.

\begin{SCfigure}
\includegraphics[width=.55\linewidth]{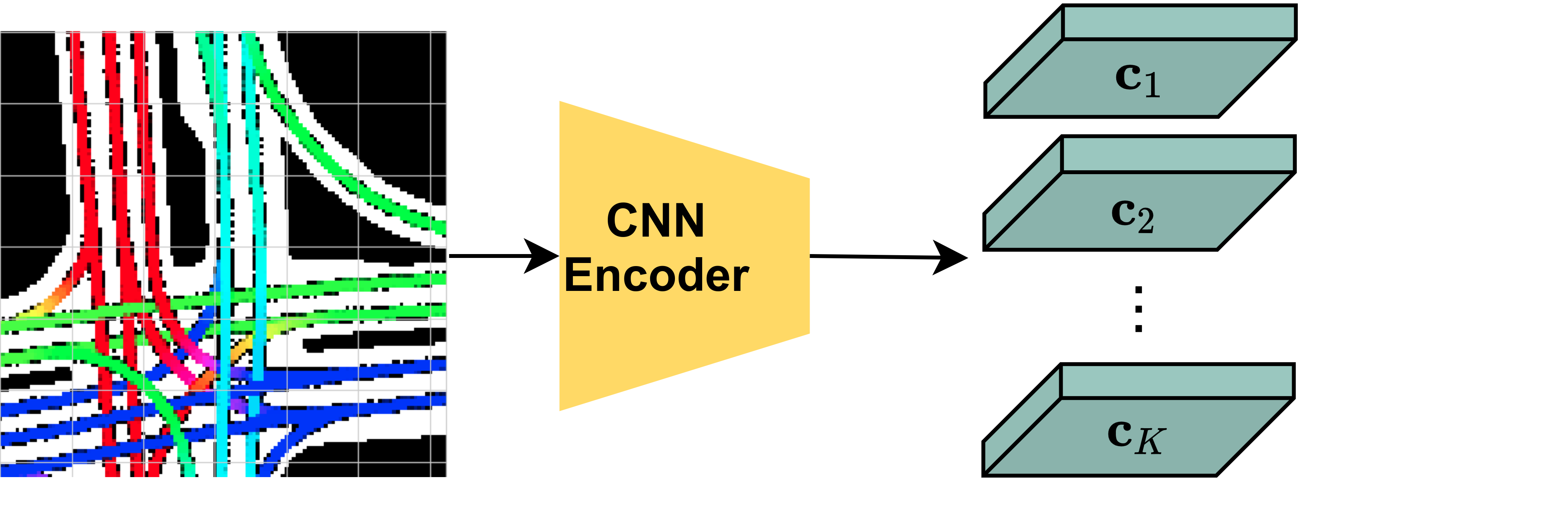}
\caption{\textbf{\ourmodel Context Encoder.} Example of the birds-eye-view road network provided by the Nuscenes dataset. Our CNN encoder produces a volume which we partition equally into $K$ modes. This allows each generated trajectory to be conditioned on a different representation of the input image.}
\label{autobot_cnn}
\end{SCfigure}

\begin{SCfigure}
\includegraphics[angle=90, width=.4\linewidth]{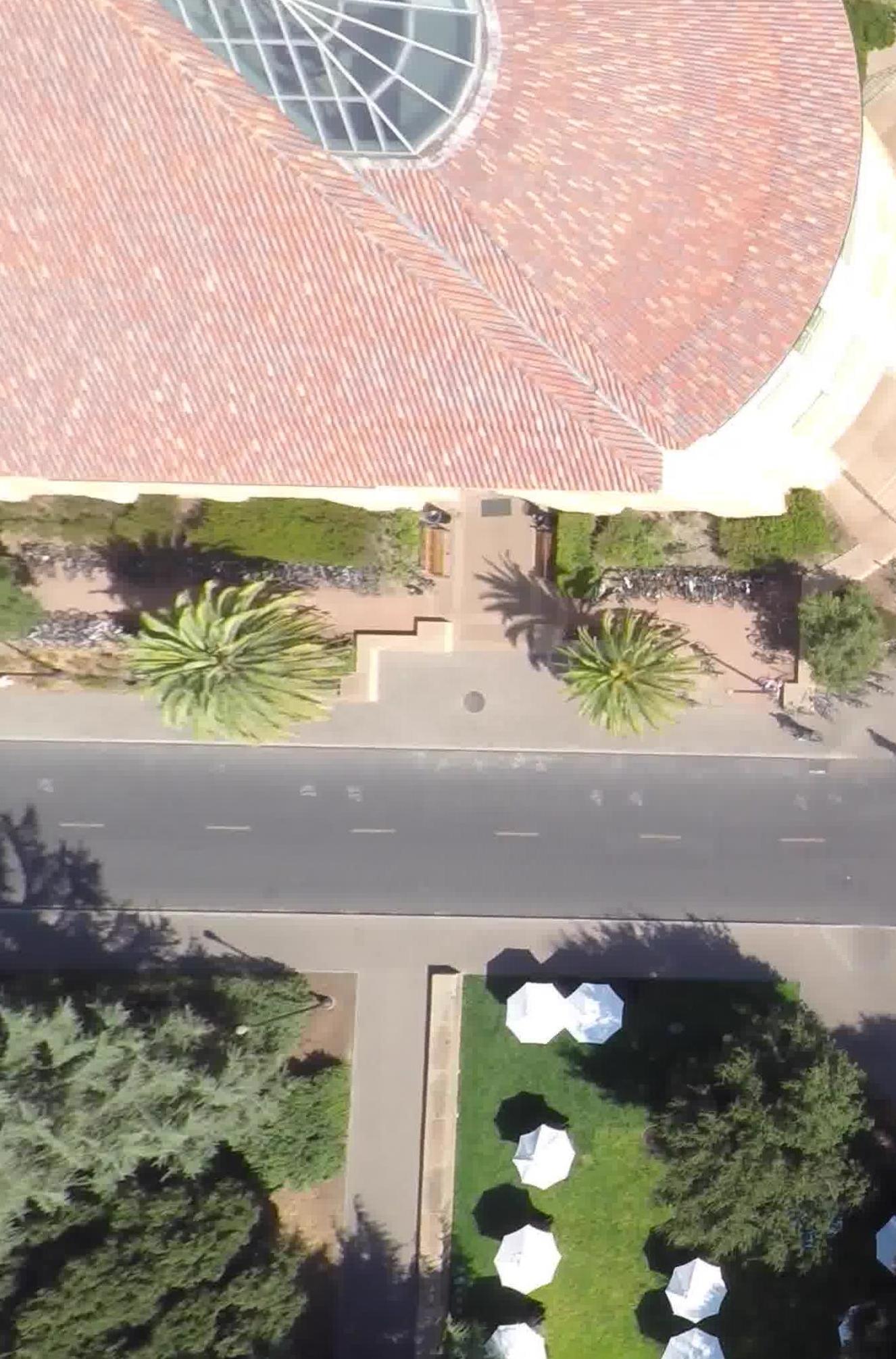}
\caption{\textbf{ TrajNet Map Example.} Example birds-eye-view map provided by the TrajNet benchmark. The footage was captured by a drone and pedestrians and bicycles are freely moving through the scene.}
\label{fig:trajnet-map}
\end{SCfigure}

\subsection{Implementation and Training details}

\begin{table}[htbp]
    \resizebox{0.91\linewidth}{!}{

  \makebox[\textwidth][c]{

\begin{tabular}{@{}llllllll@{}}
\toprule
\textbf{Parameter} & \multicolumn{1}{c}{\textbf{Description}} & \multicolumn{1}{l}{\textbf{Toy}} & \multicolumn{1}{l}{\textbf{OmniGlot}} & \multicolumn{1}{l}{\textbf{Nuscenes}} & \multicolumn{1}{l}{\textbf{Argoverse}} & \multicolumn{1}{l}{\textbf{TrajNet}} & \multicolumn{1}{l}{\textbf{TrajNet$++$}} \\
\midrule
$d_k$ & \begin{tabular}[c]{@{}l@{}}Hidden dimension throughout all \\ parts of the model.\end{tabular} & 64 & 128 & 128 & 128 & 128 & 128 \\ 
Learning Rate & Learning rate used in Adam Optimizer. & 1e-4 & 5e-4 & 7.5e-4 & 7.5e-4 & 7.5e-4 & 5e-4 \\
Batch size & Batch size used during training. & 6 & 64 & 64 & 64 & 64 & 64 \\ 
c & Number of discrete latent variables. & 10 & 4 & 10 & 6 & 5 & 6 \\ 
$\lambda_e$ & Entropy regularization strength. & 3.0 & 1.0 & 30.0 & 30.0 & 5.0 & 30.0 \\
Dropout & Amount of dropout used in MHSA functions. & 0.0 & 0.2 & 0.1 & 0.1 & 0.3 & 0.05 \\
$L_{enc}$ & \begin{tabular}[c]{@{}l@{}}Number of stacked social/Temporal blocks \\in the encoder. \end{tabular} & 1 & 1 & 2 & 2 & 2 & 2 \\
$L_{dec}$ & \begin{tabular}[c]{@{}l@{}}Number of stacked social/Temporal blocks \\ in the decoder.\end{tabular} & 1 & 1 & 2 & 2 & 2 & 2 \\
H & Number of attention heads in all MHSAs. & 8 & 8 & 16 & 16 & 8 & 16 
\\ \bottomrule
\end{tabular}
}}
\caption{Hyperparameters used for \ourmodel across all four datasets.}
\label{tab:training_details}
\end{table}

We implemented our model using the Pytorch open-source framework.
Table \ref{tab:training_details} shows the values of parameters used in our experiments across the different datasets.
The MAB encoder and decoder blocks in all parts of \ourmodel use dropout.
We train the model using the Adam optimizer with an initial learning rate.
For the Nuscenes and TrajNet$++$ datasets, we anneal the learning rate every $10$ epochs for the first $20$ epochs by a factor of $2$, followed by annealing it by a factor of $1.33$ every $10$ epochs for the next $30$ epochs.
For the Argoverse dataset, we anneal the learning rate every $5$ epochs by a factor $2$ for the first $20$ epochs.
In all datasets, we found it helpful to clip the gradients to a maximum magnitude of $5.0$.
For the Nuscenes experiments, our model takes approximately $80$ epochs to converge, which corresponds to approximately $3$ hours of compute time on a single Nvidia Geforce GTX 1080Ti GPU, using approximately $2$ GB of VRAM.
We used mirroring of all trajectories and the map information as a data augmentation method since the dataset's scenes originate from Singapore and Boston.
For Argoverse experiments, our model takes approximately $30$ epochs to converge, which corresponds to approximately $10$ hours on the same resources.
In order to improve the performance, we use other agents in the certain scenes as an ego-agent, effectively doubling the size of the dataset.
We chose agents that move by at least $10$ meters.
This, in-turn, increases the training time to approximately $24$ hours.
For the TrajNet, TrajNet$++$ and Omniglot datasets, our model takes approximately $100$ epochs which corresponds to a maximum $2$ hours of compute time on the same resources.

\subsection{Performance - Comparing Seed Parameter Decoder with Autoregressive Decoder}
\label{appdx:seed_vs_ar_speed}

One of the main architectural contributions of our work compared to our contemporaries is the use of seed parameters in the decoder network to allow for one-shot prediction of the entire future trajectory (compared to the established way of rolling out the model autoregressively timestep by timestep). To this end, we created an ablation of our model that predicts the future timesteps autoregressively and we compare its performance to ours across different variables (number of agents, input sequence length, output sequence length, and number of modes).
Figure \ref{fig:inf_time} shows the inference rate (i.e. full trajectory predictions per second) of \ourmodelego (top) and \ourmodelma (bottom).
We can see that in the single-agent case (\ourmodelego), both the seed parameter version of the model as well as the autoregressive ablation are able to operate in real-time ($\geq 30$ FPS), save for when using a prediction horizon longer than 20 steps.
We chose 30 FPS as a real-time indicator line since most cameras used in self-driving cars output at least 30 FPS \citep{Yeong2021SensorAS}; Lidar data is usually slower to retrieve at 5-20 FPS.
The performance difference is more obvious when comparing \ourmodelma to its ablation. 
The autoregressive model is barely able to perform at 30 FPS when the number of agents is low, the number of input steps is very low, the predicted future steps are under 20, and the number of modes is low.

\begin{figure}[tb]
    \includegraphics[width=\linewidth]{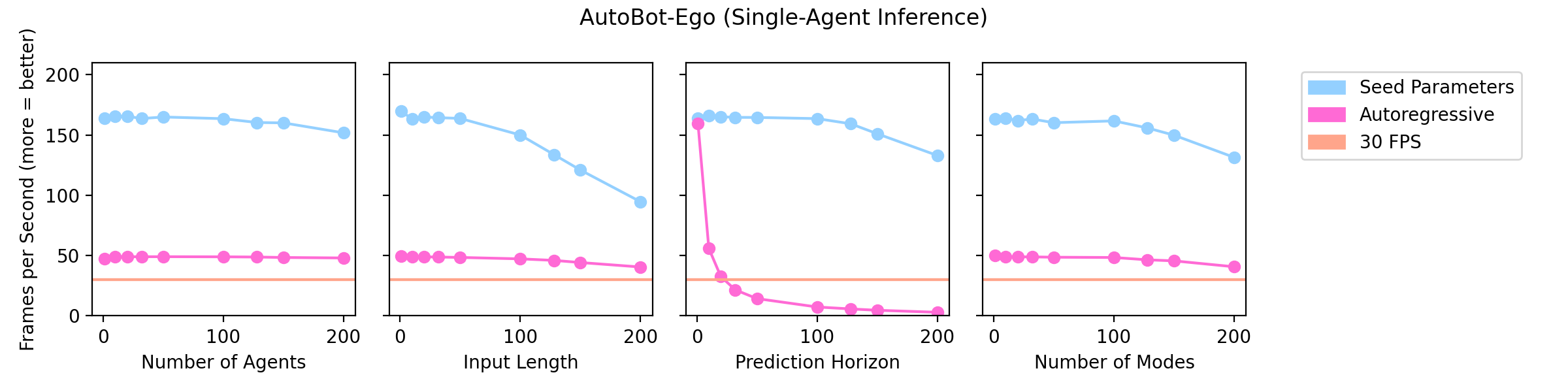}
    \includegraphics[width=\linewidth]{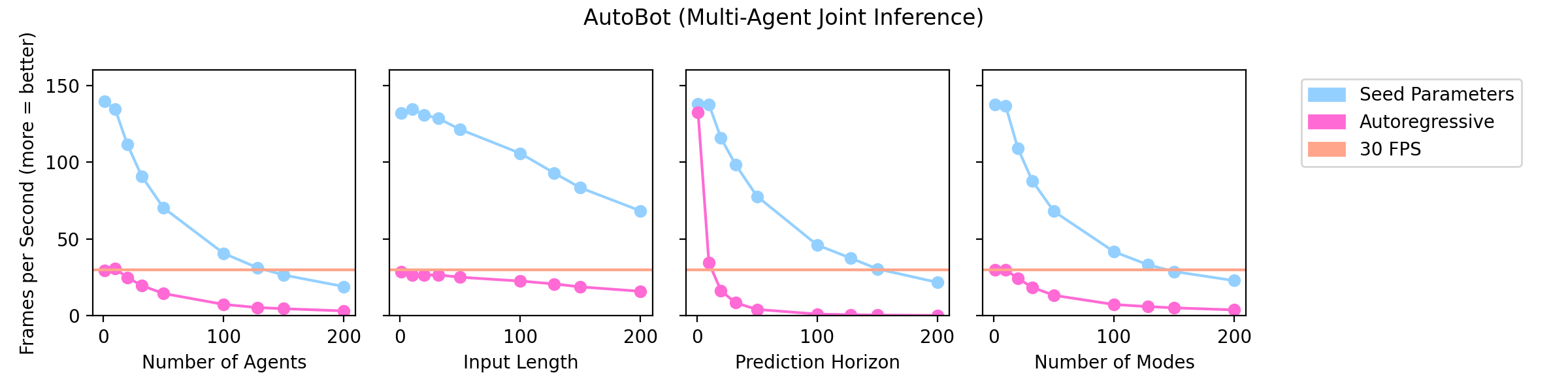}
    \vspace{-.5cm}
    \caption{\small \textbf{\ourmodelNS's Inference Rate} To show the impact of using a seed parameter decoder as opposed to an autoregressive decoder, we compare the inference rate of \ourmodel with its autoregressive but otherwise identical ablation. We plot how the inference rate varies with increasing size of input/output variables. From left-to-right, these variables correspond to 1) number of agents, 2) number of observable timesteps for each agent, 3) prediction horizon, and 4) number of modes. The top row corresponds to \ourmodelego while the bottom row corresponds to \ourmodelmaNS.
    }
    \label{fig:inf_time}
\end{figure}



\newpage
\section{\ourmodel Additional Results}

\subsection{Ablation Study on Nuscenes}
\label{sec:nuscnes_abl}

Table \ref{tab:nuscene_abl} shows the results of ablating various components from \ourmodelego on the Nuscenes dataset.
As we can see, the model trained without the map information performs significantly worse across all metrics.
This shows that indeed \ourmodelego learns to attend to the map information effectively.
The `No Latent Variable' version of our model is a model without the latent variable, and therefore, without the decoder seed parameters.
This model then becomes auto-regressive and samples from the output bivariate Gaussian distribution at every timestep.
We note that this version of the model is significantly slower to train (taking approximately three times as long) due to its autoregressive nature.
In order to generate ten trajectories, the generation process is repeated $c=10$ times.
This model uses the same transformer encoder and decoder blocks as described in Section \ref{sec:method}.
As we can see from its performance on the ADE metrics, without latent variables, the model is not capable of generating diverse enough trajectories.
This is expected since the output model $\phi$ is now tasked with accounting for both the aleatoric and epistemic uncertainties in the process.
Finally, the `No Data Augmentation' model shows that although it is helpful to augment the data with mirroring, even without it the model is still capable of producing strong performance.

\begin{table*}[htb]
\centering
\begin{tabular}{llllll}
\toprule
\textbf{\ourmodelegoNS} & \textbf{\begin{tabular}[c]{@{}l@{}}Min ADE \\ (5)\end{tabular}} & \textbf{\begin{tabular}[c]{@{}l@{}}Min ADE\\ (10)\end{tabular}} & \textbf{\begin{tabular}[c]{@{}l@{}}Miss Rate\\ Top-5 (2m)\end{tabular}} & \textbf{\begin{tabular}[c]{@{}l@{}}Miss Rate\\ Top-10 (2m)\end{tabular}} & \textbf{\begin{tabular}[c]{@{}l@{}}Off Road\\ Rate\end{tabular}} \\ \midrule
No Map Information & 1.88 & 1.35 & 0.72 & 0.58 & 0.22 \\ 
No Latent Variable & 1.77 & 1.34 & 0.76 & 0.64 & 0.07 \\
No Data Augmentation & 1.47 & 1.08 & \textbf{0.66} & 0.47 & \textbf{0.03} \\
\ourmodelegoNS & \textbf{1.43} & \textbf{1.05} & \textbf{0.66} & \textbf{0.45} & \textbf{0.03} \\ \bottomrule
\end{tabular}
\caption{\textbf{NuScenes Ablation Study.}. We study the performance benefits of \ourmodelegoNS over various ablated counterparts.}
\label{tab:nuscene_abl}
\end{table*}

\subsection{Analysis of Entropy Regularization}
\label{sec:toy}

This dataset contains the trajectories of a single particle having identical pasts but diverse futures, with different turning rates and speeds, as shown in Figure \ref{fig:toy} (top row).
In the second row of Figure \ref{fig:toy}, we show that our model trained without entropy regularization, $\lambda_e=0.0$, can cover all modes.
However, the short trajectory variants are all represented using only one of the modes (2nd row, 3rd column), as shown by the growth of the uncertainty ellipses as the trajectory moves forward.
This shows that \ourmodel learn to increase the variance of the bivariate Gaussian distribution in order to cover the space of trajectories. The third row of Figure \ref{fig:toy}, shows the learned trajectories for this variant trained with $\lambda_e=3.0$.
As we can see, the entropy regularization penalizes the model if the uncertainty of bivariate Gaussians is too large.
By pushing the variance of the output distribution to a low magnitude, we restrict the model such that the only way it can achieve high likelihood is to minimize the distance between the mean of the bivariate distributions and ground-truth trajectories.

\begin{SCfigure}[][htbp]
    \centering
    \includegraphics[width=0.6\textwidth]{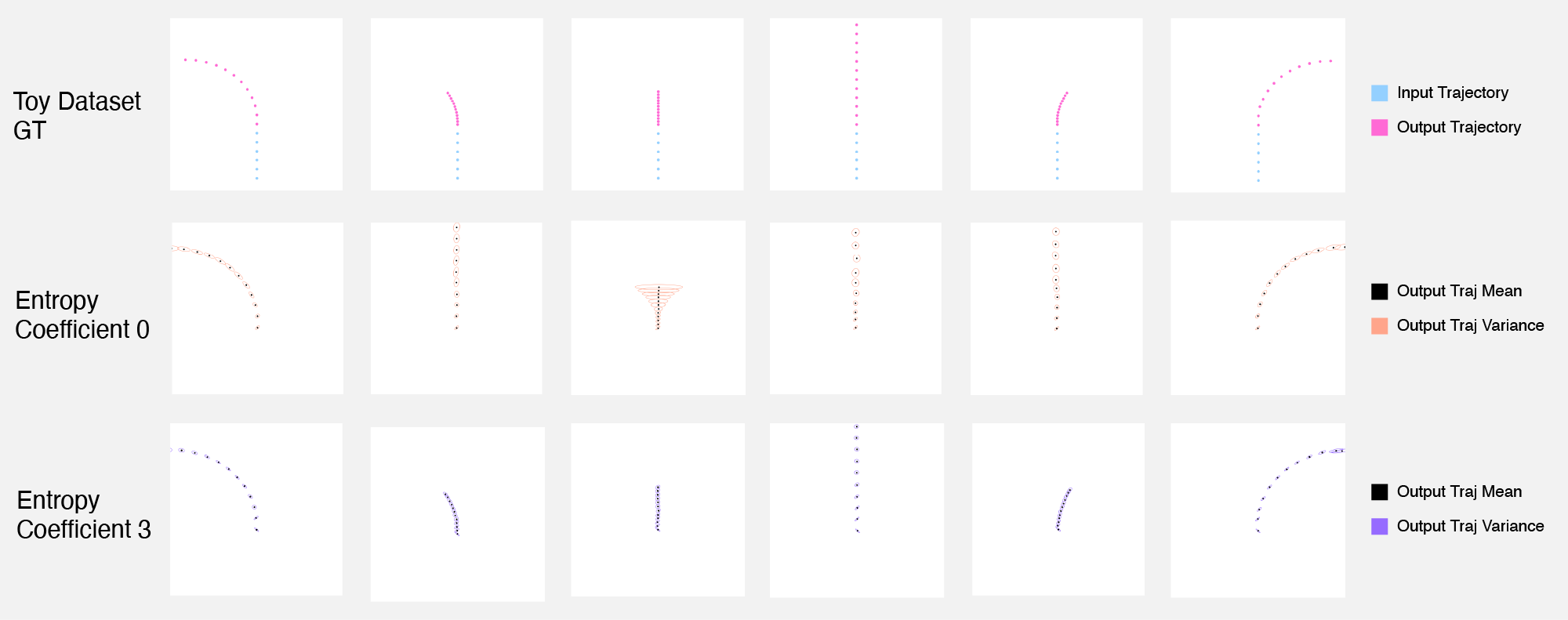}
    \caption{\small \textbf{Influence of entropy loss on particle experiment.} The input trajectory (cyan, only shown in top row) is identical in all cases. The model trained without an entropy loss term (middle row) covers all modes with high variance, while the model with moderate entropy regularization  (bottom row) is able to learn the modes with low variance and without overlap.}
    \label{fig:toy}
\end{SCfigure}

\textbf{Ablation Study - Importance of Entropy Loss Term}. In Table \ref{tab:nuscene_abl_entropy}, we show an ablation study to evaluate the importance of the entropy component on training the model for nuScenes. 
Again, we observe that for multi-modal predictions, enforcing low entropy on the predicted bivariate distributions has a great positive impact on the results, with an increased $\lambda_e$ resulting in better performance. 
This was observed on all evaluated metrics with the exception of the min FDE, which was poor regardless of the model.
Furthermore, we can see from Table \ref{tab:nuscene_results} that, in general, all prior models have a difficulty obtaining strong performance on the min FDE (1) metric, with the best value having an overall error of $8.49$ meters.
This may be due to the difficulty of predicting the correct mode when one only has access to two seconds of the past sequence.

\begin{table*}[htb]
\centering
\begin{tabular}{lllllll}
\toprule
\textbf{$\lambda_e$} & \textbf{\begin{tabular}[c]{@{}l@{}}Min ADE \\ (5)\end{tabular}} & \textbf{\begin{tabular}[c]{@{}l@{}}Min ADE\\ (10)\end{tabular}} & \textbf{\begin{tabular}[c]{@{}l@{}}Miss Rate\\ Top-5 (2m)\end{tabular}} & \textbf{\begin{tabular}[c]{@{}l@{}}Miss Rate\\ Top-10 (2m)\end{tabular}} & \textbf{\begin{tabular}[c]{@{}l@{}}Min FDE\\ (1)\end{tabular}} & \textbf{\begin{tabular}[c]{@{}l@{}}Off Road\\ Rate\end{tabular}} \\ \midrule
0 & 1.75 & 1.29 & 0.67 & 0.58 & 9.55 & 0.10 \\ 
1 & 1.71 & 1.22 & 0.65 & 0.57 & 9.03 & 0.08 \\
5 & 1.70 & 1.14 & 0.62 & 0.53 & 9.11 & 0.05 \\
30 & 1.75 & 1.14 & 0.63 & 0.54 & 9.51 & 0.04 \\
40 & 1.72 & 1.11 & 0.60 & 0.51 & 9.01 & 0.04 \\ \bottomrule
\end{tabular}
\caption{\textbf{NuScenes Quantitative Results by Entropy Regularization}. We study \ourmodelegoNS's performance when varying the entropy regularization term on nuScenes, and see that increased entropy improves model performance (particularly Min ADE 10 and Off Road Rate) up to a limit.}
\label{tab:nuscene_abl_entropy}
\end{table*}

\subsection{TrajNet$++$ Ablation and Qualitative results}
\label{sec:trajnetpp_qual}

Table \ref{tab:trajnetpp_ped_abl_num_enc_dec} shows the effect of increasing the number of social/temporal encoder and decoder layers in the general \ourmodelma architecture.
We can see that increasing the number of layers in the decoder has a positive impact on the performance of the joint metrics.
This is expected since as we increase the number of social layers, the model can use high-order interactions to model the scene.
Figure \ref{fig:trajnetpp_qualitative} shows an example scene generated by each of the model variants.

\begin{table}[hbtp]
\centering
\begin{tabular}{lllll}
\toprule
\textbf{Model} & \textbf{\begin{tabular}[c]{@{}l@{}}Ego Agent's \\ Min ADE(6) ($\downarrow$) \end{tabular}} & \textbf{\begin{tabular}[c]{@{}l@{}}Number Of \\ Collisions ($\downarrow$)\end{tabular}} & \textbf{\begin{tabular}[c]{@{}l@{}}Scene-level \\ Min ADE(6) ($\downarrow$) \end{tabular}} & \textbf{\begin{tabular}[c]{@{}l@{}}Scene-level \\ Min FDE(6) ($\downarrow$) \end{tabular}} \\ \midrule
$L_{enc}=1,L_{dec}=1$ \quad & 0.108 & 273 & 0.152 & 0.281 \\
$L_{enc}=2,L_{dec}=1$ \quad & 0.102 & 214 & 0.138 & 0.254 \\
$L_{enc}=3,L_{dec}=1$ \quad & 0.099 & 227 & 0.135 & 0.245 \\
$L_{enc}=1,L_{dec}=2$ \quad & 0.099 & 164 &  0.132 & 0.248 \\
$L_{enc}=1,L_{dec}=3$ \quad & 0.091 & 123 & 0.121 & 0.225 \\
$L_{enc}=2,L_{dec}=2$ \quad & 0.095 & 139 & 0.128 & 0.234 \\
$L_{enc}=3,L_{dec}=3$ \quad & 0.090 & 98 & 0.119 & 0.216 \\ \bottomrule
\end{tabular}
\caption{\small TrajNet$++$ ablation studies for a multi-agent forecasting scenario. We investigate the impact of the number of social/temporal layers in the encoder and the decoder.}
\label{tab:trajnetpp_ped_abl_num_enc_dec}
\end{table}

\begin{figure*}[htb]
    \centering
    \vspace{-.2in}
    \includegraphics[width=0.9\textwidth]{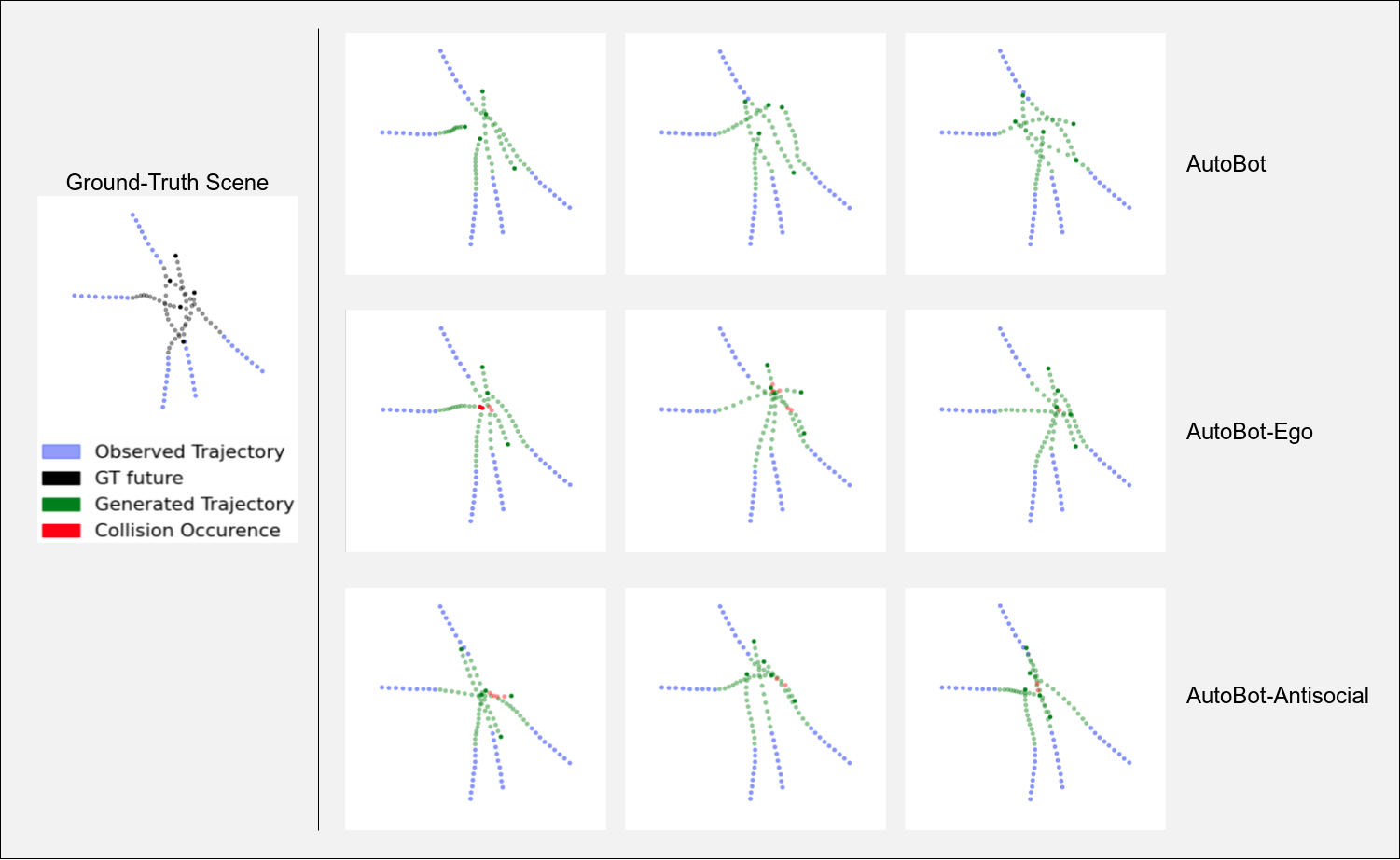}
    \caption{\small \textbf{TrajNet$++$ qualitative results.} In this example scene, we see five agents moving towards each other with their input trajectories (blue) and their ground-truth future trajectories (black) shown in the left figure. The different rows show three of the $c=6$ scene predictions made by each model variant. \ourmodelego and \ourmodelmaNS-AntiSocial produce some modes containing collisions or socially inconsistent trajectories, while \ourmodelmaNS, which has social attention in the decoder, achieves consistent trajectories.}
    \label{fig:trajnetpp_qualitative}
\end{figure*}

\subsection{TrajNet, A Pedestrian Motion Prediction Dataset}
\label{sec:trajnet}

We further validate \ourmodel on a pedestrian trajectory prediction dataset TrajNet~\citep{sadeghiankosaraju2018trajnet} where we are provided with the state of all agents for the past eight timesteps and are tasked with predicting the next $12$ timesteps.
In this experiment, we apply our general architecture (\ourmodelmaNS) to a multi-agent scene forecast situation, and ablate various the social components in the decoder only, and decoder and encoder.

\vspace{-.25cm}
\begin{table}[hbtp]
\centering
\begin{tabular}{llll}
\toprule
\textbf{Model} & \textbf{\begin{tabular}[c]{@{}l@{}}Number Of \\ Collisions\end{tabular}} & \textbf{\begin{tabular}[c]{@{}l@{}}Scene-level \\ Min ADE(5)\end{tabular}} & \textbf{\begin{tabular}[c]{@{}l@{}}Scene-level \\ Min FDE(5)\end{tabular}} \\ \midrule
\ourmodelmaNS-AntiSocial & 474 & 0.571 & 1.02\\
\ourmodelegoNS & 466 & 0.602 & 1.07 \\ 
\ourmodelmaNS & \textbf{326} & \textbf{0.471} & \textbf{0.833} \\ \bottomrule
\end{tabular}
\caption{\textbf{TrajNet ablation studies for a multi-agent forecasting scenario.} We evaluate the impact of using the extra social
data as a set and the actual multi-agent modelling on this context. We found that \ourmodelma is able to cause fewer collisions between agents compared to variants without the social attention.}
\label{tab:ped_ablation}
\end{table}

In Table \ref{tab:ped_ablation}, we present three variants of the model.
The first, \ourmodelmaNS-AntiSocial corresponds to a model without the  intra-set attention functions in the encoder and decoder (see Figure \ref{fig:architecture}).
The second variant, \ourmodelegoNS, only sees the past social context and has no social attention during the trajectory generation process.
The third model corresponds to the general architecture, \ourmodelmaNS.
All models were trained with $c=5$ and we report scene-level error metrics as defined by \citep{casas2020implicit}.
Using the social data with the multi agent modelling in the decoder greatly reduces the number of collisions between predicted agent futures.
Furthermore, we can see that \ourmodelma significantly outperforms its ablated counterparts on generating scenes that match ground-truth, as observed by the reduced scene-level minADE and minFDE.
We expect this theoretically since the ego-centric formulation makes the independence assumption that the scene's future evolution can be decomposed as the product of the individual agent's future motion in isolation, as remarked in \citep{casas2020implicit}.
\ourmodelma does not make this assumption as we condition between agents at every timestep.

Figure \ref{fig:trajnet} shows an example of multi-agent trajectory generation using the three different approaches for predicting the scene evolution.
We present four of the five modes generated by the model for this example input scene.
One can observe that the only model achieving scene-consistent predictions across all modes is \ourmodelma.
Interestingly, we also observe that the different modes correspond to different directions of motion, highlighting the utility of using modes.
In fact, the modes in \ourmodelma condition the entire scene future evolution, producing alternative realistic multi-agent futures.
We refer the reader to Figures \ref{fig:trajnet_add1}, \ref{fig:trajnet_add2}, \ref{fig:trajnet_add3} and \ref{fig:trajnet_add4}  where we show some additional qualitative results of \ourmodelNS' variants compared to \ourmodelma on the TrajNet dataset.
These results highlight the effectiveness of the sequential set attention mechanism in the decoding process of our method and how important it is to predict all agents in parallel compared to a 1-by-1 setup.

\begin{figure*}[htb]
    \centering
    \vspace{-.2in}
    \includegraphics[width=0.9\textwidth]{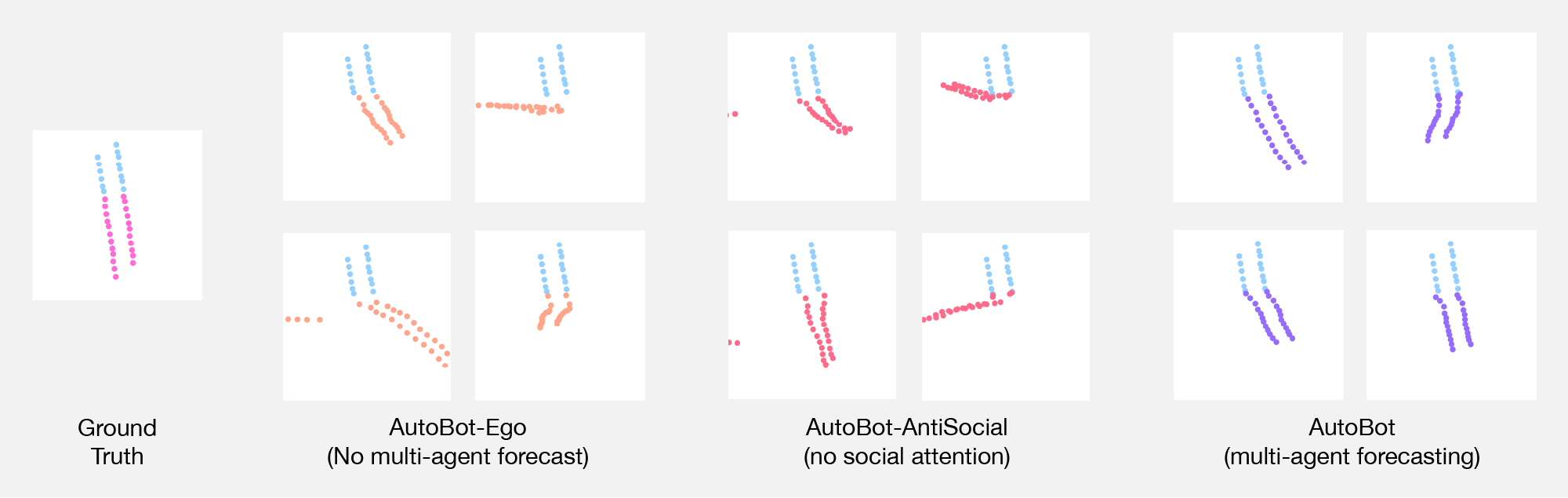}
    \caption{\small \textbf{TrajNet qualitative results.} In this example, we see two agents moving together in the past (cyan) and future (pink, left). We compare how different variants of \ourmodel  make predictions in this social situation. \ourmodelego and \ourmodelmaNS-AntiSocial produce some modes containing  collisions or unrealistic trajectories, while \ourmodelmaNS, which has intra-set attention during decoding, achieves consistent trajectories.}
    \label{fig:trajnet}
\end{figure*}

\begin{figure*}[tb]
    \centering
    \includegraphics[width=0.8\textwidth]{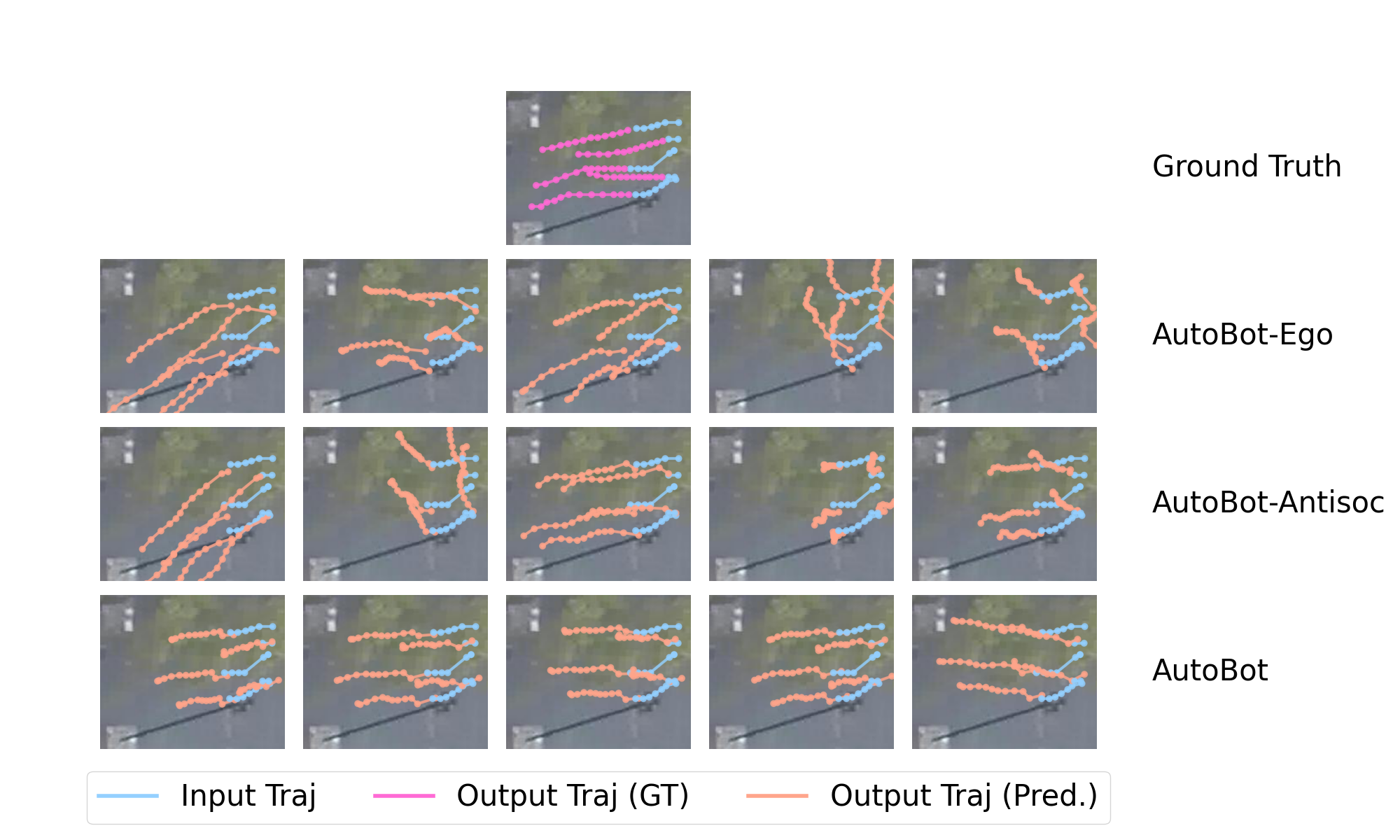}
    \caption{\textbf{TrajNet qualitative results 1/4. } Example scene with multiple agents moving together. These trajectories are plotted over the birds-eye-view image of the scene where we zoom into interesting trajectories. We can see that only \ourmodelma produces trajectories that are realistic in a group setting across all modes.}
    \label{fig:trajnet_add1}
\end{figure*}

\begin{figure*}[tb]
    \centering
    \includegraphics[width=0.8\textwidth]{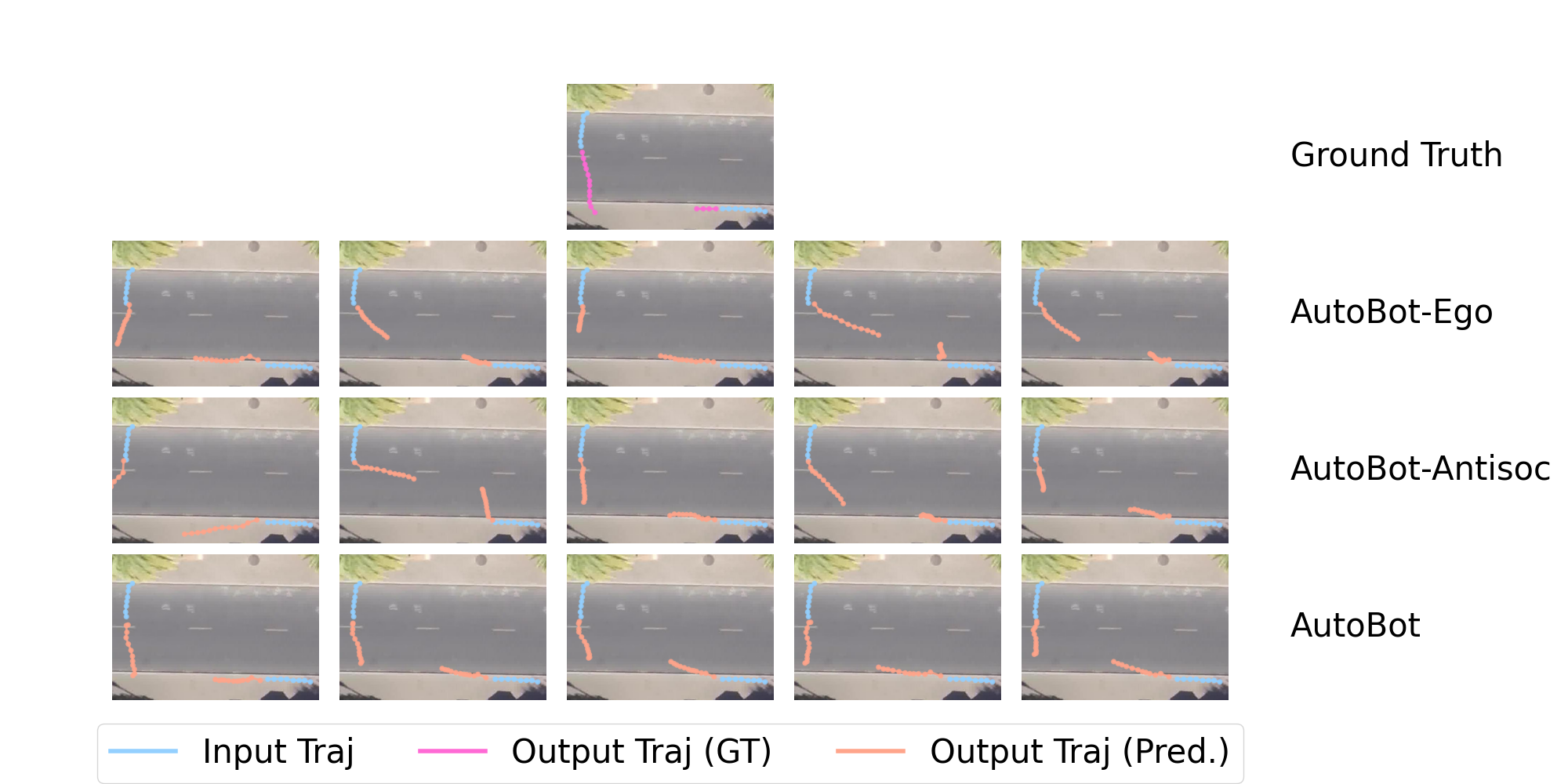}
    \caption{\textbf{TrajNet qualitative results 2/4.} Example scene with two agents moving separately in a road setting. We want to highlight this interesting scenario where some modes of \ourmodelNS-Solo and \ourmodelNS-AntiSocial results in trajectories that lead into the road, while \ourmodelma seems to produces trajectories more in line with the ground-truth, and lead the agent to cross the road safely.}
    \label{fig:trajnet_add2}
\end{figure*}

\begin{figure*}[tb]
    \centering
    \includegraphics[width=0.8\textwidth]{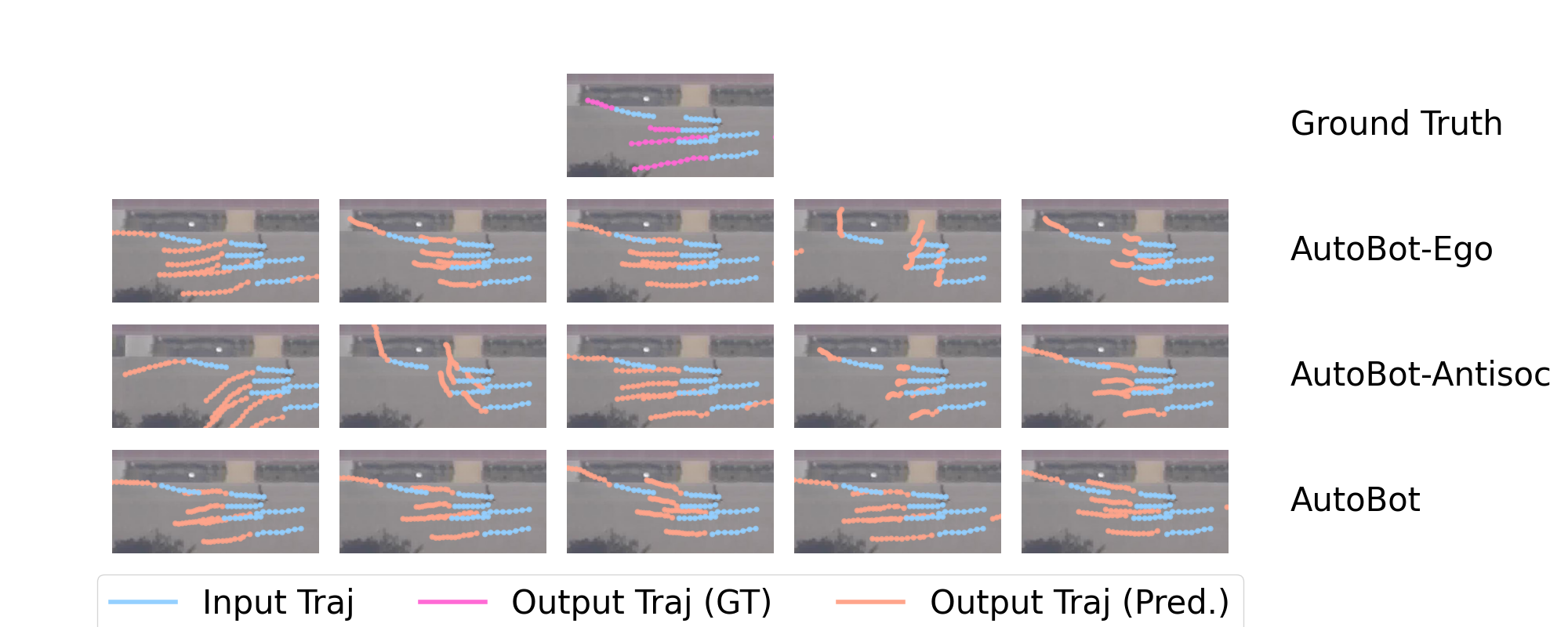}
    \caption{\textbf{TrajNet qualitative results 3/4.} Additional example scenes with multiple agents moving together. Again, we wish to highlight the advantage of modelling the scene jointly, which is evident by the results of \ourmodelmaNS.}
    \label{fig:trajnet_add3}
\end{figure*}

\begin{figure*}[tb]
    \centering
    \includegraphics[width=0.8\textwidth]{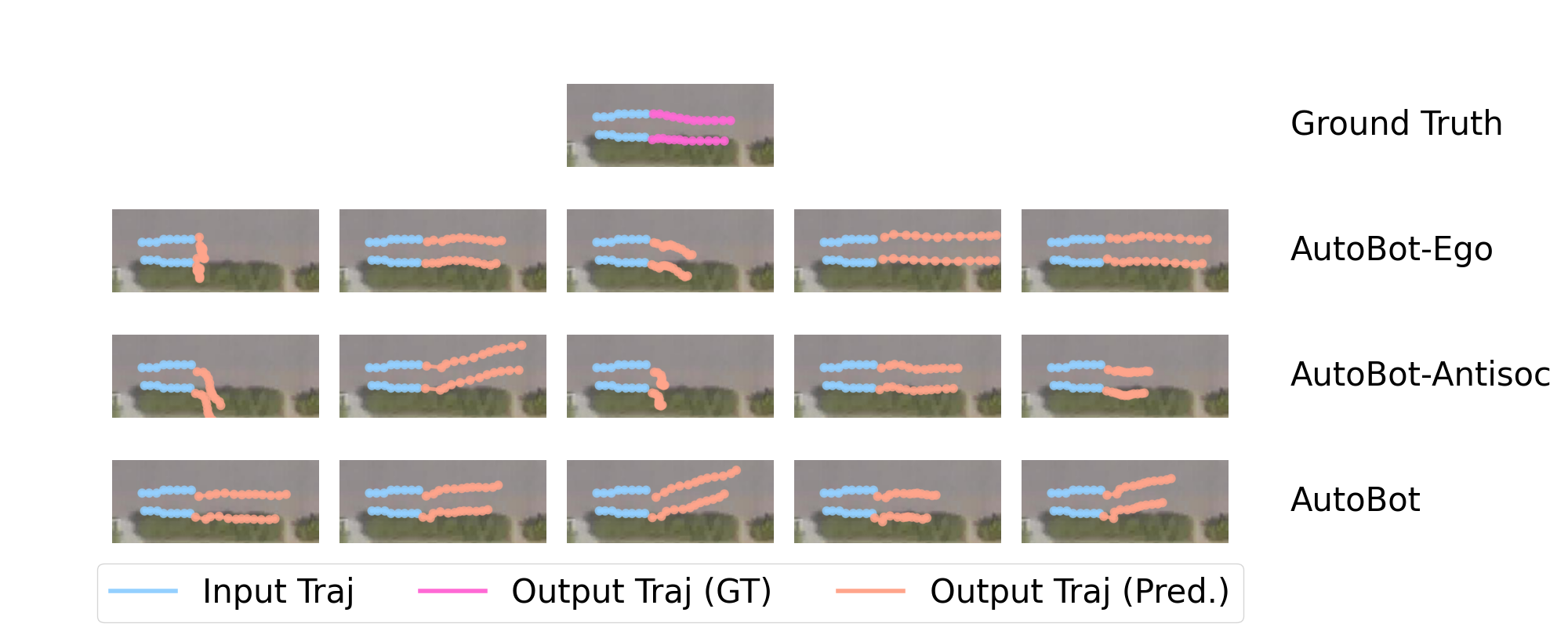}
    \caption{\textbf{TrajNet qualitative results 4/4.} Example scenes with two agents moving together. Again, we see that \ourmodelma produces trajectories consistent with the scene  across all modes (e.g., not crashing into the bushes) and maintains the social distance between the walking agents.}
    \label{fig:trajnet_add4}
\end{figure*}

\subsection{Omniglot Additional Details and Results}
\label{sec:apdx-omniglot-details}

The LSTM baseline used in our Omniglot experiments have a hidden size of 128 and use $c=4$ latent modes.
The input is first projected into this space using an embedding layer, which is identical to the one used in \ourmodelmaNS.
All input sequences of sets are encoded independently using a shared LSTM encoder.
During the decoding phase, we autoregressively generate the next stroke step using an LSTM decoder.
In order to ensure consistency between the predicted future strokes, inspired by Social LSTM~\citep{alahi2016social}, the LSTM baseline employs an MHSA layer at every timestep operating on the hidden state of the different strokes making up the character.
We concatenate a transformation of the one-hot vector representing the latent mode with the socially encoded hidden state at every timestep.
The output model is identical to the one used \ourmodelmaNS, generating a sequence of bivariate Gaussian distributions for each stroke.
We performed a hyperparameter search on the learning rate and found the optimal learning rate to be $5e-4$.
Furthermore, as with all our other experiments, we found it helpful to employ gradient clipping during training.

We provide additional results on the two tasks defined in Section \ref{sec:omniglot}.
Figure \ref{fig:omnipendix-v1-good} shows additional successful \ourmodelma results on task 1 (completing multiple strokes) compared to an LSTM baseline equipped with social attention.
These results highlight the effectiveness of sequential set transformers for generating consistent diverse futures given an input sequence of sets.
Figure \ref{fig:omnipendix-v1-fail} shows examples where both models fail to correctly complete the character in task 1.
Figure \ref{fig:omnipendix-v2} compares \ourmodelego with the LSTM baseline on predicting a new stroke given the two first strokes of a character (task 2).
These results highlight that although not all modes predict the correct character across all modes, all generated predictions are consistent with realistic characters (i.e., confusing ``F" with ``$\Pi$" or ``H" with ``$\Pi$").

\begin{figure*}[tb]
    \centering
    \includegraphics[width=\omniwid\textwidth]{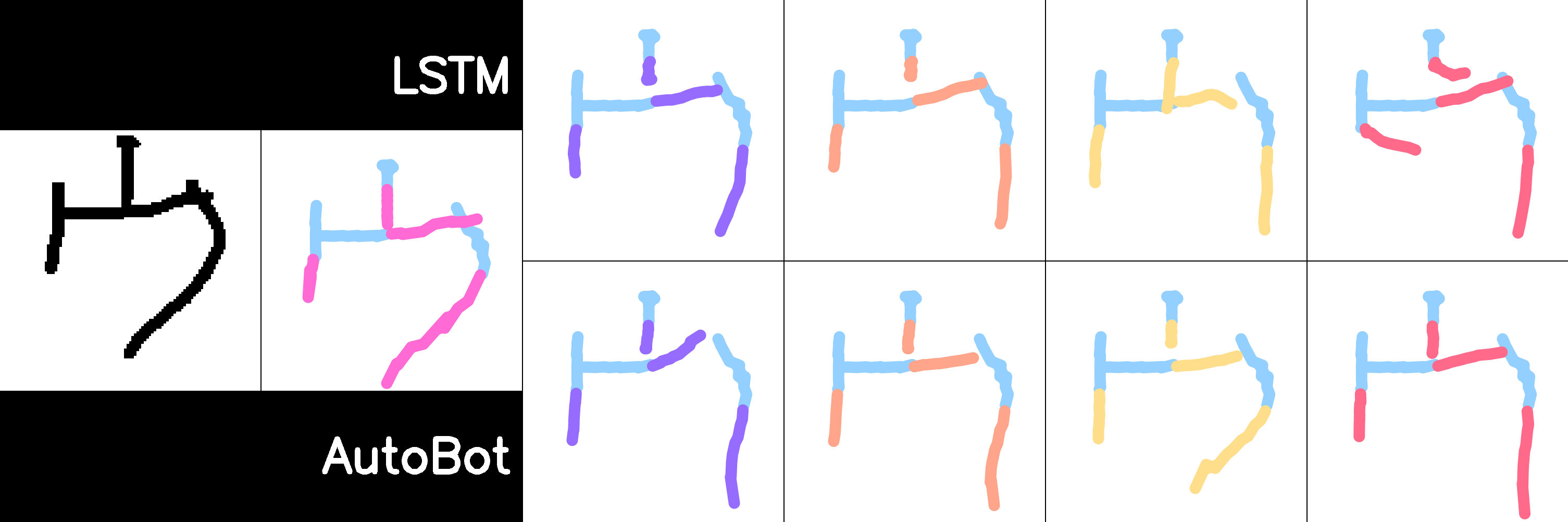} \floline
    \vspace{\omnivspace}\includegraphics[width=\omniwid\textwidth]{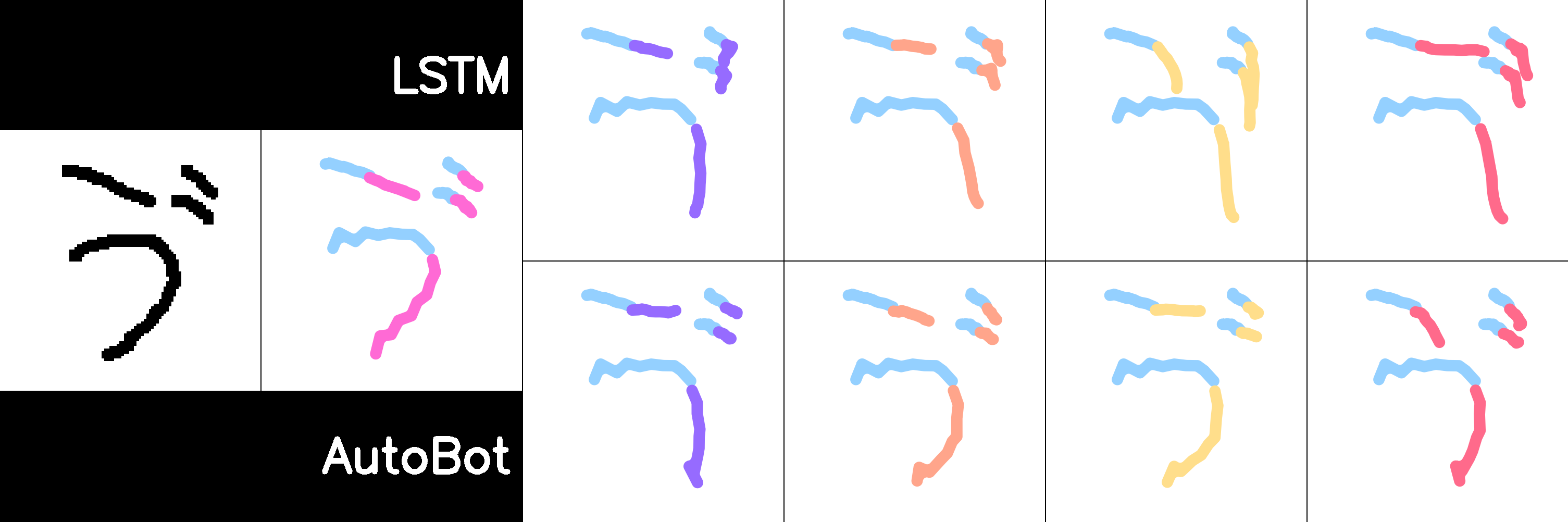} \floline
    \vspace{\omnivspace}\includegraphics[width=\omniwid\textwidth]{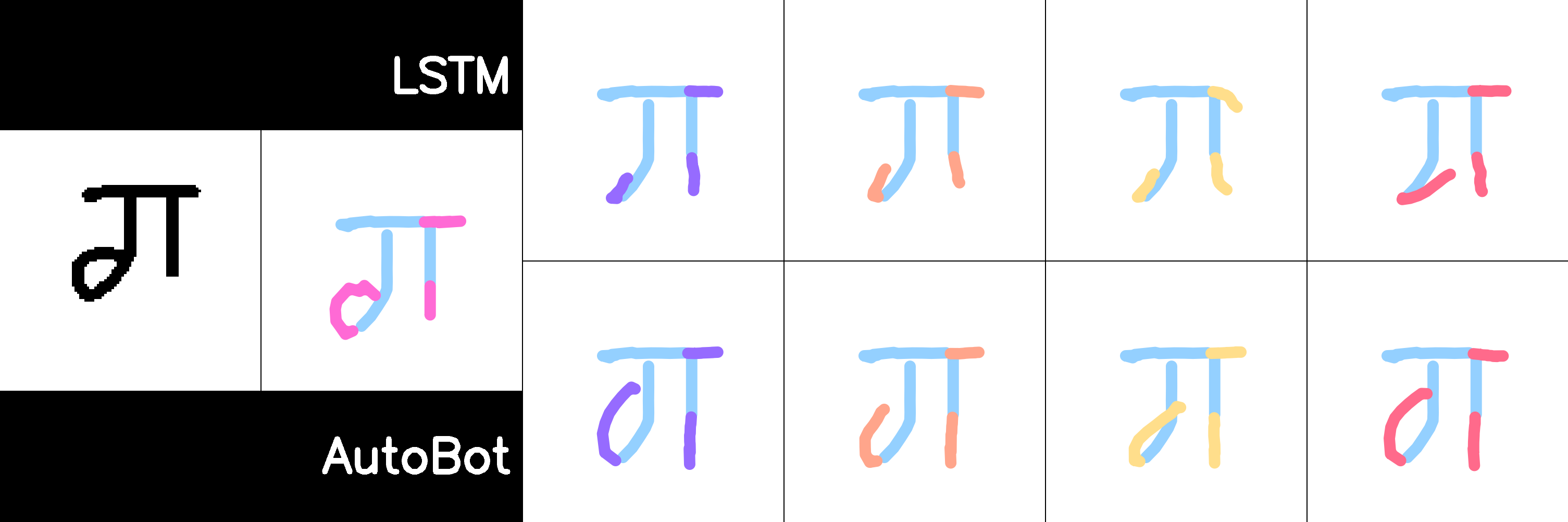} \floline
    \vspace{\omnivspace}\includegraphics[width=\omniwid\textwidth]{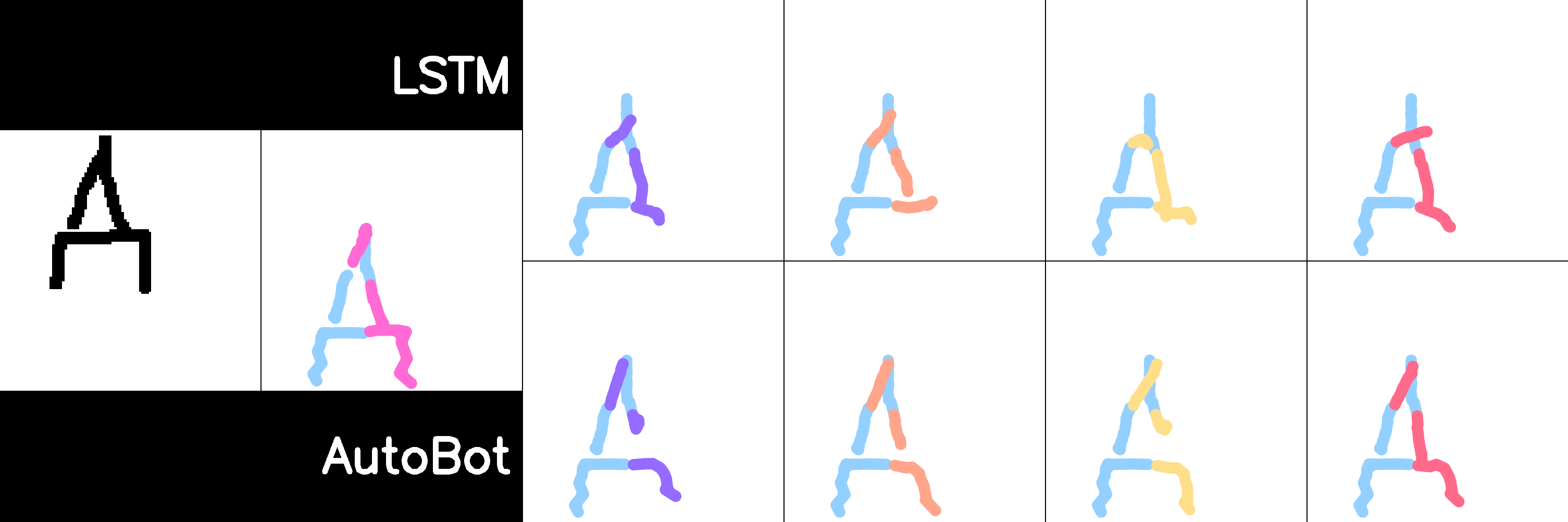} \floline
    \vspace{\omnivspace}\includegraphics[width=\omniwid\textwidth]{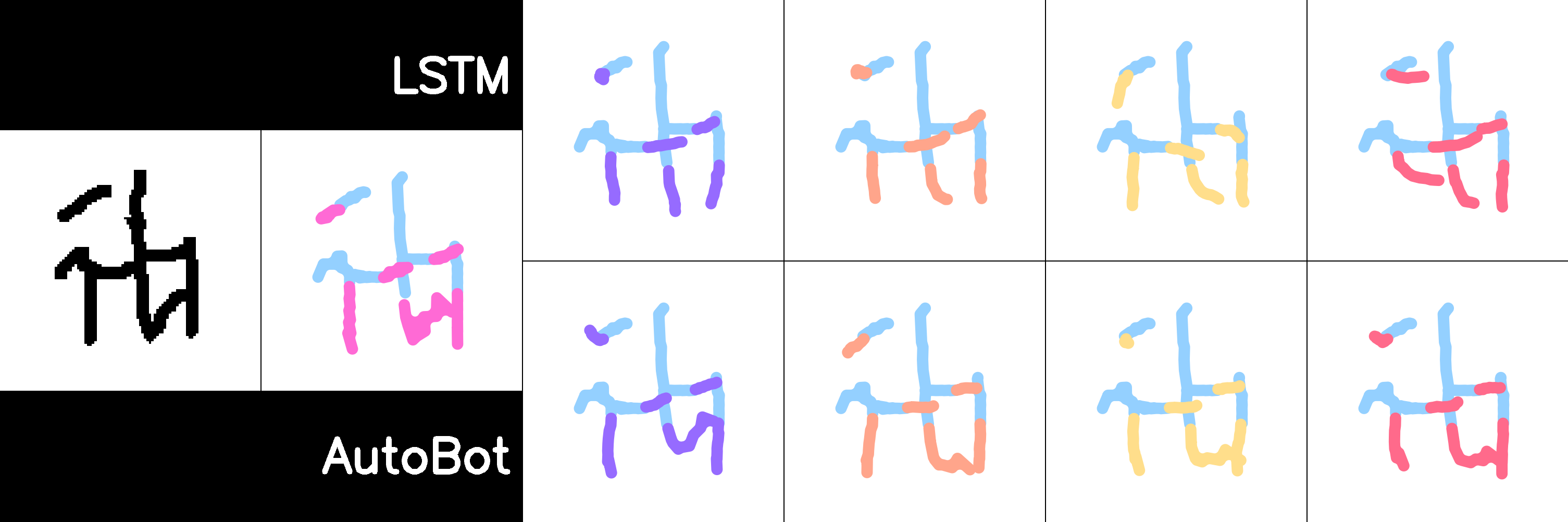} \floline
    \vspace{\omnivspace}\includegraphics[width=\omniwid\textwidth]{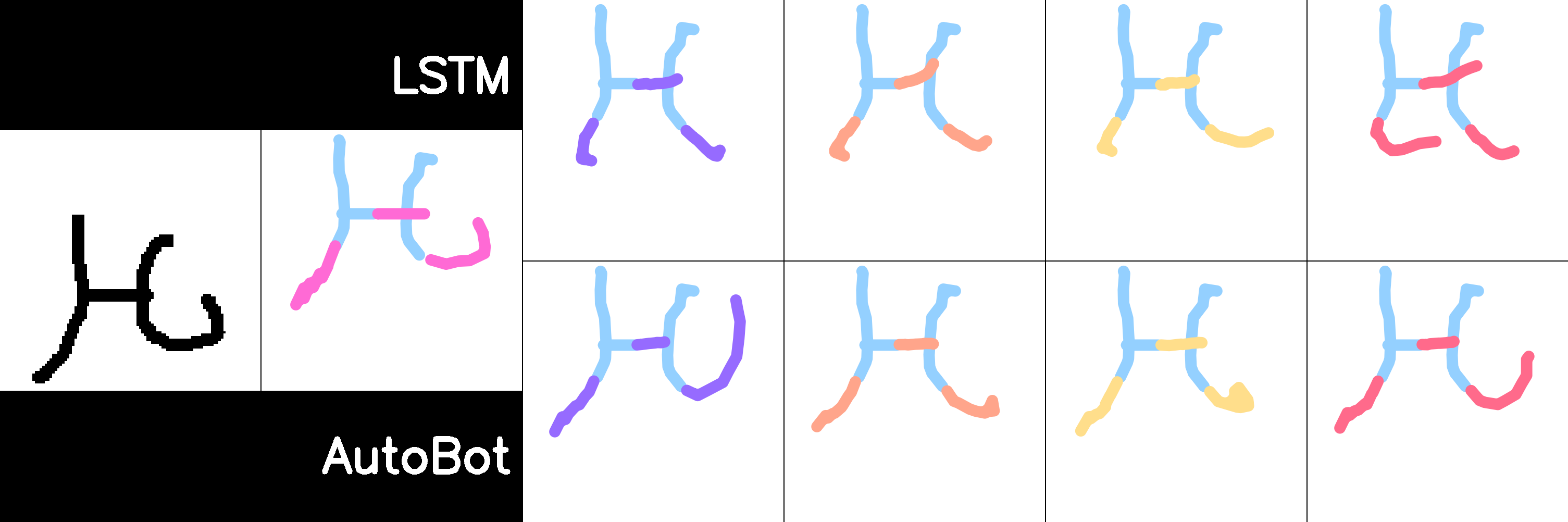}\\ 
    \caption{\textbf{Omniglot Task 1 Additional Results}. These are some additional random characters from the Omniglot stroke completion task. We can see that \ourmodelma produces plausible characters on the test set, while the LSTM struggles to create legible ones.}
    \label{fig:omnipendix-v1-good}
\end{figure*}

\begin{figure*}[tb]
    \centering
    \includegraphics[width=0.6\textwidth]{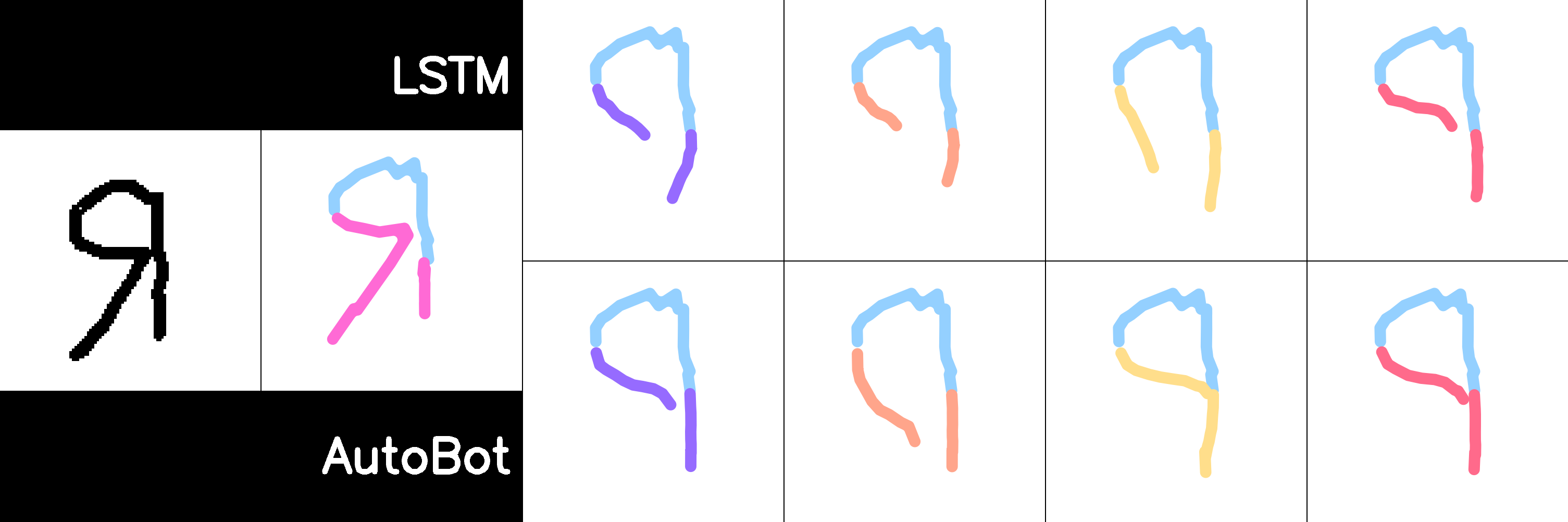} \floline
    \vspace{\omnivspace}\includegraphics[width=\omniwid\textwidth]{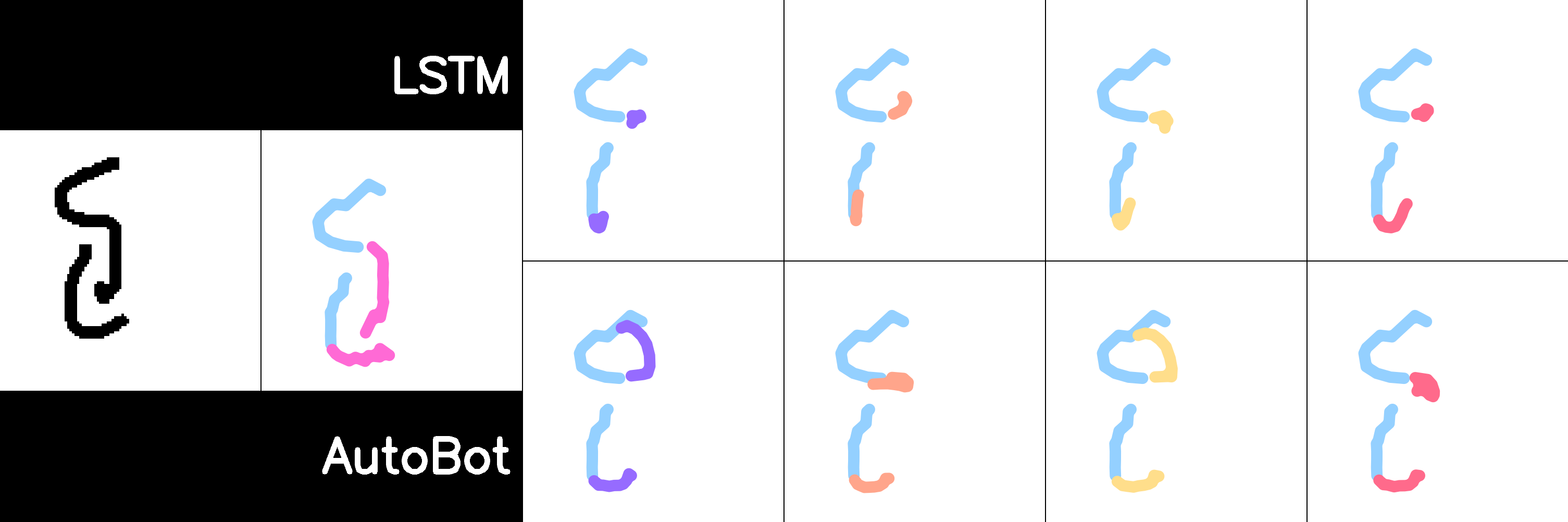} \floline
    \vspace{\omnivspace}\includegraphics[width=\omniwid\textwidth]{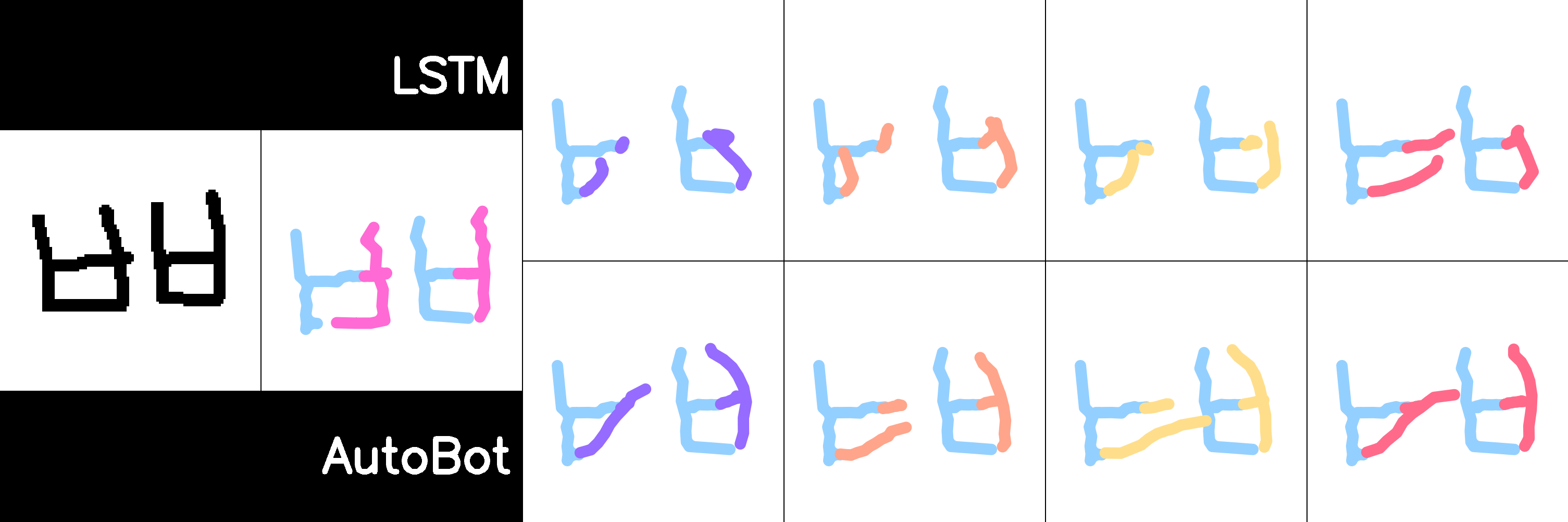}%
    \caption{\textbf{Omniglot Task 1 Failure Cases}. There were some characters where \ourmodelma failed to learn the correct character completion. We can currently only speculate why that is the case. Our first intuition was that this might occur when there are 90 degree or less angles in the trajectory that is to be predicted but in Fig.~\ref{fig:omnipendix-v1-good}, we can see that there are examples where this does not seem to be a problem. We believe that this might be due to a rarity of specific trajectories in the dataset but more investigation is required.}
    \label{fig:omnipendix-v1-fail}
\end{figure*}

\begin{figure*}[tb]
    \centering
    \includegraphics[width=0.6\textwidth]{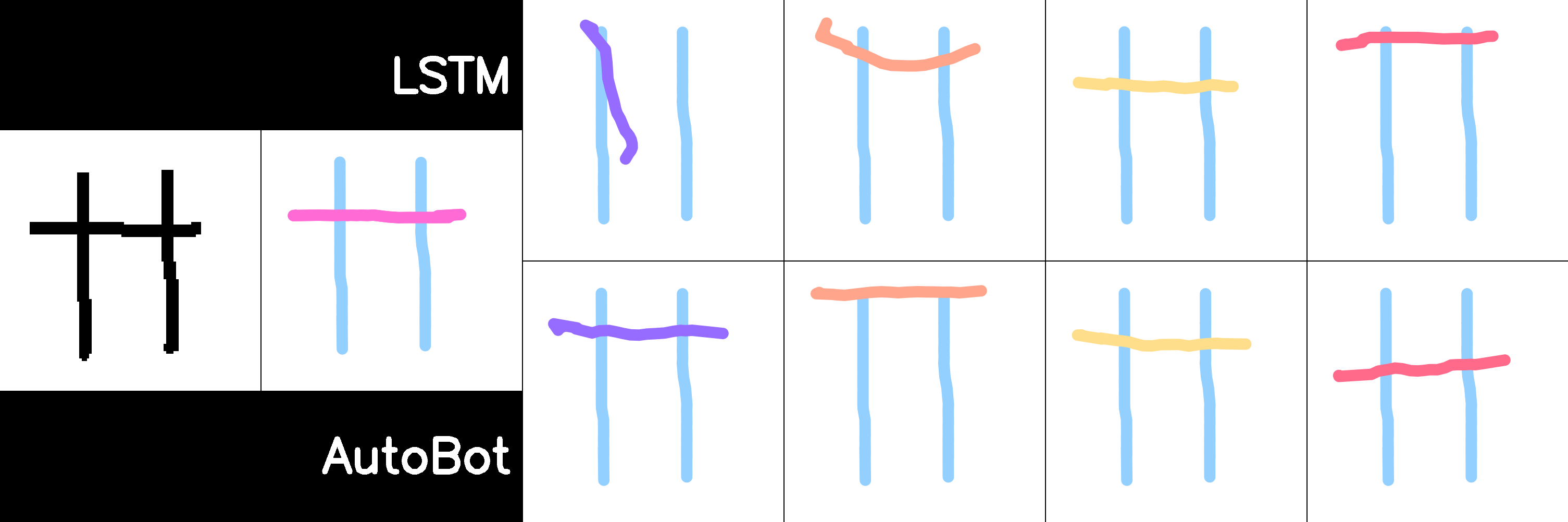} \floline
    \vspace{\omnivspace}\includegraphics[width=\omniwid\textwidth]{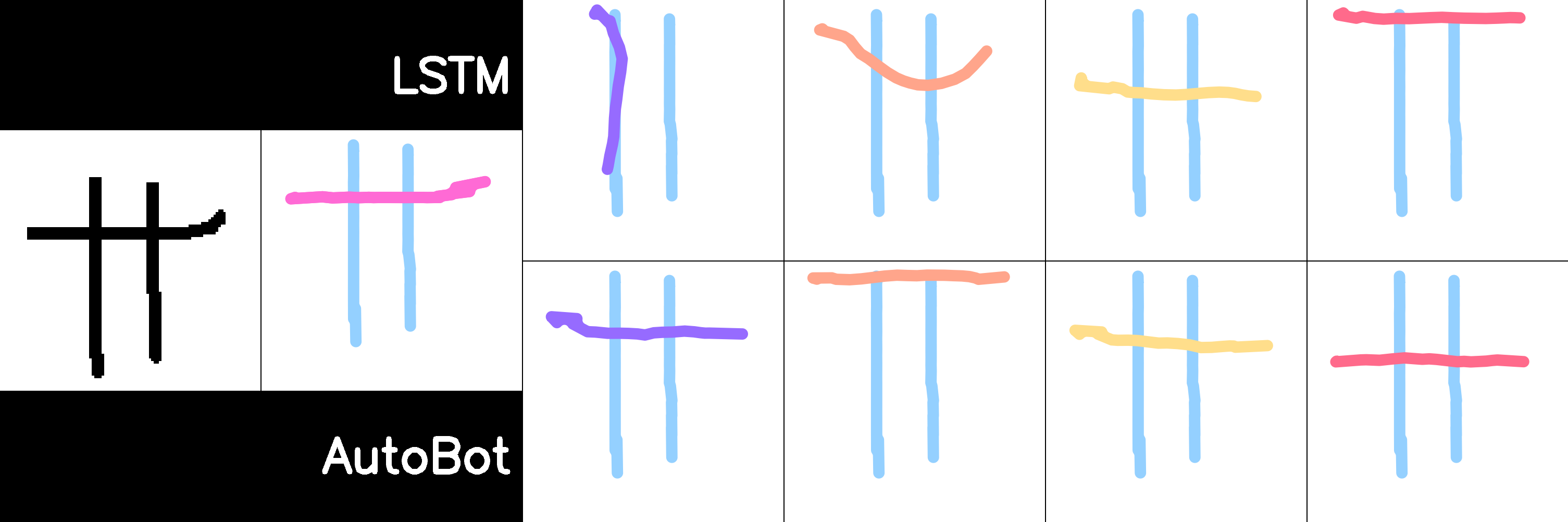} \floline
    \vspace{\omnivspace}\includegraphics[width=\omniwid\textwidth]{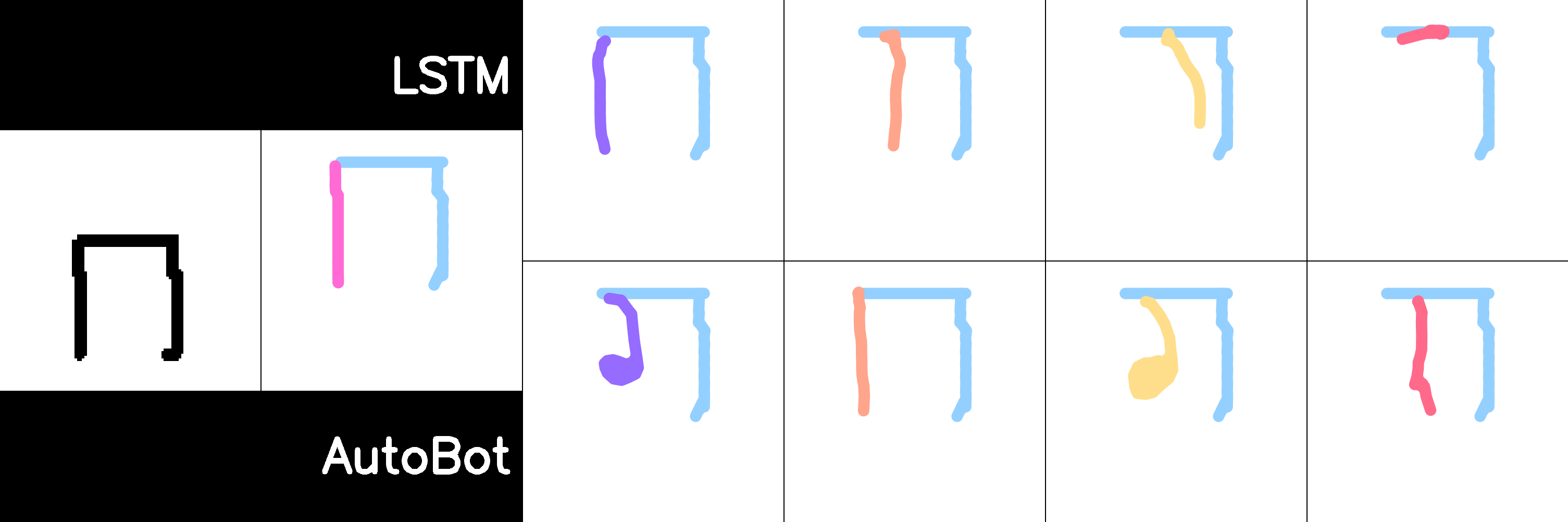} \floline
    \vspace{\omnivspace}\includegraphics[width=\omniwid\textwidth]{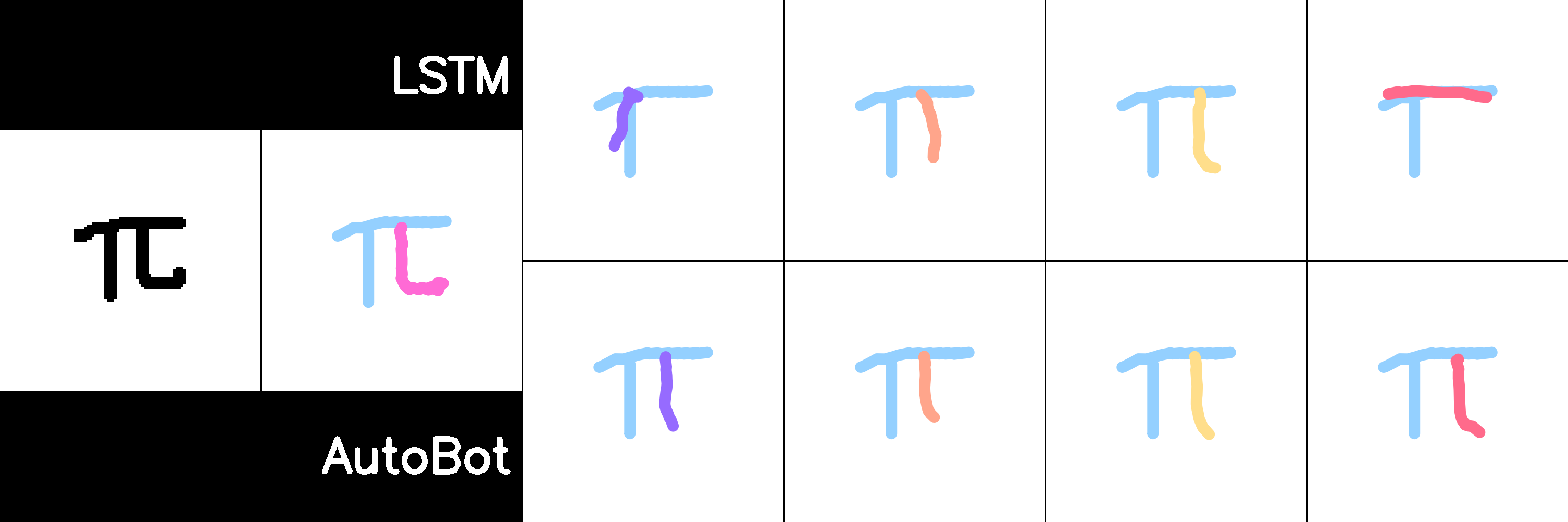} \floline
    \vspace{\omnivspace}\includegraphics[width=\omniwid\textwidth]{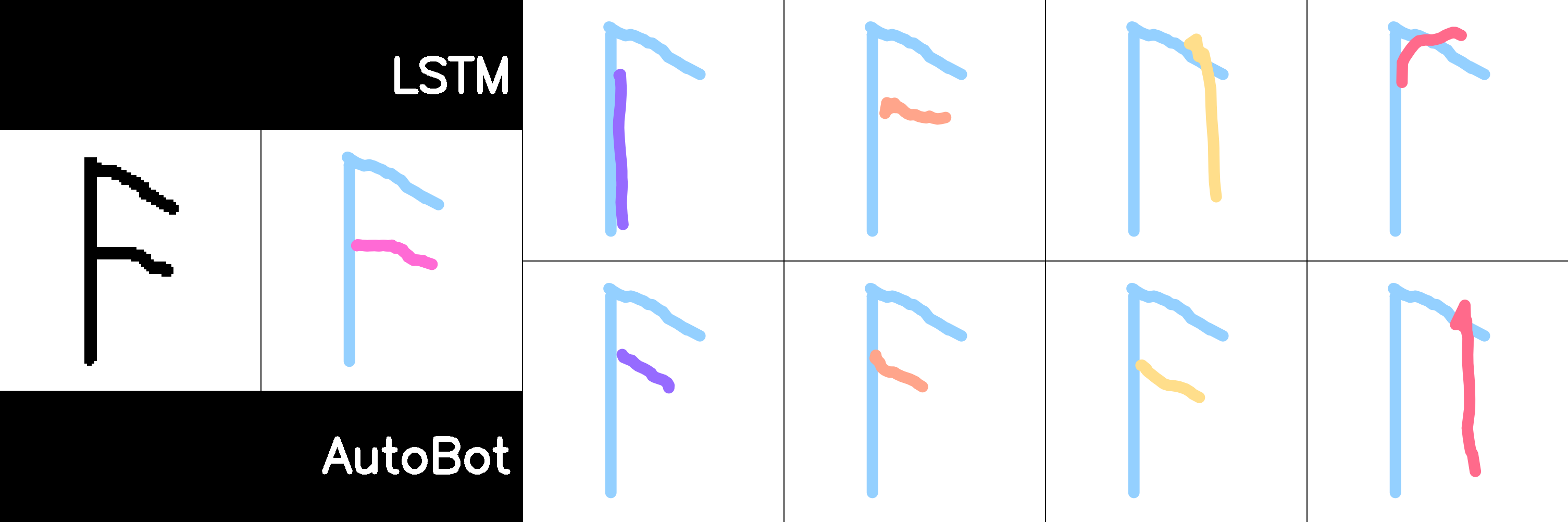} \floline
    \vspace{\omnivspace}\includegraphics[width=\omniwid\textwidth]{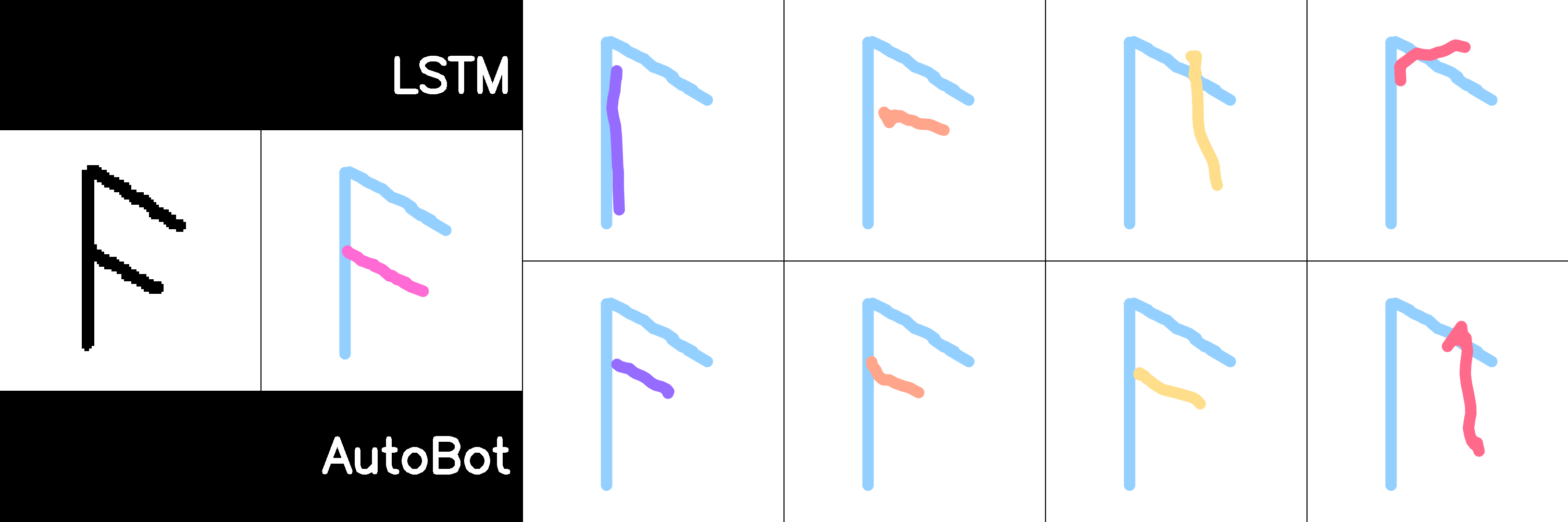}%
    \caption{\textbf{Omniglot Task 2 Additional Results}. These are some additional random characters from the Omniglot character completion task. Again, we can see that \ourmodelma produces plausible characters on the test set, where different modes capture plausible variations (e.g. F to H and H to PI), while the LSTM struggles to capture valid characters in its discrete latents.}
    \label{fig:omnipendix-v2}
\end{figure*}

\end{document}